\def\P{\mathbb{P}}
\def\R{\mathbb{R}}
\newcommand{\opnorm}[1]{\left\lVert#1\right\rVert_{\textup{op}}}
\def\b0{{0}}
\def\RR{\mathbb{R}}
\def\>{\rangle}
\newcommand{\frf}{\Phi_{\textup{RF}}}
\newcommand{\tfrf}{\tilde \Phi_{\textup{RF}}}
\newcommand{\krf}{K_{\textup{RF}}}
\newcommand{\tkrf}{\tilde K_{\textup{RF}}}
\newcommand{\Irf}{\mathcal{I}_{\textup{RF}}}
\newcommand{\fntk}{\Phi_{\textup{NTK}}}
\newcommand{\kntk}{K_{\textup{NTK}}}
\newcommand{\E}{\mathbb{E}}
\newcommand{\distas}[1]{\mathbin{\overset{#1}{\sim}}}
\newcommand\ceil[1]{\lceil#1\rceil}
\newcommand{\bigO}[1]{\mathcal{O}\left(#1\right)}
\newcommand{\norm}[1]{\left\|#1\right\|}
\newcommand{\subGnorm}[1]{\left\|#1\right\|_{\psi_2}}
\newcommand{\subEnorm}[1]{\left\|#1\right\|_{\psi_1}}
\newcommand{\abs}[1]{\left|#1\right|}
\newcommand{\evmax}[1]{\lambda_{\rm max}\left(#1\right)}
\newcommand{\evmin}[1]{\lambda_{\rm min}\left(#1\right)}
\def\Lip{\mathrm{Lip}}
\def\op{\mathrm{op}}
\def\PP{\mathbb{P}}
\def\min{\mathop{\rm min}\nolimits}
\def\max{\mathop{\rm max}\nolimits}
\def\ie{\textit{i.e. }}
\numberwithin{equation}{section}
\newtheoremstyle{myexample} % name
    {\topsep}                    % Space above
    {\topsep}                    % Space below
    {\rm }                   % Body font
    {}                           % Indent amount
    {\bf }                   % Theorem head font
    {.}                          % Punctuation after theorem head
    {.5em}                       % Space after theorem head
    {}  % Theorem head spec (can be left empty, meaning normal)
\newtheoremstyle{myremark} % name
    {\topsep}                    % Space above
    {\topsep}                    % Space below
    {\rm}                        % Body font
    {}                           % Indent amount
    {\bf}                        % Theorem head font
    {.}                          % Punctuation after theorem head
    {.5em}                       % Space after theorem head
    {}  % Theorem head spec (can be left empty, meaning normal)
\newtheorem{claim}{Claim}[section]
\newtheorem{lemma}[claim]{Lemma}
\newtheorem{assumption}{Assumption}
\newtheorem{theorem}{Theorem}
\theoremstyle{myremark}
\theoremstyle{myremark}
\theoremstyle{myexample}
\author{Simone Bombari\thanks{Institute of Science and Technology Austria (ISTA). Emails: \texttt{\{simone.bombari, marco.mondelli\}@ist.ac.at}.}\;,
\;\;Shayan Kiyani\thanks{University of Pennsylvania. Email: \texttt{shayank@seas.upenn.edu}.}\;,
\;\;Marco Mondelli\footnotemark[1]}
\title{Beyond the Universal Law of Robustness:
Sharper Laws for Random Features and Neural Tangent Kernels}
\begin{document}

\maketitle

\begin{abstract}
Machine learning models are vulnerable to adversarial perturbations, and a thought-provoking paper by Bubeck and Sellke has analyzed this phenomenon through the lens of over-parameterization: interpolating smoothly the data requires significantly more parameters than simply memorizing it. However, this ``universal'' law provides only a necessary condition for robustness, and it is unable to discriminate between models. In this paper, we address these gaps by focusing on empirical risk minimization in two prototypical settings, namely, random features and the neural tangent kernel (NTK). We prove that, for random features, the model is not robust for any degree of over-parameterization, even when the necessary condition coming from the universal law of robustness is satisfied. In contrast, for even activations, the NTK model meets the universal lower bound, and it is robust as soon as the necessary condition on over-parameterization is fulfilled. This also addresses a conjecture in prior work by Bubeck, Li and Nagaraj. Our analysis decouples the effect of the kernel of the model from an ``interaction matrix'', which describes the interaction with the test data and captures the effect of the activation. Our theoretical results are corroborated by numerical evidence on both synthetic and standard datasets (MNIST, CIFAR-10). 
\end{abstract}

\section{Introduction}

Despite the %empirical success of deep neural networks and their 
deployment of deep neural networks in a wide set of applications, these models are known to be vulnerable to adversarial perturbations \cite{Biggio_2013, https://doi.org/10.48550/arxiv.1312.6199}, which raises serious concerns about their robustness guarantees. % \cite{8578273}. 
To address these issues, a wide variety of adversarial training methods has been developed %in order to mitigate the effects of adversarial perturbations 
\cite{goodfellowrob, kurakin2017adversarial, madry2018towards, raghunathan2018certified, pmlr-v80-wong18a}. In a parallel effort, a recent line of theoretical work has focused on providing a principled understanding to the phenomenon of adversarial robustness, see e.g. \cite{dohmatob,zhu2022robustness,pmlr-v125-javanmard20a,donhauser2021interpolation} and the review in Section \ref{sec:rel}.
Specifically, the recent paper by \cite{bubeck2021a} has highlighted the role of over-parameterization as a \emph{necessary} condition to achieve robustness: while interpolation requires the number of parameters $p$ of the model to be at least linear in the number of data samples $N$, \emph{smooth} interpolation (namely, robustness) requires $p > dN$, where $d$ is the input dimension. We note that the law of robustness put forward by \cite{bubeck2021a} is ``universal'' in the sense that it holds for \emph{any} parametric model. Furthermore, an earlier conjecture by \cite{bubeck2021law} posits that there exists a model of two-layer neural network such that robustness is successfully achieved with this minimal number of parameters; i.e., the condition $p > dN$ is both \emph{necessary} and \emph{sufficient} for robustness. This state of affairs leads to the following natural question:
\vspace{0.5em}
\begin{center}
    \textit{When does over-parameterization become a sufficient condition to achieve adversarial robustness?}
\end{center}
\vspace{0.5em}
In this paper, we consider a \emph{sensitivity}\footnote{The sensitivity can be interpreted as the non-robustness.} measure proportional to $\norm{\nabla_z f(z, \theta)}_2$, where $z$ is the test data point and $f$ is the model with parameters $\theta$ whose robustness is investigated. This quantity is closely related to notions of sensitivity appearing in related works \cite{dohmatob,dohmatob2022non,bubeck2021a}, and it leads to a more stringent requirement on the robustness than the perturbation stability considered by \cite{zhu2022robustness}, see also the discussion in Section \ref{sec:prel}. For such a sensitivity measure, we show that the answer to the question above depends on the specific model at hand. Specifically, we will focus on the solution yielded by empirical risk minimization (ERM) in two prototypical settings widely analyzed in the theoretical literature: \emph{(i)} Random Features (RF) \cite{rahimi2007random}, and the \emph{(ii)} Neural Tangent Kernel (NTK) \cite{JacotEtc2018}. 

\paragraph{Main contribution.}\hspace{-1em} Our key results are summarized below.

\begin{itemize}
    \item For random features, we show that ERM leads to a model which is \emph{not robust} for \emph{any} degree of over-parameterization. Specifically, we tackle a regime in which the universal law of robustness by \cite{bubeck2021a} trivializes, and we provide a more refined bound.
%\vspace{-.25em}

    \item  For NTK with an even activation function, we give an upper bound on the ERM solution that \emph{matches} the lower bound %on the sensitivity 
    by \cite{bubeck2021a}. % in a high-probability region. 
    As the NTK model approaches the behavior of gradient descent for a suitable initialization \cite{chizat2019lazy}, this also shows that a class of two-layer neural networks has sensitivity of order $\sqrt{Nd/p}$. This addresses Conjecture 2 of \cite{bubeck2021law}, albeit for a slightly different notion of sensitivity (see the comparison in Section \ref{sec:prel}). %\shayan{if we know that it hasn't proved yet(which i think is the case) maybe it would be good to emphesize by adding sth like to the best of our knowledge this the first answer ...}\marco{good point, we have a different sensitivity measure though, so I would be cautious. I have edited this point in the intro and the abstract accordingly.}
\end{itemize}
%\vspace{-.25em}

At the technical level, our analysis involves the study of the spectra of RF and NTK random matrices, and it provides the following insights.

%\vspace{-.25em}
\begin{itemize}
    \item We introduce in \eqref{eq:interactionmatrix} a new quantity, dubbed the \emph{interaction matrix}. Studying this matrix in isolation is a key step in all our results, as its different norms are intimately related to the sensitivity of both the RF and the NTK model. To the best of our knowledge, this is the first time that attention is raised over such an object, which we deem relevant to the theoretical characterization of adversarial robustness.
%\vspace{-.25em}

    \item Our analytical characterization captures the role of the activation function: it turns out that a specific symmetry (e.g., being even) boosts the adversarial robustness of the models.
\end{itemize}
%\vspace{-.25em}

Finally, we experimentally show that the robustness behavior of both synthetic and standard datasets (MNIST, CIFAR-10) agrees well with our theoretical results.

\section{Related work}\label{sec:rel}

% Out of the robust optimization, empirically \cite{novak2018sensitivity} shows correlation between generalization and robustness.
%  \cite{sanyal2021how}, discusses how benign overfitting might be deteriorating for robustness.

% From the theoretical point of view, a high-dimensional approach to robustness is used in \cite{gilmerspheres}, quadratic network.

% In another recent work \cite{wu2021do}, an extensive study on the robustness of wide neural networks with respect to norm-bounded perturbations is provided. By defining and analyzing a new metric, called perturbation stability, it is shown that while the (standard) generalization error is improved on wider models, the perturbation stability often worsens, leading to a potential decrease in the overall model robustness. Same notion of robustness recently studied by \cite{zhu2022robustness}. However, this is a 

From the vast literature studying adversarial examples, we succintly review related works focusing on linear models, random features and NTK models. 

%\vspace{-.75em}

\paragraph{Linear regression.} In light of the universal law of robustness, in the interpolation regime, linear regression cannot achieve adversarial robustness (as $p = d$). For linear models, \cite{donhauser2021interpolation,pmlr-v125-javanmard20a} have focused on adversarial training (as opposed to ERM, which is considered in this work), showing that over-parameterization hurts the robust generalization error. We also point out that, in some settings, even the standard generalization error is maximized when the model is under-parametrized \cite{hastie2022surprises}. Precise asymptotics on the robust error in the classification setting are provided in \cite{javanmard2022precise,taheri2020asymptotic}. %It has to be noted 
%that  considers interpolation without noise, and \cite{pmlr-v125-javanmard20a} characterizes the no-interpolation regime. They show that robustness might be at odds with overparameterization, when adversarial training is performed. However, in the case of isotropic features, it has to be noted 
  A different approach is pursued by \cite{tsipras2018robustness}, who study a linear model that exhibits a trade-off between generalization and robustness. 

  %\vspace{-.75em}

\paragraph{Random features (RF).} The RF model introduced by \cite{rahimi2007random} can be regarded as a two-layer neural network with random first layer weights, and it solves the lack of freedom in the number of trainable parameters, which is constrained to be equal to the input dimension for linear regression. The popularity of random features derives from their analytical tractability, combined with the fact that the model still reproduces behaviors typical of deep learning, such as the double-descent curve \cite{mei2022generalization}. Our analysis crucially relies on the spectral properties of the kernel induced by this model. Such properties have been studied by \cite{rfspectrum}, and tight bounds on the smallest eigenvalue of the feature kernel have been provided by \cite{tightbounds} for ReLU activations. Theoretical results on the adversarial robustness of random features have been shown for both adversarial training \cite{hamedadversarial} and the ERM solution \cite{dohmatob, dohmatob2022non}. We will discuss the comparison with \cite{dohmatob, dohmatob2022non} in the next paragraph.
 
%\vspace{-.75em}

\paragraph{Neural tangent kernel (NTK).} The NTK can be regarded as the kernel obtained by linearizing the neural network around the initialization \cite{JacotEtc2018}. A popular line of work has analyzed its spectrum \cite{fan2020spectra,adlam2020neural,wang2021deformed} and bounded its smallest eigenvalue \cite{theoreticalinsghts,tightbounds,montanari2022interpolation,bombari2022memorization}. The behavior of the NTK is closely related to memorization \cite{montanari2022interpolation}, optimization \cite{AllenZhuEtal2018,DuEtal2019} and generalization \cite{arora2019fine} properties of deep neural networks. 
The NTK has also been recently exploited to identify non-robust features learned with standard training \cite{tsilivis2022what} and gain insight on adversarial training \cite{loo2022evolution}.
More closely related to our work, the robustness of the NTK model is studied in \cite{zhu2022robustness,dohmatob2022non,dohmatob}. Specifically, \cite{zhu2022robustness} analyze the interplay between width, depth and initialization of the network, thus improving upon results by \cite{huang2021exploring,wu2021wider}. However, the perturbation stability considered by \cite{zhu2022robustness} captures an average robustness, rather than an adversarial one (see the detailed comparison in Section \ref{sec:prel}), which makes these results not immediately comparable to ours. In contrast, our notion of sensitivity is close to that analyzed in \cite{dohmatob2022non,dohmatob}, which establish trade-offs between robustness and memorization for both the RF and the NTK model. However, \cite{dohmatob2022non} study a teacher-student model with quadratic teacher and infinite data, which also does not allow for a direct comparison. Finally, \cite{dohmatob} studies the ERM solution and focuses on \emph{(i)} infinite-width networks, and \emph{(ii)} finite-width networks where the number of neurons scales linearly with the input dimension. We remark that this last setup does not lead to lower bounds on the sensitivity tighter than those by \cite{bubeck2021a}, and our results on the NTK model provide the first upper bounds on the sensitivity (which also match the lower bounds by \cite{bubeck2021a}).

%In the two-layer setting, the smallest NTK eigenvalue is lower bounded in \cite{theoreticalinsghts,montanari2022interpolation,wang2021deformed}, and concentration bounds can also be obtained from \cite{adlam2020neural,SongYang2020,hu2020surprising}. In particular, \cite{montanari2022interpolation, } establishes a tight bound for two-layer networks with roughly as many parameters as training samples, and hence, under minimum over-parameterization. 
%and it has provided a powerful tool to understand % is the kernel obhas gained a lot of attention as a powerful tool to theoretically investigate the behavior of deep neural networks \cite{ghorbani2021linearized}, as a linear approximation of these. Despite the NTK cannot fully explain the phenomena occurring in deep neural networks \cite{ghorbani2020neural, limitationNTK}, it can still provide a strong theoretical perspective on the current advances and limitations in deep learning. 

\section{Preliminaries}\label{sec:prel}

\paragraph{Notation.} %\label{sec:notation}
Given a vector $v$, we indicate with $\norm{v}_2$ its Euclidean norm.  Given $v \in \R^{d_v}$ and $u \in \R^{d_u}$, we denote by $v \otimes u \in \R^{d_v d_u}$ their Kronecker product. Given a matrix $A$, let $\norm{A}_{\op}$ be its operator norm, $\norm{A}_{F}$ its Frobenius norm and $\evmin{A}$ and $\evmax{A}$ its smallest and largest eigenvalues respectively. We denote by $\norm{\varphi}_{\Lip}$ the Lipschitz constant of the function $\varphi$.  All the complexity notations $\Omega(\cdot)$, $\mathcal{O}(\cdot)$, $o(\cdot)$ and $\Theta(\cdot)$ are understood for sufficiently large data size $N$, input dimension $d$, number of neurons $k$, and number of parameters $p$. %, which we define as the scalings of the problem. 
We indicate with $C,c>0$ numerical constants, independent of $N, d, k, p$. % Their value may change from line to line.
%See Appendix \ref{app:notation} for a more extended presentation of notation and technical remarks., that will be useful in the proofs. 

 %\vspace{-.75em}

\paragraph{Generalized linear regression.}
Let $(X, Y)$ be a labelled training dataset, where $X=[x_1, \ldots, x_N]^\top \in \R^{N \times d}$ contains the training samples on its rows %is the matrix containing on each row the input training samples $x_i \in \R^d$, with $1 \leq i \leq N$, 
and $Y=(y_1, \ldots, y_N) \in \R^N$ contains the corresponding labels. %labels in its $i$-th entry the label $y_i \in \R$ corresponding to $x_i$. 
Let $\Phi : \R^d \to \R^p$ be a generic \emph{feature map}, from the input space to a feature space of dimension $p$. We consider the following \emph{generalized linear model} %a model as the following
 %\vspace{-5pt}
\begin{equation}\label{eq:glm}
    f(x, \theta) = \Phi(x)^\top \theta,
\end{equation}
where $\Phi(x) \in \R^p$ is the feature vector associated with the input sample $x$, and $\theta \in \R^p$ is %represents 
the set of the trainable parameters of the model. Our supervised learning setting involves solving the %following 
optimization problem
 %\vspace{-5pt}
 \begin{equation}\label{eq:optloss}
    \min_\theta \norm{\Phi(X)^\top \theta - Y}_2^2,
\end{equation}
where $\Phi(X) \in \R^{N \times p}$ is the feature matrix, containing $\Phi (x_i)$ in its $i$-th row.
From now on, we use the shorthands $\Phi := \Phi(X)$ and $K := \Phi \Phi^\top \in \R^{N \times N}$, where $K$ denotes the kernel associated with the feature map. If we assume $K$ to be invertible (i.e., the model is able to fit any set of labels $Y$), it is well known that gradient descent converges to the interpolator which is the closest in $\ell_2$ norm to the initialization, see e.g. \cite{smallnorm}. In formulas, 
%In fact, in under-determined least squares problems gradient descent converges to the global minimum $\theta^*$ the minimizes the norm $\norm{\theta^* - \theta_0}_2$ (see, for example, \cite{smallnorm}).
%by running gradient descent on the optimization problem in \eqref{eq:optloss}, we obtain %would result in a solution 
%$\theta^*$ s.t. %with the following expression
 %\vspace{-5pt}
\begin{equation}\label{eq:thetastar}
    \theta^* = \theta_0 + \Phi^\top K^{-1} (Y - f(X, \theta_0)),
\end{equation}
where $\theta^*$ is the gradient descent solution, $\theta_0$ is the initialization and % value of the parameters at initialization, and 
$f(X, \theta_0) = \Phi(X)^\top \theta_0$ is the output of the model \eqref{eq:glm} at initialization.  %\simone{They do not explicitely give \eqref{eq:thetastar}, they talk ab the min-norm solution in under-det problems. Didn't find a better thing to cite at the moment.}

%\paragraph{Random features and NTK.}

%\simone{not sure if putting this in the related work or here. Since this is a preliminary part, maybe it's more about showing some math intro about these guys...}

 %\vspace{-.75em}

\paragraph{Sensitivity.}
We measure the robustness of a model $f(z, \theta)$ as a function of the test sample $z$. In particular, we are interested in %studying 
a quantity that expresses how \emph{sensitive} is the output when small perturbations are applied to the input $z$. More specifically, we could imagine an adversarial example \enquote{built around} $z$ as 
 %\vspace{-5pt}
\begin{equation}\label{eq:lessless}
    z_{\textup{adv}} = z + \Delta_{\textup{adv}}, \quad\mbox{with }\norm{\Delta_{\textup{adv}}}_2 \leq \delta \norm{z}_2.
\end{equation}
Here, $\Delta_{\textup{adv}}$ is a \emph{small} adversarial perturbation, in the sense that its $\ell_2$ norm is only a fraction $\delta$ of the $\ell_2$ norm of $z$. Crucially, we expect $\delta$ not to depend on the scalings of the problem. For example, if $z$ represents an image with $d$ pixels, this can correspond to perturbing every pixel by a given amount. In this case, the robustness depends on the amount of perturbation per pixel, and not on the number of pixels $d$. This in turn implies that $\delta$ is a numerical constant independent of $d$.

%We opt for measuring the robustness of a model $f(z, \theta)$ as a function of the test sample $z$ over which we are evaluating the model itself. In particular, we need a quantity that expresses how \emph{susceptible} is the output when small perturbations are applied to the input $z$.

%In practice, we could imagine an adversarial example \enquote{built around} $z$ as something with the following form
%\begin{equation}
%    z_{\textup{adv}} = z + \Delta_{\textup{adv}},
%\end{equation}
%where $\Delta_{\textup{adv}}$ is a \emph{small} adversarial perturbation that keeps $z$ and $z_{\textup{adv}}$ indistinguishable to \simone{human eye}. Intuitively, we can model this constraint as follows\footnote{In this work, we focus on $\ell_2$ perturbations.}
%\begin{equation}\label{eq:lessless}
%    \norm{\Delta_{\textup{adv}}}_2 \leq \delta \norm{z}_2,
%\end{equation}
%where $\delta$ depends on the problem of interest. 
%For example, if $z$ represents an image with $d$ pixels, we would expect to be able notice a change if we modify each one of them by $10 \%$. However, with $1\%$ perturbations, the human eye would struggle to see any difference. Informally, this simple example would see \eqref{eq:lessless} rewritten as something like $\norm{\Delta_{\textup{adv}}}_2 \leq 0.01 \norm{z}_2$. Fixing the problem, we however expect $\delta$ to be a (small) pure number, that doesn't depend on the scalings of the problem (like, for example, the input dimension $d$).

We are interested in the response of the model to $z_{\textup{adv}}$, and how this relates to the natural scaling of the output, which we assume to be $\Theta(1)$.\footnote{This is a natural choice, as e.g. the output label is not expected to grow with the input dimension. However, any scaling is in principle allowed and would not change our final results. \label{foot:outscaling}} Up to first order, we can write
 %\vspace{-5pt}
\begin{equation}\label{eq:firstord}
\begin{aligned}
    \left| f(z_{\textup{adv}}, \theta) - f(z, \theta) \right| &\simeq \left| \nabla_z f(z, \theta)^\top  \Delta_{\textup{adv}} \right|\\
    & \leq \norm{\Delta_{\textup{adv}}} \norm{\nabla_z f(z, \theta)}_2 \\
    & \leq\delta \norm{z}_2 \norm{\nabla_z f(z, \theta)}_2.
\end{aligned}
\end{equation}
The first inequality is saturated if the attacker builds an adversarial perturbation that perfectly aligns with $\nabla_z f(z, \theta)$, and the second inequality follows from \eqref{eq:lessless}. Hence, we %We assume both these conditions to be fulfilled, and therefore are able to 
define the \emph{sensitivity} of the model evaluated in $z$ as
 %\vspace{-5pt}
\begin{equation}\label{eq:sensitivity}
\mathcal S_{f_{\theta}}(z) := \norm{z}_2 \norm{\nabla_z f(z, \theta)}_2.
\end{equation}
Recall that both $\delta$ and the output of the model are constants, independent of the scaling of the problem. Hence, a robust model needs to respect $\mathcal S_{f_{\theta}}(z) = \bigO{1}$ for its test samples $z$. In contrast, if $\mathcal S_{f_{\theta}}(z) \gg 1$ (e.g., the sensitivity grows with the number of pixels of the image), we expect the model to be adversarially vulnerable when evaluated in $z$. In a nutshell, the goal of this paper is to provide bounds on $\mathcal S_{f_{\theta}}(z)$ for random features and NTK models, thus establishing their robustness.

%Since $\delta$ is a constant independent of t, the sensitivity and the maximal response to the model share the same behavior with respect to the scalings of the problem, as the input dimension $d$, the number of parameters $p$ or input samples $N$. Thus, given the scaling of the output, 

 %\vspace{-.75em}
 
\paragraph{Related notions of sensitivity.} Measures of robustness similar to %the sensitivity in 
\eqref{eq:sensitivity} have been used  %widely 
in the related literature. In particular, \cite{dohmatob2022non} study $\E_{z \sim P_X}[\mathcal S^2_{f_{\theta}}(z)]$ in the context of a teacher-student setting with a quadratic target. \cite{dohmatob} considers the model obtained from solving the generalized regression problem \eqref{eq:optloss} and characterizes a sensitivity measure given by the Sobolev semi-norm. This quantity is again similar to $\E_{z \sim P_X}[\mathcal S^2_{f_{\theta}}(z)]$, the only difference being that the gradient is projected onto the sphere (namely, onto the data manifold) before taking the norm. \cite{zhu2022robustness} study a notion of perturbation stability given by $\E_{z, \hat z, \theta}\left| \nabla_z f(z, \theta)^\top (z-\hat z) \right|$, where $z$ is sampled from the data distribution and $\hat z$ is uniform in a ball centered at $z$ with radius $\delta$. This would be similar to averaging over $\Delta_{\textup{adv}}$ in \eqref{eq:firstord}, instead of choosing it adversarially. Hence, \cite{zhu2022robustness} capture a weaker notion of \emph{average robustness}, as opposed to this work which studies the \emph{adversarial robustness}. 
Finally, \cite{bubeck2021law,bubeck2021a} consider $\norm{f(z, \theta)}_{\text{Lip}}$, where the Lipschitz constant is with respect to $z$, which is equivalent to $\max_z\mathcal S_{f_{\theta}}(z) / \norm{z}_2$. While this is a more stringent condition than \eqref{eq:sensitivity}, we remark that the adversarial robustness corresponds to choosing adversarially $\Delta_{\textup{adv}}$ (and not $z$), as done in \eqref{eq:firstord}. We also note that our main results will hold with high probability over the distribution of $z$.

To conclude, we remark that, differently from previous work \cite{bubeck2021a, dohmatob}, we opt for a \emph{scale invariant} definition of sensitivity. This derives from the term $\norm{z}_2$ appearing in the RHS of \eqref{eq:sensitivity}, and it allows us to apply the definition to data with arbitrary scaling. In particular, if we set $\norm{z}_2=1$, then \eqref{eq:lessless} would yield $\norm{\Delta_{\rm adv}}_2 \leq \delta$. Both \cite{bubeck2021a} and \cite{dohmatob} consider this scaling of the data: the former paper uses $\bigO{1}$ Lipschitz constant as an indication of a robust model, and the latter considers a term similar to $\norm{\nabla_z f(z, \theta)}_2$.

% If $||z||_2=1$, then it is also natural to rescale the variance of the weights in both the RF and the NTK models. In fact, by taking the entries of the weights to be of unit variance (rather than $1/d$), the size of the features does not vanish and remains of constant order. Now, if we take $||z||_2=1$ and also rescale the weights, then all our results still hold. In fact, the factor $\sqrt{d}$ lost from the term $||z||_2$ in Eq. (6) is counterbalanced by $||\nabla_z f(z, \theta)||_2$ which scales linearly with the standard deviation of the weights. More specifically, for the RF model, this is readily verified upon inspecting Eq. (162). For the NTK model, we gain a factor $d$ from the rescaled bound on the smallest eigenvalue (see Eq. (247)), and we lose the remaining $\sqrt{d}$ from the estimate in Lemma D.3, as both Eq. (259) and Eq. (260) decrease by this factor. We will add this discussion concerning data on the unit sphere in the revision.

\paragraph{Robustness of generalized linear regression.}
For %the case of 
generalized linear models with feature map $\Phi$, plugging \eqref{eq:glm} %we can further develop the sensitivity expression 
in \eqref{eq:sensitivity} gives %. Plugging in  we easily get
\begin{equation}
    \mathcal S_{f_{\theta}}(z) = \norm{z}_2 \norm{\nabla_z \Phi(z)^\top \theta}_2.
\end{equation}
This expression can also provide the sensitivity of the model trained with gradient descent. Let us assume that $f(x, \theta_0) = 0$ for all $x \in \R^d$ and, as before, that the kernel of the feature map is invertible. Then, by plugging the value of $\theta^*$ from \eqref{eq:thetastar} in the previous equation, %and assuming , 
we get
\begin{equation}\label{eq:sensforlin}
    \mathcal S_{f_{\theta^*}}(z) = \norm{z}_2 \norm{\nabla_z \Phi(z)^\top \Phi^\top K^{-1} Y}_2.
\end{equation}

%To make some clarity, $\nabla_z \Phi(z)^\top$ is a $d \times p$ matrix, $\Phi^\top$ is $p \times N$, and $K$ is $N \times N$.

\section{Main Result for Random Features}\label{sec:rf}

In this section, we provide our law of robustness for the \emph{random features} model. In particular, we will show that, for a wide class of activations, the sensitivity of random features is $\gg 1$ and, therefore, the \emph{model is not robust}.

We assume the data labels $Y \in \R^N$ to be given by $Y = g(X) + \epsilon$. Here, $g: \R^d \to \R$ is the ground-truth function applied row-wise to $X\in \mathbb R^{N\times d}$ and $\epsilon \in \R^N$ is a noise vector independent of $X$, having independent sub-Gaussian entries with zero mean and variance $\E[\epsilon_{i}^2] = \varepsilon^2$ for all $i$. The \emph{random features (RF) model} takes the form
%\vspace{-4pt}
\begin{equation}
    f_{\textup{RF}}(x, \theta) = \frf(x)^\top \theta, \qquad \frf(x)=\phi(V x),
\end{equation}
%where the feature map $\frf$ is defined as follows
%\begin{equation}
%    f(x, \theta) = \phi(V x)^\top \theta,
%\end{equation}
where $V$ is a $k \times d$ matrix, such that $V_{i,j} \distas{}_{\rm i.i.d.}\mathcal{N}(0, 1 / d)$, and $\phi$ is an activation function applied component-wise. The number of parameters of this model is $k$, as $V$ is fixed and $\theta\in \mathbb R^k$ contains the trainable parameters. We initialize the model at $\theta_0 = 0$ so that $f_{\textup{RF}}(x, \theta_0) = 0$ for all $x$. After training, we can write the sensitivity using \eqref{eq:sensforlin}, which gives
%\vspace{-4pt}
\begin{equation}\label{eq:robustnessrf}
\begin{aligned}
    \mathcal S_{\textup{RF}}(z) %&= \norm{z}_2 \norm{\nabla_z \frf(z)^\top \frf^+ Y}_2 \\
    &= \norm{z}_2 \norm{\nabla_z \frf(z)^\top \frf^\top \krf^{-1} Y}_2,
\end{aligned}
\end{equation}
where we use the shorthands $\frf := \frf(X) = \phi(XV^\top)\in \R^{N \times k}$ for the RF map and $\krf := \frf \frf^\top \in \R^{N \times N}$ for the RF kernel. We will show that the kernel is in fact invertible in the argument of our main result, Theorem \ref{thm:rfnorobust}. Throughout this section, we make the following assumptions.

\begin{assumption}[Data distribution]\label{ass:datadist}
        The input data $(x_1,\ldots,x_N)$ are $N$ i.i.d.\ samples from the distribution $P_X$, which satisfies the following properties:
	\begin{enumerate}
		\item $\norm{x}_2 = \sqrt{d}$, i.e., the data are distributed on the sphere of radius $\sqrt d$.
		\item $\E[x] = 0$, i.e., the data are centered.
    	\item $P_X$ satisfies the \emph{Lipschitz concentration property}. Namely, there exists an absolute constant $c>0$ such that, for every Lipschitz continuous function $\varphi: \RR^d \to \RR$, we have%,$\E |\varphi(X)| < + \infty$, and
        for all $t>0$,
    %\vspace{-4pt}
	\begin{equation}
    	    \P\left(\abs{\varphi(x)- \E_X [\varphi(x)]}>t\right) \leq 2e^{-ct^2 / \norm{\varphi}_{\Lip}^2}.
    	\end{equation}
	\end{enumerate}
\end{assumption}

%\vspace{-.75em}

The first two assumptions can be achieved by simply pre-processing the raw data. For the third assumption, we remark that the family of Lipschitz concentrated distributions covers a number of important cases, e.g., standard Gaussian \cite{vershynin2018high}, uniform on the sphere and on the unit (binary or continuous) hypercube \cite{vershynin2018high}, or data obtained via a Generative Adversarial Network (GAN) \cite{seddik2020random}. The third requirement is common in the related literature \cite{tightbounds, bombari2022memorization} and it is equivalent to the isoperimetry condition required by \cite{bubeck2021a}. We remark that the three conditions of Assumption \ref{ass:datadist} are often replaced by a stronger requirement %than our joint Assumptions
(e.g., data uniform on the sphere), see \cite{montanari2022interpolation, dohmatob}.

Finally, we note that our choice of data scaling is different from the related work \cite{bubeck2021a, dohmatob}, as they both consider $\norm{z}_2=1$. We choose $\norm{z}_2=\sqrt{d}$, as we believe it to be more natural in practice. Consider, for example, the case in which $z$ is an image and $d$ is the number of pixels. If the pixel values are renormalized in a fixed range, then $\norm{z}_2$ scales as $\sqrt{d}$. This choice shouldn't worry the reader, as our definition of sensitivity \eqref{eq:sensitivity} is scale invariant, facilitating the comparison between previous results and ours.

% If $||z||_2=1$, then it is also natural to rescale the variance of the weights in both the RF and the NTK models. In fact, by taking the entries of the weights to be of unit variance (rather than $1/d$), the size of the features does not vanish and remains of constant order. Now, if we take $||z||_2=1$ and also rescale the weights, then all our results still hold. In fact, the factor $\sqrt{d}$ lost from the term $||z||_2$ in Eq. (6) is counterbalanced by $||\nabla_z f(z, \theta)||_2$ which scales linearly with the standard deviation of the weights. More specifically, for the RF model, this is readily verified upon inspecting Eq. (162). For the NTK model, we gain a factor $d$ from the rescaled bound on the smallest eigenvalue (see Eq. (247)), and we lose the remaining $\sqrt{d}$ from the estimate in Lemma D.3, as both Eq. (259) and Eq. (260) decrease by this factor. We will add this discussion concerning data on the unit sphere in the revision.

% Finally, let us point out that we choose the scaling $||z||_2=\sqrt{d}$, as we believe it to be more natural in practice. Consider, as an example, the case in which $z$ is an image and $d$ is the number of pixels. If the pixel values are renormalized in a fixed range, then $||z||_2$ scales as $\sqrt{d}$.

\begin{assumption}[Activation function]\label{ass:activationfunc}
    The activation function $\phi$ satisfies the following properties:
    \begin{enumerate}
    \item $\phi$ is a non-linear %(and therefore also non-constant) 
    $L$-Lipschitz function.
    \item Its first and second order derivatives $\phi'$ and $\phi''$ are respectively $L_1$ and $L_2$-Lipschitz functions.
    \end{enumerate}
\end{assumption}
These requirements are satisfied by common activations, e.g. smoothed ReLU, sigmoid, or $\tanh$.

\begin{assumption}[Over-parameterization]\label{ass:overparam}
 \begin{align}
        & N \log^3 N = o(k),\label{eq:overparam}\\
        & d \log^4 d = o(k). %d = o(k / \log^4 k).
\end{align}
\end{assumption}

%\vspace{-.75em}

In words, the number of neurons $k$ scales faster than the input dimension $d$ and the number of data points $N$. Such an over-parameterized setting allows to \emph{(i)} perfectly interpolate the data (this follows directly from the invertibility of the kernel, which can be readily deduced from the proof of Theorem \ref{thm:rfnorobust}), and \emph{(ii)} achieve minimum test error, see \cite{mei2022generalization}. This is also in line with the current trend of heavily over-parametrized deep-learning models \cite{Nakkiran2020Deep}.

\begin{figure}[!t]
  \begin{center}
    \includegraphics[width=0.99\textwidth]{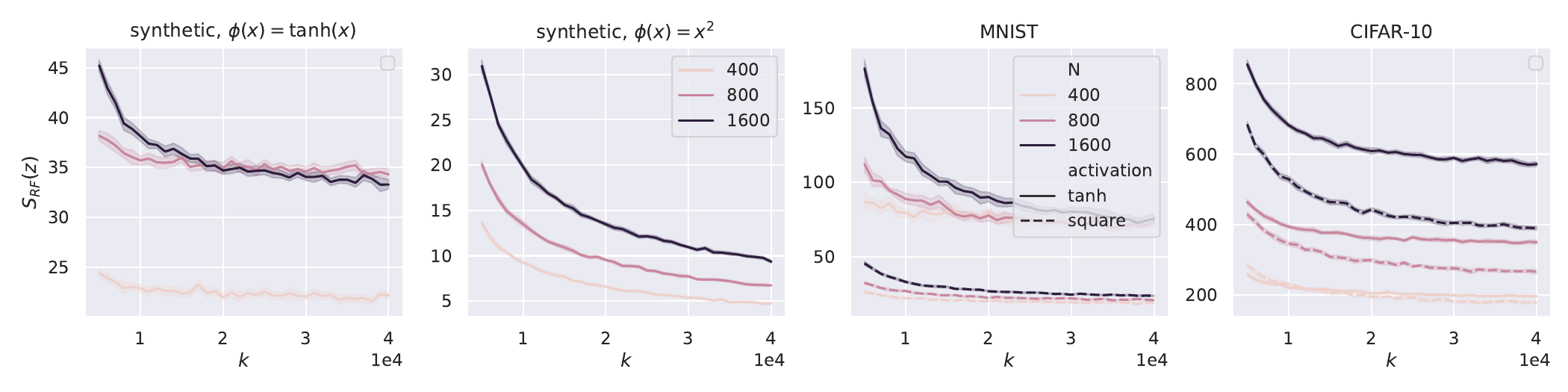}
  \end{center}
  %\vspace{-1em}
  \caption{Sensitivity as a function of the number of parameters $k$, for an RF model with synthetic data sampled from a Gaussian distribution with input dimension $d=1000$ (first and second plot), and with inputs taken from two classes of the MNIST and CIFAR-10 datasets (third and fourth plot respectively). Different curves correspond to different values of the sample size $N\in \{400, 800, 1600\}$ and to different activations ($\phi(x) = \tanh(x)$ or $\phi(x) = x^2$). We plot the average over $10$ independent trials and the confidence interval at $1$ standard deviation. The values of the sensitivity are significantly larger for $\phi(x) = \tanh(x)$ than for $\phi(x) = x^2$.}
  \label{fig:rf_old_act}
  %\vspace{-1em}
\end{figure}

\begin{assumption}[High-dimensional data]\label{ass:dlarge}
 \begin{align}
       & N \log^3 N = o\left( d^{3/2}\right),\label{eq:dlarge}\\
      & \log^8 k = o(d).
    \end{align}
\end{assumption}

%\vspace{-.75em}

While the second condition is rather mild, the first one lower bounds the input dimension $d$ in a way that appears to be crucial to prove our main Theorem \ref{thm:rfnorobust}. Understanding whether \eqref{eq:dlarge} %is necessary for the result to hold or if it 
can be relaxed is left as an open question. % whether the first assumption  it's a necessary Assumption or just a technical simplification of the proofs. 

%Notice that it's anyways respected by several \enquote{standard} datasets, such as MNIST ($N=60000$, $d=784$), CIFAR-10 ($N=5\cdot 10^4$, $d=3\cdot 32^2$), and ImageNet ($N=1.4\cdot 10^7$, $d=2\cdot 10^5$).

At this point, we are ready to state our main result for random features.

\begin{theorem}\label{thm:rfnorobust}
Let Assumptions \ref{ass:datadist}, \ref{ass:activationfunc}, \ref{ass:overparam} and \ref{ass:dlarge} hold, and let $Y = g(X) + \epsilon$, such that the entries of $\epsilon$ are sub-Gaussian with zero mean and variance $\E[\epsilon_{i}^2] = \varepsilon^2$ for all $i$, independent between each other and from everything else. Let $z \sim P_X$ be a test sample independent from the training set $(X, Y)$, and let the derivative of the activation function satisfy $\E_{\rho\sim \mathcal N (0, 1)} [\phi' (\rho)] \neq 0$. Define $\mathcal S_{\textup{RF}}(z)$ as in \eqref{eq:robustnessrf}. Then,
%\vspace{-4pt}
\begin{equation}\label{eq:rfnr}
    \mathcal S_{\textup{RF}}(z) = \Omega \left( \sqrt[6]{N} \right) \gg 1,
\end{equation}
with probability at least $1 - \exp (- c \log^2 N)$ over $X$, $z$, $V$ and $\epsilon$.
\end{theorem}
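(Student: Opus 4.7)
The plan is to express the sensitivity through the interaction matrix $\Psi := \nabla_z \frf(z)^\top \frf^\top = V^\top D\,\frf^\top \in \mathbb R^{d\times N}$, with $D = \diag(\phi'(Vz))$, and then to isolate a linear-signal component of $\Psi$ that gets excited by the label noise $\epsilon$. Since $\|z\|_2 = \sqrt d$ and, under Assumptions \ref{ass:datadist}--\ref{ass:dlarge}, $\evmin{\krf} = \Theta(k)$ with high probability (see \cite{tightbounds,bombari2022memorization}, which also gives invertibility of $\krf$), we may write $\mathcal S_{\textup{RF}}(z) = \sqrt d\,\|\Psi \krf^{-1} Y\|_2$. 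The key analytic step is a Hermite decomposition around the zeroth and first coefficients, using the hypothesis $\mu_1 := \E_{\rho\sim\mathcal N(0,1)}[\phi'(\rho)]\neq 0$:
\[
\phi'(Vz) = \mu_1\mathbf{1} + \tilde\phi'(Vz),\qquad \phi(XV^\top) = \mu_0\,\mathbf{1}\mathbf{1}^\top + \mu_1\,XV^\top + \Phi_\perp,
\]
where $\Phi_\perp$ collects the Hermite components of order $\geq 2$ and $\tilde\phi'$ is the first-Hermite-removed derivative.

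Plugging these decompositions into $\Psi$, and using that $V$ has i.i.d.\ $\mathcal N(0,1/d)$ entries with $k\gg d\log^4 k$, the Gram matrix $V^\top V$ concentrates around $(k/d)\,I_d$ at operator-norm scale $O(\sqrt{k/d})$. Thus the dominant signal term of $\Psi$ is $\mu_1^2\,V^\top V\,X^\top \approx (\mu_1^2 k/d)\,X^\top$, which gives
\[
\mathcal S_{\textup{RF}}(z) \gtrsim \frac{\mu_1^2 k}{\sqrt d}\,\|X^\top \krf^{-1} Y\|_2 - (\textup{residuals}).
\]
To lower bound $\|X^\top \krf^{-1} Y\|_2$, I would condition on $(X,V,z)$ and exploit the label noise: $\E_\epsilon\|X^\top \krf^{-1} Y\|_2^2 \geq \varepsilon^2\,\tr(X \krf^{-2} X^\top)$. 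Combined with $\|\krf\|_{\op} = O(k)$ on the non-spike subspace and $\tr(X^\top X) = Nd$, this trace is $\Omega(Nd/k^2)$, and a Hanson--Wright concentration transfers the expectation lower bound to a high-probability one. Putting the pieces together yields the idealized scaling $\mathcal S_{\textup{RF}}(z) = \Omega(\sqrt N)$.

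The main obstacle is rigorously controlling the residuals dropped in the linearization $\Psi \approx (\mu_1^2 k/d)\,X^\top$: these involve cross-products of the mean-zero diagonal $\tilde\phi'(Vz)$ with $\Phi_\perp$, off-diagonal fluctuations of $V^\top V$, and the interplay between $D$ and $V$ (which are not independent, because $D$ depends on $Vz$). Bounding their contribution to $\|\Psi \krf^{-1} Y\|_2$ requires careful random-matrix estimates, leveraging the spectral decoupling enabled by the interaction matrix viewpoint. Assumption \ref{ass:dlarge} ($N\log^3 N = o(d^{3/2})$) is precisely the regime in which these residuals can be kept strictly below the signal, and carefully tracking the slack in the concentration steps is what reduces the idealized $\sqrt N$ down to the claimed $\Omega(\sqrt[6]{N})$ bound.
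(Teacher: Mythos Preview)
Your instinct that the linear signal $\mu_1^2\, V^\top V X^\top \approx (\mu_1^2 k/d)\, X^\top$ drives the sensitivity is correct, and this is essentially the content of the paper's Theorem~\ref{thm:int}: $\|\Irf(z)\|_F \approx \mu_1^2\, k\sqrt N/\sqrt d$ is exactly the Frobenius norm of that signal term. But your eigenvalue claim is wrong. After removing the rank-one mean spike, the centered kernel $\tkrf$ has $\|\tkrf\|_{\op} = O\big(k(N+d)/d\big)$ (Lemma~\ref{lemma:evmaxfeatures}), not $O(k)$; the extra $(N+d)/d$ factor is precisely what collapses your idealized $\Omega(\sqrt N)$ to $\Omega(d/\sqrt N)$, and then Assumption~\ref{ass:dlarge} (which forces $d \gtrsim N^{2/3}$) gives $d/\sqrt N \gtrsim N^{1/6}$. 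So the sixth root is not ``slack in the concentration steps'' as you suggest---it comes from the genuine gap between $\lambda_{\max}(\tkrf)$ and $\lambda_{\min}(\tkrf)$, which is of order $(N+d)/d$.

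More structurally, you extract the signal from $\Psi$ \emph{before} eliminating $Y$ via the noise, which forces you to bound $\|(\textup{residuals})\,\krf^{-1}Y\|_2$ directly---and, as you acknowledge, you have no mechanism for this. The paper inverts the order: first Hanson--Wright on $\epsilon$ reduces everything to lower-bounding $\|A(z)\|_F = \|\Psi\,\krf^{-1}\|_F$ (Theorem~\ref{thm:noiselowerbound}); then a centering step $\frf\to\tfrf$---not a Hermite expansion, but two Sherman--Morrison applications (Lemmas~\ref{lemma:EtoEtilde}--\ref{lemma:KtoKtilde})---replaces $\Psi\,\krf^{-1}$ by $\Irf(z)\,\tkrf^{-1}$ up to an additive $O(\sqrt{N+d}/d)$, which decouples the interaction matrix from the kernel spectrum. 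The residuals you flag (cross-terms involving $\tilde\phi'(Vz)$, the dependence between $D$ and $V$) are handled inside the estimate of $\|\Irf\|_F$ via a coordinate-wise Taylor expansion in each $V_{lj}$ (Lemmas~\ref{lemma:Gapprox1}--\ref{lemma:Gapprox5}), which is where the bulk of the technical work actually sits.
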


In words, Theorem \ref{thm:rfnorobust} shows that a vast class of over-parameterized RF models is not robust against adversarial perturbations. A few remarks are now in order.

%\vspace{-.75em}

\paragraph{Beyond the universal law of robustness.}
We recall that the results by \cite{bubeck2021a} give a lower bound on the sensitivity of order $\sqrt{Nd/k}$. A lower bound of the same order is obtained by \cite{dohmatob}, whose results in the finite-width setting require $N, d, k$ to follow a proportional scaling (i.e., $N=\Theta(d)=\Theta(k)$). This leaves as an open question the characterization of the robustness for sufficiently over-parameterized random features (i.e., when $k\gg Nd$).  
Our Theorem \ref{thm:rfnorobust} resolves this question by showing that the RF model is in fact not robust for \emph{any} degree of over-parameterization (as long as $k\gg N, d$ and Assumption \ref{ass:dlarge} is satisfied).

\paragraph{Impact of the activation function and numerical results.} For the result of Theorem \ref{thm:rfnorobust} to hold, we require the additional assumption $\E_{\rho\sim \mathcal N (0, 1)} [\phi' (\rho)] \neq 0$. This may not be just a technical requirement. In fact, if $\E_{\rho\sim \mathcal N (0, 1)} [\phi' (\rho)] = 0$, then a key term appearing in the lower bound for the sensitivity (i.e., the Frobenius norm of the \emph{interaction matrix} defined in \eqref{eq:interactionmatrix}) has a drastically different scaling, see Theorem \ref{thm:int} in the proof outline below. The importance of the condition $\E_{\rho\sim \mathcal N (0, 1)} [\phi' (\rho)] \neq 0$ is also confirmed by our numerical simulations in the synthetic setting considered in Figure \ref{fig:rf_old_act}. The first plot (corresponding to $\phi(x)=\tanh(x)$ s.t. $\E_{\rho\sim \mathcal N (0, 1)} [\phi' (\rho)] \neq 0$) displays values of the sensitivity which are significantly larger than those in the second plot (corresponding to $\phi(x)=x^2$ s.t. $\E_{\rho\sim \mathcal N (0, 1)} [\phi' (\rho)] = 0$).
Furthermore, the sensitivity for $\phi(x)=\tanh(x)$ appears to reach a plateau for large values of $k$, while it keeps decreasing in $k$ when $\phi(x)=x^2$.
A similar impact of the activation function can be observed when considering the standard datasets MNIST and CIFAR-10 (third and fourth plot respectively).\footnote{The code used to obtain the results in Figures \ref{fig:rf_old_act}-\ref{fig:ntk_old_act} is available at the GitHub repository \href{https://github.com/simone-bombari/beyond-universal-robustness}{\texttt{https://github.com/simone-bombari/beyond-universal-robustness}}.} Additional experiments with different activation functions can be found in Appendix \ref{app:exp}.

%We also note that, for , the sensitivity of the RF model is large, regardless of the activation function, see . To put the numbers of these last plots into context, we will show in Figure \ref{fig:ntkreal} that the sensitivity of the NTK model is significantly smaller. \marco{double check this is still true}

%\marcochecked{discussion on the main result. This includes the following: (i) numerical simulations, (ii) comparison with \cite{dohmatob}, (iii) comparison with universal law of robustness, (iv) comment on $\E_{\rho} [\phi' (\rho)] \neq 0$.}

%\simone{   }

%\simone{should we comment on the fact that $\phi(x) = x^2$ doesn't respect the hypothesis for the activation function (not Lipschitz)...}

\subsection{Outline of the Argument of Theorem \ref{thm:rfnorobust}}

% \marcochecked{divide this in various steps to make reading easier.}
% \simone{This might not be necessary in the body:
% Notice that by standard application of the chain rule, the following relation holds
% \begin{equation}
%     \nabla_z \frf(z) = V^\top \text{diag} \left( \phi' (Vz)\right).
% \end{equation}
% }

The proof of Theorem \ref{thm:rfnorobust} can be divided into three steps. It will be convenient to define the shorthand
\begin{equation}\label{eq:Azrf}
A(z) := \nabla_z \frf(z)^\top \frf^\top \krf^{-1} \in \R^{d \times N}.
\end{equation}

%\vspace{-.75em}

\paragraph{Step 1. Fitting the noise lower bounds the sensitivity.}

First, we lower bound the sensitivity of our trained model with a quantity which does not depend on the labels and grows proportionally with the noise. %In particular, we have the following Theorem, which is proved in Appendix \ref{proof:noiselowerbound}.
\begin{theorem}\label{thm:noiselowerbound}
Let Assumptions \ref{ass:datadist}, \ref{ass:activationfunc}, \ref{ass:overparam} and \ref{ass:dlarge} hold. Define $A(z)$ as in \eqref{eq:Azrf}.
%Let $Y = g(X) + \epsilon \in \R^N$, where $g(X) \in \R^N$ is an arbitrary fixed vector and $\epsilon \in \R^N$ has independent sub-Gaussian entries, with zero mean and variance $\E[\epsilon_{i}^2] = \varepsilon^2$. Let 
%with $\epsilon \in \R^N$ such that $\epsilon$ is sub-Gaussian, mean-0, with independent entries such that $\E[\epsilon_i^2] = \varepsilon^2$. $g(X) \in \R^N$ is an arbitrary fixed vector, and 
%$A(z) \in \R^{d \times N}$ be an arbitrary fixed matrix. 
Then, we have
\begin{equation}\label{eq:thnoise}
    \mathcal S_{\textup{RF}}(z) \geq \frac{\varepsilon}{2} \|z\|_2 \norm{A(z)}_F,
\end{equation}
with probability at least $1 - \exp \left( -c \norm{A(z)}_F^2 / \opnorm{A(z)}^2 \right)$ over $\epsilon$, where $c$ is a numerical constant.
\end{theorem}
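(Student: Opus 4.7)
Conditioning on the training inputs $X$, the random weights $V$, and the test point $z$, both the matrix $A(z)$ and the vector $u := A(z)\,g(X) \in \mathbb{R}^{d}$ become deterministic, and the only remaining randomness is that of the noise vector $\epsilon$. Using $Y = g(X)+\epsilon$ and expanding,
\begin{equation}
\norm{A(z) Y}_2^{2} \;=\; \norm{u}_2^{2} \;+\; 2\,\epsilon^{\top}A(z)^{\top}u \;+\; \epsilon^{\top}A(z)^{\top}A(z)\,\epsilon,
\end{equation}
the theorem reduces to showing $\norm{A(z) Y}_2 \geq (\varepsilon/2)\norm{A(z)}_{F}$ with the stated probability, after which multiplying by $\norm{z}_2$ recovers $\mathcal{S}_{\textup{RF}}(z) \geq (\varepsilon/2)\,\norm{z}_2\,\norm{A(z)}_{F}$. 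The plan is to control the quadratic term in $\epsilon$ via Hanson--Wright, the linear cross term via a sub-Gaussian tail bound, and to combine the two by an arithmetic--geometric mean (AM-GM) step.

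\textbf{Concentration of the two pieces.} For the quadratic term, set $M := A(z)^{\top}A(z)$: then $\mathbb{E}_\epsilon[\epsilon^{\top} M \epsilon] = \varepsilon^{2}\norm{A(z)}_{F}^{2}$, $\opnorm{M} = \opnorm{A(z)}^{2}$, and $\norm{M}_F \leq \opnorm{A(z)}\,\norm{A(z)}_{F}$, so the Hanson--Wright inequality for vectors with independent sub-Gaussian entries, applied with deviation $\varepsilon^{2}\norm{A(z)}_{F}^{2}/2$, yields
\begin{equation}
\Pr\!\left(\epsilon^{\top} M \epsilon \,\geq\, \tfrac{\varepsilon^{2}}{2}\norm{A(z)}_{F}^{2}\right) \;\geq\; 1 - 2\exp\!\left(-c_{1}\,\norm{A(z)}_{F}^{2}/\opnorm{A(z)}^{2}\right).
\end{equation}
For the cross term, $\epsilon^{\top}A(z)^{\top}u$ is sub-Gaussian in $\epsilon$ with variance proxy bounded by $\varepsilon^{2}\,\norm{A(z)^{\top}u}_{2}^{2}\leq \varepsilon^{2}\opnorm{A(z)}^{2}\norm{u}_{2}^{2}$, so the standard sub-Gaussian tail bound at scale $c_{2}\,\varepsilon\,\norm{u}_{2}\,\norm{A(z)}_{F}$ gives
\begin{equation}
\Pr\!\left(2\,|\epsilon^{\top}A(z)^{\top}u| \,\leq\, 2c_{2}\,\varepsilon\,\norm{u}_{2}\,\norm{A(z)}_{F}\right) \;\geq\; 1 - 2\exp\!\left(-c_{3}\,c_{2}^{2}\,\norm{A(z)}_{F}^{2}/\opnorm{A(z)}^{2}\right),
\end{equation}
for any $c_{2} > 0$ (with $c_{3}$ an absolute constant).

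\textbf{Balancing, and the main obstacle.} On the intersection of the two events above, AM-GM gives $2c_{2}\varepsilon\norm{u}_{2}\norm{A(z)}_{F} \leq \norm{u}_{2}^{2} + c_{2}^{2}\varepsilon^{2}\norm{A(z)}_{F}^{2}$, so after substituting into the displayed expansion the $\norm{u}_{2}^{2}$ term cancels and one is left with
\begin{equation}
\norm{A(z) Y}_2^{2} \;\geq\; \left(\tfrac{1}{2}-c_{2}^{2}\right)\varepsilon^{2}\norm{A(z)}_{F}^{2}.
\end{equation}
Choosing $c_{2}=1/2$ yields $\norm{A(z) Y}_2 \geq (\varepsilon/2)\norm{A(z)}_{F}$; a union bound over the two probabilistic events produces the stated tail of the form $1-\exp(-c\,\norm{A(z)}_{F}^{2}/\opnorm{A(z)}^{2})$. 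The delicate point is the cross-term analysis: a priori $|\epsilon^{\top}A(z)^{\top}u|$ could be of order $\varepsilon\norm{u}_{2}\norm{A(z)}_{F}$, so the sub-Gaussian bound must be pushed tight enough (paying only a smaller absolute constant in the exponent) that the AM-GM step can absorb it against the free $\norm{u}_{2}^{2}$ left over from squaring $\norm{u+A(z)\epsilon}_{2}^{2}$. Notably, the argument requires no a priori control on the signal norm $\norm{A(z)g(X)}_{2}$, which cancels from the final lower bound.
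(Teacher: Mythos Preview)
Your proposal is correct and follows essentially the same route as the paper: both expand $\|A(z)Y\|_2^2$ into the signal, cross, and quadratic-in-$\epsilon$ pieces, control the quadratic term via Hanson--Wright (the paper invokes it through Vershynin's norm-concentration Theorem 6.3.2, you apply it directly to $\epsilon^\top A^\top A\,\epsilon$), bound the cross term by a sub-Gaussian tail, and then balance via the same completing-the-square/AM-GM trick so that the unknown $\|A(z)g(X)\|_2$ drops out. The only cosmetic difference is the parameterization of the cross-term deviation (their free parameter $y$ versus your $c_2$), which after the respective optimizations yields the identical constant $\varepsilon/2$ and the same exponent $\|A\|_F^2/\opnorm{A}^2$.
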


The proof exploits the independence between $g(X)$ and $\epsilon$ and the Hanson-Wright inequality. The details are contained in Appendix \ref{proof:noiselowerbound}.

Theorem \ref{thm:noiselowerbound} removes the labels from the expression, and reduces the problem of estimating the robustness to %the problem of 
characterizing the Frobenius norm of %the random matrix 
$A(z)$ (as long as %. This is true when 
$\norm{A(z)}_F\gg \opnorm{A(z)}$, so that \eqref{eq:thnoise} holds with high probability). % is sufficiently larger than $\opnorm{A(z)}$, otherwise \eqref{eq:thnoise} is not true with high probability anymore.
%Notice that this result
This is in line with \cite{bubeck2021a, dohmatob}, which provide upper bounds on the robustness of models fitting the data below noise level. %In particular, they require the expected conditional variance $\varepsilon^2$ of the output to be strictly positive, and the empirical error to be below this noise, \ie $\sum (f(x_i) - y_i)^2 / N \leq \varepsilon^2 - c$. 
%\marco{connection to fitting noise unclear to me}

%\marcochecked{OK, let's see how this looks.} \simone{our notation is a little different. I kept ours.}

%\vspace{-.75em}

\paragraph{Step 2. Splitting between interaction and kernel components.}
Next, we split the matrix $A(z)$ into two separate objects. The first is the \textit{interaction matrix} given by 
\begin{equation}\label{eq:interactionmatrix}
    \Irf(z) := \nabla_z \frf(z)^\top \tfrf^\top,
\end{equation}
where we use the shorthand $\tfrf := \frf - \E_X [\frf] \in \R^{N \times k}$. The second is the \emph{centered kernel} $\tkrf := \tfrf \tfrf^\top$, whose spectrum can be studied separately. % This will allow us to study the two terms separately in the final step. %We stress the importance of $\Irf(z)$ as our proof method relies in splitting $A(z)$ in two distinct components, one related with $\Irf(z)$ and one related with $\tkrf$, in order to study them separately. 

\begin{theorem}\label{thm:Az}
    Let Assumptions \ref{ass:datadist}, \ref{ass:activationfunc}, \ref{ass:overparam} and \ref{ass:dlarge} hold. Define $A(z)$ as in \eqref{eq:Azrf}. Then, we have
    \begin{equation}\label{eq:Az1}
    \begin{aligned}
        & \norm{A(z)}_F \geq \lambda_{\max}^{-1}\left(\tkrf \right) \norm{\Irf(z)}_F - C \sqrt{N + d}/d, \\[4pt]
        & \norm{A(z)}_F \leq \lambda_{\min}^{-1}(\tkrf) \norm{\Irf(z)}_F + C \sqrt{N + d}/d,
    \end{aligned}
    \end{equation}
    which implies
    \begin{equation}\label{eq:Az2}
    \begin{aligned}
        & \norm{A(z)}_F \geq C_1 \frac{d}{kN}  \norm{\Irf(z)}_F - C \sqrt{N + d}/d,  \\[4pt]
        & \norm{A(z)}_F \leq C_2 k^{-1}  \norm{\Irf(z)}_F + C \sqrt{N + d}/d,
    \end{aligned}
    \end{equation}
    with probability at least $1 - \exp(-c \log^2 N)$ over $X$ and $V$, where $c$, $C$, $C_1$ and $C_2$ are absolute constants.
\end{theorem}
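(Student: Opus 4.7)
The plan is to show that $A(z)$ is well-approximated in Frobenius norm by $\Irf(z)\tkrf^{-1}$, up to an additive error of order $\sqrt{N+d}/d$, and then to translate the Frobenius norm of $\Irf(z)\tkrf^{-1}$ into the extremal eigenvalues of $\tkrf$. Given this approximation, \eqref{eq:Az1} is a short triangle-inequality argument and \eqref{eq:Az2} follows by substituting known spectral estimates on $\tkrf$; the bulk of the work lies in establishing the approximation.

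First I would decompose the feature matrix into mean and centered parts. Setting $\mu := \E_X[\phi(Vx_1)] \in \R^k$, we have $\frf = \tfrf + \mathbf{1}_N \mu^\top$, and hence
$$\krf = \tkrf + a\,\mathbf{1}_N^\top + \mathbf{1}_N a^\top + \|\mu\|_2^2\, \mathbf{1}_N \mathbf{1}_N^\top, \qquad a := \tfrf \mu,$$
a rank-$\leq 2$ perturbation of the centered kernel. The same decomposition gives $A(z) = \Irf(z)\,\krf^{-1} + v\,\mathbf{1}_N^\top \krf^{-1}$ with $v := \nabla_z \frf(z)^\top \mu$. Applying the Sherman--Morrison--Woodbury identity to replace $\krf^{-1}$ by $\tkrf^{-1}$ plus a rank-$\leq 2$ correction, the difference $A(z) - \Irf(z)\tkrf^{-1}$ unfolds into a fixed number of rank-one terms, each of the form $w\,\mathbf{1}_N^\top \krf^{-1}$ or $w\, a^\top \krf^{-1}$.

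The technical core is bounding each such rank-one piece in Frobenius norm by $O(\sqrt{N+d}/d)$. The key observation is that $\mathbf{1}_N$ is almost a top eigenvector of $\krf$: because $\krf \mathbf{1}_N = \frf\,\sum_j \frf(x_j)$ and $\sum_j \frf(x_j) \approx N\mu$ by Lipschitz concentration, we get $\krf \mathbf{1}_N \approx N\|\mu\|_2^2\, \mathbf{1}_N$ with eigenvalue $\Theta(Nk)$, so $\|\mathbf{1}_N^\top \krf^{-1}\|_2 = O(1/(k\sqrt N))$. Combining this with the concentration estimates $\|a\|_2 = O(\sqrt{Nk})$ and $\|v\|_2 = O(k/\sqrt d)$ (the latter using $\|V\|_{\op} = O(\sqrt{k/d})$), and with an operator-norm bound on $\Irf(z)\tkrf^{-1}$ assembled from $\|V\|_{\op}$, $\sqrt{\lambda_{\max}(\tkrf)}$, and $1/\lambda_{\min}(\tkrf)$, every rank-one piece fits within the $\sqrt{N+d}/d$ budget with the stated probability. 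This is the main obstacle: all four rank-one contributions are comparable in size to the target, and obtaining the bound simultaneously for all of them requires exploiting both the near-eigenvector property of $\mathbf{1}_N$ for $\krf$ and the centering of $\tfrf$.

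Once $\|A(z) - \Irf(z)\tkrf^{-1}\|_F \leq C\sqrt{N+d}/d$, the triangle inequality combined with the identity $\|M\tkrf^{-1}\|_F^2 = \tr(M^\top M\,\tkrf^{-2})$---which sandwiches $\|M\tkrf^{-1}\|_F$ between $\lambda_{\max}^{-1}(\tkrf)\|M\|_F$ and $\lambda_{\min}^{-1}(\tkrf)\|M\|_F$---immediately yields \eqref{eq:Az1}. The explicit inequalities \eqref{eq:Az2} then follow by substituting spectral estimates available for the centered RF kernel: the lower bound $\lambda_{\min}(\tkrf) \geq k/C_2$ is proved by Hermite-expansion arguments as in \cite{bombari2022memorization,tightbounds}, while the upper bound $\lambda_{\max}(\tkrf) \leq C\, kN/d$ follows from the non-linear random matrix theory representation $\tkrf \approx \alpha_1 XX^\top + \alpha_2 k\, I$ combined with standard operator-norm estimates on $XX^\top$ in the relevant proportional regime.
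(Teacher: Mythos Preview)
Your overall strategy matches the paper's: reduce $A(z)=\nabla_z\frf(z)^\top\frf^\top\krf^{-1}$ to its centered analogue $\Irf(z)\,\tkrf^{-1}$ up to an additive $O(\sqrt{N+d}/d)$ in Frobenius norm by treating $\krf-\tkrf$ as a low-rank perturbation and applying Sherman--Morrison, then sandwich $\|\Irf(z)\tkrf^{-1}\|_F$ by the extremal eigenvalues of $\tkrf$ and plug in spectral estimates.

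The execution differs in two places. First, instead of your direct rank-$2$ Woodbury, the paper completes the square to write $\krf-\tkrf=\eta\eta^\top-\lambda\lambda^\top$ with $\eta=\frf\,\mu/\|\mu\|_2$ and $\lambda=\tfrf\,\mu/\|\mu\|_2$, and then peels one rank-one piece at a time (the chain is $E\krf^{-1}\to\tilde E\krf^{-1}\to\tilde E(\krf+\lambda\lambda^\top)^{-1}=\tilde E(\tkrf+\eta\eta^\top)^{-1}\to\tilde E\tkrf^{-1}$). Second, and more importantly, the paper does not use an approximate-eigenvector argument to bound $\|\krf^{-1}\mathbf{1}_N\|_2$: after writing $\mathbf{1}_N=(\eta-\lambda)/\|\mu\|_2$, it exploits that $\eta$ lies in the column span of $\frf$, whence $\|\krf^{-1}\eta\|_2\le\|\frf^+\|_{\op}=\lambda_{\min}(\krf)^{-1/2}=O(1/\sqrt k)$ immediately, while $\|\krf^{-1}\lambda\|_2\le\|\krf^{-1}\|_{\op}\|\tfrf\|_{\op}=O(\sqrt{(N+d)/(kd)})$. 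Your heuristic that $\mathbf{1}_N$ is ``almost a top eigenvector'' is on the right track, but the claimed $O(1/(k\sqrt N))$ is too optimistic as stated: the component of $\mathbf{1}_N$ orthogonal to the true top eigenvector is only damped by $\lambda_{\min}(\krf)^{-1}$, not $\lambda_{\max}(\krf)^{-1}$, and carrying your residual computation through (write $\krf\mathbf{1}_N=N\|\mu\|_2^2\,\mathbf{1}_N+r$ and invert) recovers the paper's $O(\sqrt{(N+d)}/(k\sqrt d))$ rather than $O(1/(k\sqrt N))$. Either bound suffices for the final $\sqrt{N+d}/d$ budget, so this is a looseness rather than a gap. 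Finally, the paper obtains $\lambda_{\max}(\tkrf)=O(k(N+d)/d)$ not via a linearization $\tkrf\approx\alpha_1 XX^\top+\alpha_2 kI$ but by a direct block argument: split $\tfrf$ into $\lceil k/d\rceil$ column-blocks of width $d$, each of which has i.i.d.\ sub-Gaussian rows and contributes $O(N+d)$ to the operator norm.
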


\emph{Proof sketch.} To prove the claim, first we characterize the extremal eigenvalues of the kernel $\krf:=\frf\frf^\top$ and of its centered counterpart $\tkrf:=\tfrf\tfrf^\top$, with $\tfrf=\frf-\E_X\frf$, see Appendix \ref{app:spectrum}. Then, we perform a delicate centering step on the matrix $A(z)$ defined in \eqref{eq:Azrf}. Informally, we show that 
\begin{equation}\label{eq:apprxo}
   \|\nabla_z \frf(z)^\top \frf^\top \krf^{-1}\|_F\approx \|\nabla_z \frf(z)^\top \tfrf^\top \tkrf^{-1}\|_F,
\end{equation}
see Lemma \ref{lemma:totalcentering} for a precise statement. 
%this means substituting $\frf$ with $\tfrf$ and $\krf$ with $\tkrf$ in \eqref{eq:robustnessrf}, and proving that this substitution still leads to a good estimate on the sensitivity. 
This is the key technical ingredient, since the rank-one components coming from the average feature matrix have a large operator norm, which would trivialize the lower bound on the sensitivity. %We note that similar centering steps were needed also to find tight bounds on the spectrum of the NTK in a setting with minimum over-parameterization, see \cite{bombari2022memorization}. 
More specifically, the centering leads to two rank-one terms which are successively removed via the Sherman-Morrison formula and a number of ad-hoc estimates, see Appendix \ref{app:centering}. Finally, the proof of \eqref{eq:Az1}-\eqref{eq:Az2} follows by combining \eqref{eq:apprxo} with the bounds on the smallest/largest eigenvalues of $\tkrf$, and it appears at the end of Appendix \ref{app:centering}. \qed

Theorem \ref{thm:Az} unveils the crucial role played by the interaction matrix $\Irf(z)$ on the robustness of the model. This term is the only one containing the test sample $z$, and it depends on how the term $\nabla_z \frf(z)$ \textit{aligns} (or \enquote{interacts}) with the centered feature map of the training data $\tfrf$.

%To the best of our knowledge, this is the first time attention is raised over such term, and we believe it can be a relevant quantity for the theoretical study of adversarial robustness in machine learning models. 

%shows $\norm{A(z)}_F$ (and, hence, the sensitivity) can be characterize through two separate quantities: the centered kernel $\tkrf$ (and specifically, its extremal eigenvalues) quantities. One is related with the extremal eigenvalues of the . 
%Centering enables to provide tighter (and therefore more useful) bounds, as the largest eigenvalue of the original kernel is much larger than its centered version, and would result in providing looser lower bounds on the sensitivity.
%The second quantity is the Frobenius norm of the 

%\vspace{-.75em}

\paragraph{Step 3. Estimating $\norm{\Irf(z)}_F$.}

Finally, we provide a precise estimate on the norm of the interaction matrix $\norm{\Irf(z)}_F$. To do so, we assume that the test point $z$ is independently sampled from the data distribution $P_X$. 
%Specifically, we will assume $z \sim P_X$ to be sampled independently from the training data points. 

\begin{theorem}\label{thm:int}
    Let Assumptions \ref{ass:datadist}, \ref{ass:activationfunc}, \ref{ass:overparam} and \ref{ass:dlarge} hold. Let $z \sim P_X$ be sampled independently from the training set $(X, Y)$, and $\Irf(z)$ be defined as in \eqref{eq:interactionmatrix}. %Assume further that $\E_{\rho} [\phi' (\rho)] \neq 0$.  
    Then,
    \begin{equation}\label{eq:thminteq}
        \left| \norm{\Irf(z)}_F - \E_{\rho}^2 [\phi' (\rho)] \frac{k \sqrt N}{\sqrt d} \right| = o \left( \frac{k \sqrt N}{\sqrt d} \right),
    \end{equation}
    with probability at least $1 - \exp (- c \log^2 k)$ over $X$, $z$ and $V$, where $c$ is an absolute constant.
\end{theorem}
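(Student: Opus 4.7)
The plan is to establish $\E_V[\norm{\Irf(z)}_F^2] = \mu_1^4\, k^2 N/d \cdot (1 + o(1))$, with $\mu_1 := \E_\rho[\phi'(\rho)]$, and then show concentration of $\norm{\Irf(z)}_F^2$ around this value; taking the square root yields the theorem since $\sqrt{1+o(1)} = 1+o(1)$. Writing $\Irf(z) = V^\top \diag(\phi'(Vz))\, \tfrf^\top$ and letting $v_j$ denote the $j$-th row of $V$ and $\tilde\phi_{ij} := \phi(v_j^\top x_i) - \E_X[\phi(v_j^\top x)]$, we expand
\[
    \norm{\Irf(z)}_F^2 = \sum_{i=1}^N \sum_{j,l=1}^k \langle v_j, v_l\rangle\, \phi'(v_j^\top z)\phi'(v_l^\top z)\, \tilde\phi_{ij}\tilde\phi_{il}.
\]
The diagonal contribution ($j = l$) is $O(Nk)$ in expectation (using $\norm{v_j}_2^2 \approx 1$ and the boundedness of the other factors on average), which is subleading relative to $k^2 N/d$ since Assumption \ref{ass:overparam} gives $k \gg d$. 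For $j \neq l$, independence of $v_j$ and $v_l$ reduces the expectation over $V$ to
\[
    \sum_{m=1}^d \left( \E_v[V_m\, \phi'(v^\top z)\, \tilde\phi_i(v)] \right)^2,
\]
where $\tilde\phi_i(v) := \phi(v^\top x_i) - \E_X[\phi(v^\top x)]$ and $v \sim \mathcal{N}(0, I_d/d)$.

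To evaluate the inner expectation, I would apply Gaussian integration by parts in $V_m$. The resulting term involving $\partial_{V_m}\phi'(v^\top z) = z_m \phi''(v^\top z)$ is of lower order: the centering by $\E_X[\phi(v^\top x)]$ absorbs the zeroth Hermite coefficient $\mu_0 := \E_\rho[\phi(\rho)]$, making $\E_v[\phi''(v^\top z)\tilde\phi_i(v)] = o(1)$ thanks to the near-orthogonality $|\langle z, x_i\rangle| = O(\sqrt d)$ from Assumption \ref{ass:datadist} and a Mehler-style expansion. The dominant piece is $(x_{i,m}/d)\, \E_v[\phi'(v^\top z)\phi'(v^\top x_i)] \approx \mu_1^2 x_{i,m}/d$, where the bivariate expectation factorizes to $\mu_1^2$ at leading order by the same near-orthogonality together with the Stein identity $\E_\rho[\phi'(\rho)] = \mu_1$. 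Squaring, summing over $m$ using $\norm{x_i}_2^2 = d$, and then summing over $i$ and the $k(k-1)$ off-diagonal pairs $(j, l)$ produces the leading contribution $\mu_1^4 k^2 N/d$.

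For concentration, I would first condition on $(X, z)$ and show that $V \mapsto \norm{\Irf(z)}_F$ is Lipschitz, with a constant controlled via Assumption \ref{ass:activationfunc} (bounds on $\phi, \phi', \phi''$) and the spectral estimates on $V$ and $\tfrf$ developed in Appendix \ref{app:spectrum}. Gaussian concentration in $V$ then yields deviations of order $o(k\sqrt{N/d})$ with probability $1 - e^{-c\log^2 k}$ under Assumption \ref{ass:overparam}. The residual randomness in $X$ and $z$ is absorbed by the Lipschitz concentration property of $P_X$ from Assumption \ref{ass:datadist}, applied jointly to the approximations used in the expectation computation.

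The main obstacle is quantitative: with $N k^2$ off-diagonal contributions, the per-term error in the integration-by-parts must be genuinely $o(1/d)$ for the cumulative error to remain $o(k^2 N/d)$. This requires sharp control of $\E_X[\phi(v^\top x)]$ and its $V$-derivatives, approximating them by their Gaussian counterparts via the concentration of $P_X$, together with careful bookkeeping of the Mehler-expansion tails leveraging the smoothness assumptions on $\phi, \phi', \phi''$. Assumption \ref{ass:dlarge} enters here precisely to ensure that these cumulative errors stay of smaller order than the leading $\mu_1^4 k^2 N/d$ term, and it is also the ingredient that makes the high-probability statement go through at the prescribed rate.
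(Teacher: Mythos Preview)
Your route is genuinely different from the paper's. The paper never computes $\E_V[\|\Irf(z)\|_F^2]$ and then concentrates; instead it writes $\|\Irf(z)\|_F^2=\sum_i\sum_{j=1}^d I_j^2$ with $I_j=\sum_{l=1}^k V_{lj}\,\phi'(\mathcal Z_l)\,\tilde\phi(\mathcal X_l)$ and controls each $I_j$ \emph{pointwise} by Taylor-expanding $\phi'$ and $\tilde\phi$ around the leave-one-out preactivations $\mathcal Z_{-jl},\mathcal X_{-jl}$. That expansion decouples $V_{lj}$ from everything else in the $l$-th summand and leads, after a short chain of approximations, to $I_j\approx (k/d)\,\mu_1^2\,x_{i,j}$; summing over $j$ (using $\|x_i\|_2^2=d$) and then over $i$ gives the claim. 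Your Stein's-lemma computation of the off-diagonal expectation $\sum_m(\E_v[V_m\phi'(v^\top z)\tilde\phi_i(v)])^2$ recovers exactly the same leading term $\mu_1^4/d$ per sample, so at the level of the main contribution the two approaches agree. What your moment decomposition buys is a clean separation into ``mean'' and ``fluctuation''; what the paper's leave-one-out argument buys is that no global concentration step in $V$ is ever needed.

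The real gap in your proposal is the concentration step. The map $V\mapsto\|\Irf(z)\|_F=\|V^\top\diag(\phi'(Vz))\,\tilde\Phi(V)^\top\|_F$ is a product of three $V$-dependent factors and is \emph{not} globally Lipschitz, so Gaussian Lipschitz concentration does not apply as stated. Your plan to bound the Lipschitz constant ``via the spectral estimates on $V$ and $\tfrf$'' is circular: those estimates (e.g.\ $\opnorm{V}=\bigO{\sqrt{k/d}}$, $\opnorm{\tfrf}$ bounds) are themselves high-probability statements in $V$, and cannot furnish the deterministic Lipschitz bound that the concentration inequality requires. This is repairable---one can restrict to a good event and use a Kirszbraun-type extension, or argue via row-wise martingale/bounded-differences after truncating $\tilde\phi$, or compute the variance directly and accept weaker polynomial tails via Chebyshev/hypercontractivity---but none of these are free, and you would in particular have to check that the repaired argument still delivers the $\exp(-c\log^2 k)$ probability in the statement. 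The paper's direct, coordinate-by-coordinate control of $I_j$ sidesteps this issue entirely.
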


\emph{Proof sketch.} A direct calculation gives that 
\begin{equation*}
  \norm{\Irf(z)}_F^2 =  \sum_{i=1}^N \norm{V^\top \text{diag} \left( \phi' (Vz)\right) \tilde \phi (Vx_i)}_2^2,
\end{equation*}
and we bound separately each term of the sum. Note that the three terms $V^\top$, $\text{diag} \left( \phi' (Vz)\right)$ and $\tilde \phi (Vx_i)$ are correlated by the presence of $V$. However, $V$ contributes to the last two terms only via a single projection (along the direction of $z$ and $x_i$, respectively). Hence, the key idea is to use a Taylor expansion to split $\text{diag} \left( \phi' (Vz)\right)$ and $\tilde \phi (Vx_i)$ into a component correlated with $V$ and an independent one. The correlated components are computed exactly and the remaining independent term is shown to have a negligible effect. The details are in Appendix \ref{app:estimateI}. %We enunciate here the main result, which is proved at the end of Appendix \ref{app:estimateI}. 
\qed

At this point, the proof of Theorem \ref{thm:rfnorobust} follows by combining the results of Theorems \ref{thm:noiselowerbound}-\ref{thm:int}. The details are in Appendix \ref{proof:rfnorobust}. 

We note that the results of Theorems \ref{thm:noiselowerbound}-\ref{thm:int} do not require the assumption $\E_{\rho\sim \mathcal N (0, 1)} [\phi' (\rho)] \neq 0$. In fact, the role of this additional condition is clarified by the statement of Theorem \ref{thm:int}: if $\E_{\rho\sim\mathcal N(0, 1)} [\phi' (\rho)] \neq  0$, then $\norm{\Irf(z)}_F$ is of order $\Theta(k\sqrt{N/d})$; otherwise, $\norm{\Irf(z)}_F$ is drastically smaller. This provides an explanation to the qualitatively different behavior of the sensitivity for $\phi(x)=\tanh(x)$ and $\phi(x)=x^2$ displayed in Figure \ref{fig:rf_old_act}. To conclude, the interaction matrix appears to be the key quantity capturing the impact of the activation function on the robustness of the ERM solution.

%clarifies the role of the additional assumption $\E_{\rho\sim\mathcal N(0, 1)} [\phi' (\rho)] \neq 0$ required by Theorem \ref{thm:rfnorobust}. In fact, if $\E_{\rho\sim\mathcal N(0, 1)} [\phi' (\rho)] = 0$, then  the value of $\norm{\Irf(z)}_F$ (and, therefore, the sensitivity) drastically drops, see \eqref{eq:thminteq}. 
%This qualitatively different behavior is also clearly shown in Figures \ref{fig:rfsynt}-\ref{fig:rfreal}. %, that shows significantly smaller values of the sensitivity for an even activation function (right).}

%To conclude, we want to emphasize the new assumption on the activation function. In particular, if the $\phi$ is even, we have that  drastically drops for over-parametrized models (see Figure ...), and same happens for the sensitivity. 

%Thus, we believe that an extensive study of the Interaction Matrix could reveal to be beneficial to identify settings in which machine learning model are provably more robust.

\section{Main Result for NTK Regression}\label{sec:ntk}

In this section, we provide our law of robustness for the \emph{NTK} model. In particular, we will show that, for a class of \emph{even} activations, the sensitivity of NTK can be $\bigO{1}$ and, therefore, the \emph{model is robust} for some scaling of the parameters.

We consider the following two-layer neural network
\begin{equation}\label{eq:linNN}
    f_{\textup{NN}}(x, w) = \sum_{i = 1}^k \phi\left(W^1_{i:} x\right) - \sum_{i = 1}^k \phi\left(W^2_{i:} x\right).
\end{equation}
Here, the hidden layer contains $2k$ neurons; $\phi$ is an activation function applied component-wise; %and trainable parameters only in the hidden layer. We indicate with 
$W^1, W^2\in \mathbb R^{k \times d}$ denote the first and second half of the weights of the hidden layer, respectively; for $j\in \{1, 2\}$, $W^j_{i:}$ denotes the $i$-th row of $W^j$; the first $k$ weights of the second layer are set to $1$, and the last $k$ weights to $-1$. We indicate with $w$ the vector containing the parameters of this model, i.e., $w = [\text{vec}(W^1), \text{vec}(W^2)]\in \R^p$, with $p = 2kd$. For convenience, we initialize the network so that its output is $0$. This has been shown to be necessary to have a robust model after lazy training \cite{dohmatob2022non, wang2022adversarial}. Specifically, we let $w_0=[\text{vec}(W_0^1), \text{vec}(W_0^2)]$ be the vector of the parameters at initialization, where we take $[W^1_0]_{i,j} \distas{}_{\rm i.i.d.} \mathcal N(0, 1 / d)$ and $W^2_0=W^1_0$. Here, with a slight abuse of notation, we use the subscript 0 to refer to the initialization, and not to indicate matrix rows. This readily implies that $f_{\textup{NN}}(x, w_0) =0$, for all $x$.

Now, the \emph{NTK regression model} takes the form %he generalized linear model that has as feature map
\begin{equation}
  f_{\textup{NTK}}(x, \theta) \hspace{-.15em}=\hspace{-.15em} \fntk(x)^\top \hspace{-.15em}\theta, \,\,  \fntk(x)\hspace{-.15em} = \hspace{-.15em}\nabla_w f_{\textup{NN}}(x, w) |_{w = w_0}.
\end{equation}
Here, the vector trainable parameters is $\theta \in \R^p$, again with $p = 2kd$, which is initialized with $\theta_0 = w_0$. This is the same model considered by \cite{dohmatob2022non,montanari2022interpolation}. We remark that $f_{\textup{NTK}}(x, \theta)$ is equivalent to the linearization of $f_{\textup{NN}}(x, w)$ around the initial point $w_0$, see e.g. \cite{JacotEtc2018,bartlett2021deep}. % \simone{refs maybe.}

An application of the chain rule gives
\begin{equation}\label{eq:ntkfeaturemap}
    \fntk(x) = \left[ x \otimes \phi'(W^1_0 x), - x \otimes \phi'(W^2_0 x) \right] =: \left[\fntk'(x), - \fntk'(x) \right],
\end{equation}
where the last equality follows from the fact that $W^1_0 = W^2_0$, and the definition $\fntk'(x) := x \otimes \phi'(W^1_0 x)$. Thus, since $\theta_0 = w_0 = [\text{vec}(W^1_0), \text{vec}(W^2_0)] = [\text{vec}(W^1_0), \text{vec}(W^1_0)]$, we have
\begin{equation}
    f_{\textup{NTK}}(x, \theta_0) = \fntk(x)^\top \theta_0 = \left[\fntk'(x), - \fntk'(x) \right]^\top [\text{vec}(W^1_0), \text{vec}(W^1_0)] = 0.
\end{equation}
This means that our model's output is 0 at initialization. Then, we can use \eqref{eq:sensforlin} to express the sensitivity of the trained NTK regression model as
\begin{equation}\label{eq:sensNTK}
    \mathcal S_{\textup{NTK}}(z) = \norm{z}_2 \norm{\nabla_z \fntk(z)^\top \fntk^\top \kntk^{-1} Y}_2,
\end{equation}
where %, as in the Random Features paragraph, we use the notation 
$\fntk \in \R^{N \times p}$ contains in its $i$-th row the feature map of the $i$-th training sample $\fntk(x_i)$ and $\kntk = \fntk \fntk^\top$. The invertibility of the kernel will again follow from the proof of our main result, Theorem \ref{thm:ntk}. Throughout this section, we make the following assumptions. 

%To make some clarity, $\nabla_z \Phi(z)^\top$ is a $d \times p$ matrix. Given \eqref{eq:ntkfeaturemap}, we will also use the notation $\fntk = [\fntk', -\fntk']$. Notice that the following relation holds
%\begin{equation}\label{eq:ntkhad}
%\begin{aligned}
%    \kntk &= 2 \fntk' \fntk'^\top \\
%    &= 2 XX^\top \circ \phi'(XW^\top) \phi'(XW^\top)^\top,
%\end{aligned}
%\end{equation}
%where we used the shorthand $W := W^1_0 = W^2_0$.
%Differently from before, we do not make additional assumptions on the labels. However, since we fixed the scaling of the output of the trained model to be $\Theta(1)$ (see footnote \ref{foot:outscaling}), we will consider $\norm{Y}_2 = \Theta(\sqrt N)$. Together with this, we will make the following technical Assumptions

\begin{assumption}[Activation function]\label{ass:activationfuncntk}
    The activation function $\phi$ satisfies the following properties:
    \begin{enumerate}
    \item $\phi$ is a non-linear, even function.
    \item Its first order derivative $\phi'$ is an $L$-Lipschitz function.
    \end{enumerate}
\end{assumption}
We restrict our analysis to an even activation function, as this requirement significantly simplifies the derivations.  The impact of the activation on the robustness will also be discussed at the end of this section.

%activations leading to a non-centering  For this part we consider only the specific case of even activation functions, as they avoid the technical difficulty of \emph{centering} the terms in our sensitivity expression, as we did in the Random Features case. Notice that despite is not theoretically clear if this condition is necessary to prove our result, numerical simulations in Figure ... hint that this is indeed a key difference in the final measured sensitivity of the trained NTK regression model.

\begin{assumption}[Minimum over-parameterization]\label{ass:overparamntk}
    \begin{equation}\label{eq:overparamntk}
        N \log^8 N = o(kd).
    \end{equation}
\end{assumption}

%\vspace{-.5em}

Eq. \eqref{eq:overparamntk} provides the weakest possible requirement on the number of parameters of the model capable to guarantee interpolation for generic data points, as it leads to a lower bound on the smallest eigenvalue of $\kntk$ \cite{bombari2022memorization}.

\begin{assumption}[High-dimensional data]\label{ass:dlargentk}
    \begin{equation}\label{eq:dlargentk}
        k = \bigO{d}.
    \end{equation}
\end{assumption}

%\vspace{-.5em}

This requirement is purely technical, %as it allows to use the tools developed in \cite{bombari2022memorization} to study the spectrum of $\kntk$. W
and we leave as a future work the interesting problem of characterizing the sensitivity of the NTK regression model when $d = o(k)$.

We will still use Assumption \ref{ass:datadist} on the data distribution $P_X$, and the only assumption on the labels is that $\norm{Y}_2=\Theta(\sqrt{N})$, which corresponds to the natural requirement that the output is $\Theta(1)$. At this point, we are ready to 
%We can now 
state our main result for NTK regression. %, and we defer its proof in Appendix \ref{proof:rfnorobust}.

\begin{figure}[!t]
  \begin{center}
    \includegraphics[width=0.99\textwidth]{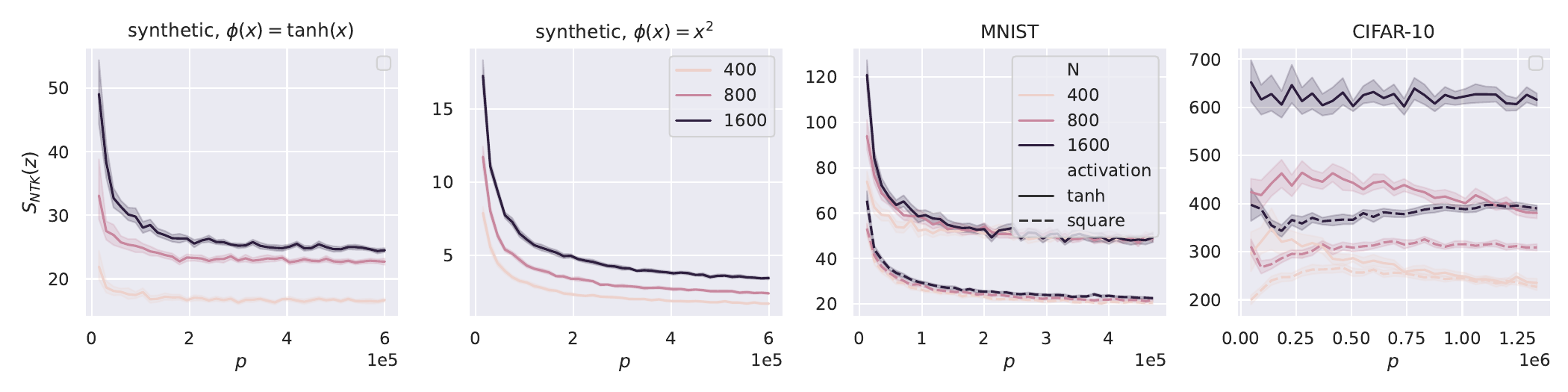}
  \end{center}
 %\vspace{-1em}
 \caption{Sensitivity as a function of the number of parameters $p$, for an NTK model with synthetic data (first and second plot), and with inputs taken from two classes of the MNIST and CIFAR-10 datasets (third and fourth plot respectively). The rest of the setup is similar to that of Figure \ref{fig:rf_old_act}.}
 \label{fig:ntk_old_act}
 %%\vspace{-1em}
\end{figure}

%Our main result on the NTK follows, and is proved in Appendix \ref{proof:ntk}.
\begin{theorem}\label{thm:ntk}
    Let Assumptions \ref{ass:datadist}, \ref{ass:activationfuncntk}, \ref{ass:overparamntk} and \ref{ass:dlargentk} hold. Then, we have
    \begin{equation}\label{eq:NTKs}
           \mathcal S_{\textup{NTK}}(z)  = \bigO{\log k \left( 1 + \sqrt{\frac{N}{k}} \right) \sqrt{\frac{N}{k}}},
    \end{equation}
    with probability at least $1 - N e^{-c \log^2 k} - e^{-c \log^2 N}$ over $X$ and $w_0$.\\
    Furthermore, when $N = \bigO{k}$, \eqref{eq:NTKs} simplifies to 
    \begin{equation}\label{eq:upboundS}
           \mathcal S_{\textup{NTK}}(z) = \bigO{\log k \sqrt{\frac{Nd}{p}}},
    \end{equation}
    where $p = 2dk$ denotes the number of parameters of the model.
\end{theorem}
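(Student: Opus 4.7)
The plan is to leverage the block structure of the NTK feature map to reduce the sensitivity to a single spectral bound on a cross-matrix, and then to control that matrix through a two-piece decomposition whose ingredients all admit tight concentration thanks to the evenness of $\phi$.

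First, from \eqref{eq:ntkfeaturemap} I would use $\fntk(x)=[\fntk'(x),-\fntk'(x)]$ with $\fntk'(x)=x\otimes\phi'(W^1_0x)$ to deduce $\kntk=2K'$ for $K':=\fntk'(\fntk')^\top$, and $\nabla_z\fntk(z)^\top\fntk^\top=2\nabla_z\fntk'(z)^\top(\fntk')^\top$. The sensitivity \eqref{eq:sensNTK} then simplifies to
\begin{equation*}
\mathcal S_{\textup{NTK}}(z)=\sqrt d\,\|B(z)(K')^{-1}Y\|_2,\qquad B(z):=\nabla_z\fntk'(z)^\top(\fntk')^\top\in\R^{d\times N},
\end{equation*}
and submultiplicativity together with $\|Y\|_2=\Theta(\sqrt N)$ reduces the task to bounding $\|B(z)\|_{\op}/\lambda_{\min}(K')$. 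For the denominator, Assumption \ref{ass:overparamntk} places us in the regime of \cite{bombari2022memorization}, whose NTK spectral analysis gives $\lambda_{\min}(\kntk)\geq c\,kd$ (hence $\lambda_{\min}(K')\geq c\,kd/2$) with probability $1-e^{-c\log^2 N}$, thereby also confirming that $K'$ is invertible.

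The crux is to bound $\|B(z)\|_{\op}$. A direct chain-rule computation on the $(l,i)$-entry of $B(z)$ yields the splitting $B(z)=B^{(1)}(z)+B^{(2)}(z)$ with
\begin{equation*}
B^{(1)}(z)=X^\top\diag\bigl(\psi(z,x_i)\bigr),\quad B^{(2)}(z)=(W^1_0)^\top\diag(\phi''(W^1_0z))\,\phi'(W^1_0X^\top)\,\diag(Xz),
\end{equation*}
where $\psi(z,x):=\phi'(W^1_0z)^\top\phi'(W^1_0x)$. For $B^{(1)}$ I would combine $\|X^\top\|_{\op}\lesssim\sqrt d+\sqrt N$ (Lipschitz concentration from Assumption \ref{ass:datadist}) with the estimate $\max_i|\psi(z,x_i)|\lesssim\sqrt k\,\log k$. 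The latter is where the evenness of $\phi$ becomes essential: since $\phi'$ is then odd, $\E_{\rho\sim\mathcal N(0,1)}[\phi'(\rho)]=0$, and a Hermite expansion yields $\E_W[\psi(z,x_i)]=\mathcal O(k|\langle z,x_i\rangle|/d)$, leaving only the $\mathcal O(\sqrt k)$ Bernstein fluctuation to control via a union bound over $i$ (producing exactly the $Ne^{-c\log^2 k}$ failure probability). Without this cancellation $\E\psi$ would be $\Theta(k)$ and the argument would overshoot by a factor of $\sqrt k$. For $B^{(2)}$, I would use $\|W^1_0\|_{\op}\lesssim 1$ (Assumption \ref{ass:dlargentk}), $\|\phi''\|_\infty\leq L$ (Assumption \ref{ass:activationfuncntk}), $\|\phi'(W^1_0X^\top)\|_{\op}\lesssim\sqrt k+\sqrt N$ up to logarithmic factors (random-feature concentration, again exploiting $\E[\phi']=0$ for centering), and $\max_i|\langle x_i,z\rangle|\lesssim\sqrt d\,\log k$ (the inner product is sub-Gaussian with parameter $\sqrt d$, since $x\mapsto\langle x,z\rangle$ is $\sqrt d$-Lipschitz).

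Tracking the logarithmic factors, these four steps combine to give $\|B(z)\|_{\op}\lesssim\log k\,\sqrt{kd}\,(1+\sqrt{N/k})$, where the bound $k=\mathcal O(d)$ from Assumption \ref{ass:dlargentk} is used to consolidate $\sqrt d(\sqrt k+\sqrt N)\leq\sqrt{kd}(1+\sqrt{N/k})$. Plugging into the sensitivity bound yields
\begin{equation*}
\mathcal S_{\textup{NTK}}(z)\lesssim\sqrt d\cdot\frac{\log k\,\sqrt{kd}\,(1+\sqrt{N/k})}{kd}\cdot\sqrt N=\log k\,(1+\sqrt{N/k})\sqrt{N/k},
\end{equation*}
which is \eqref{eq:NTKs}; specialization to $N=\mathcal O(k)$ with $p=2kd$ gives \eqref{eq:upboundS}. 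The principal obstacle will be the Bernstein-type bound on $\max_i|\psi(z,x_i)|$: $\psi$ is a sum of $k$ products of correlated sub-Gaussian random variables whose mean is $\mathcal O(k/d)$ only by virtue of the odd-$\phi'$ cancellation, and whose sub-exponential tails (arising because $\phi'$ is only Lipschitz, not bounded) force a careful truncation so that the union bound over $i\in[N]$ produces the stated $Ne^{-c\log^2 k}$ failure probability.
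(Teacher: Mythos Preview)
Your proposal is correct and follows essentially the same route as the paper: the same submultiplicative bound $\mathcal S_{\textup{NTK}}(z)\leq\sqrt d\,\|B(z)\|_{\op}\,\lambda_{\min}(K')^{-1}\|Y\|_2$, the same chain-rule splitting $B(z)=B^{(1)}+B^{(2)}$, the same smallest-eigenvalue estimate from \cite{bombari2022memorization}, and the same term-by-term operator-norm control. The only variation is in bounding $\max_i|\psi(z,x_i)|$: you propose Hermite expansion plus a Bernstein fluctuation bound, whereas the paper projects $z$ onto $x_i^\perp$ so that $\phi'(Wx_i)$ and $\phi'(Wz')$ become independent mean-zero sub-Gaussian vectors; both arguments hinge on $\E_\rho[\phi'(\rho)]=0$ (from the evenness of $\phi$) and yield the same $\mathcal O(\sqrt k\log k)$ estimate with the same failure probability.
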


\noindent
\emph{Proof sketch.} As for the RF model, we crucially separate the contribution of the interaction matrix and of the kernel of the model. Specifically, we upper bound the RHS of \eqref{eq:sensNTK} with
\begin{equation}\label{eq:ubNTK}
 \norm{z}_2   \opnorm{\nabla_z \fntk(z)^\top \fntk^\top} \opnorm{\kntk^{-1}} \norm{Y}_2.
\end{equation}
Now, each term in \eqref{eq:ubNTK} is treated separately. First, by assumption, $\norm{Y}_2 = \Theta(\sqrt{N})$. Next, we have that $\opnorm{\kntk^{-1}} = \lambda_{\min}^{-1}(\kntk) = \bigO{(dk)^{-1}}$, which can be deduced from \cite{bombari2022memorization}, see Lemma \ref{lemma:evminntk}. Finally, the bound on $\opnorm{\nabla_z \fntk(z)^\top \fntk^\top}$ is computed explicitly through an application of the chain-rule (see Lemma \ref{lemma:opnormIntk}), and it critically depends on the data being 0-mean and on the activation being even.
\qed

In words, Theorem \ref{thm:ntk} shows that, for even activations and $k$ scaling super-linearly in $N$, $\mathcal S_{\textup{NTK}}(z) = \bigO{1}$, namely, the NTK model is robust against adversarial perturbations. A few remarks are now in order.

%This result, differently from the main result in the previous section (Theorem \ref{thm:rfnorobust}), is an \emph{upper bound} on the sensitivity.Furthermore, in the regime where $k = \tilde \Omega(N)$, the sensitivity is $\mathcal S_{\textup{NTK}}(z) = \bigO{1}$, \ie the model is provably robust.

%\vspace{-.75em}

% \begin{figure}[!t]
%   \begin{center}
%     \includegraphics[width=0.49\textwidth]{}
%   \end{center}
%   %\vspace{-1em}
%  \caption{Sensitivity as a function of the number of parameters $p=dk$, for an NTK model with data taken from two classes of the MNIST (left) and CIFAR-10 (right) datasets. We pre-process the data normalizing the pixel values from 0 to 1. Different curves correspond to different values of the sample size $N\in \{400, 800, 1600\}$ and to different activations ($\phi(x) = \tanh(x)$ or $\phi(x) = x^2$). We plot an average over $10$ independent trials and the confidence interval at $1$ standard deviation. Again, $\phi(x)=x^2$ leads to a model which is more robust (or, equivalently, less sensitive) than $\phi(x)=\tanh(x)$.}
%   \label{fig:ntkreal}
%  %%\vspace{-1em}
% \end{figure}

\vspace{-0.5em}

\paragraph{Saturating the lower bound from the universal law of robustness.} We highlight that our upper bound on the sensitivity in \eqref{eq:upboundS} exhibits the same scaling as the lower bound by \cite{bubeck2021a}, thus saturating the universal law of robustness. A lower bound on the sensitivity of the same order is also provided by \cite{dohmatob}, albeit restricted to the setting in which $k=\Theta(d)$ and $k=\bigO{N}$. Note that Theorem \ref{thm:ntk} upper bounds the sensitivity of the model obtained by running gradient descent (over $\theta$) on $f_{\textup{NTK}}(x, \theta)$. It is well known, see e.g. \cite{chizat2019lazy}, that a similar solution is obtained by running gradient descent (over $w$) on $f_{\textup{NN}}(x, w)$. This result holds after suitably rescaling the initialization of $f_{\textup{NN}}(x, w)$, and the rescaling does not affect our Theorem \ref{thm:ntk}. As a result, Theorem \ref{thm:ntk} proves that a class of two-layer neural networks has sensitivity of order $\sqrt{Nd/p}$ (up to logarithmic factors), thus resolving in the affirmative Conjecture 2 of \cite{bubeck2021law}.

%\vspace{-.75em}

\vspace{-0.5em}
%Furthermore, for a suitable rescaling of the initialization, the trajectories of gradient descent applied to $f_{\textup{NN}}(x, w)$ and to $f_{\textup{NTK}}(x, \theta)$ are close . Therefore, 

\paragraph{Impact of the activation function and numerical results.} Theorem \ref{thm:ntk} applies to even activation functions. At the technical level, the symmetry in $\phi$ directly ``centers'' the kernel in the expression \eqref{eq:sensNTK} of the sensitivity, which largely simplifies our argument. We conjecture that this symmetry may in fact be fundamental to guarantee the robustness of the model. In fact, the first plot of Figure \ref{fig:ntk_old_act} shows that, for $\phi(x)=x^2$, the sensitivity keeps decreasing, as the number of neurons $k$ (and, therefore, the number of parameters $p$) grows. In contrast, for $\phi(x)=\tanh(x)$, the sensitivity plateaus at a value which is an order of magnitude larger than for $\phi(x)=x^2$ (see the second plot of Figure \ref{fig:ntk_old_act}).
This difference is still visible even for real-world datasets (MNIST, CIFAR-10), as displayed in the third and fourth plot of Figure \ref{fig:ntk_old_act}.% By comparing Figures \ref{fig:rfreal} and \ref{fig:ntkreal}, it is also clear that NTK model is remarkably more robust than the RF one. \marco{check this is still correct}

\vspace{-0.5em}

%We also notewe verify our claimed result, for synthetic data. Always in Figure \ref{fig:ntksyn} (left) we see the sensitivity for a non-even activation function. As we can see, the sensitivity saturates for large values of $k$, and doesn't decay. This suggests that a \enquote{centered} kernel is indeed necessary to achieve robustness. 

\vspace{-0.5em}

\section{Conclusions}

\vspace{-0.5em}

% \shayan{here i post random questions that might be of interest...
% reviewer X: what is the relation betwenn ur notion of robustness and the one in hamed, namely robust generalization. which one is better(accurate?) why this one.
% reviewer Y: u are considering intepolation regime. is this actually the true regime when practitioners train a neural network for a "robustness critical" task, or early stopping regime?
% reviewer Z: a table could be enlightening to show regimes of interest vs robustness possibility in your work, dohmatob works and bubeck ones ... another close question is what is exactly over-parameterization regime ...
% reviewer T: }

Our paper provides a precise and quantitative characterization of how the robustness of the solution obtained via empirical risk minimization depends on the model at hand: for random features and non-even activations, over-parameterization does not help, and we provide bounds tighter  than the universal law of robustness proposed by \cite{bubeck2021a}; for NTK regression and even activations, the model is robust as soon as $p > dN$, i.e., the universal law of robustness is saturated. Numerical results on synthetic and standard datasets confirm the impact of the model on the robustness of the solution. The present contribution focuses on empirical risk minimization, which represents the cornerstone of deep learning training algorithms. %Despite its target is not explicitely to maximize the robustness of the trained model, characterizing its behavior from this perspective is a necessary effort in the field. 
In contrast, a number of recent papers has focused on adversarial training, considering the trade-off between generalization and robust error, see e.g. \cite{raghunathan2019adversarial,zhang2019theoretically,dobriban2020provable,carmon2019unlabeled,min2021curious,hamedadversarial} and references therein. We conclude by mentioning that the technical tools developed in this work could be applied also to the models deriving from adversarial training, and we leave such an analysis as future work.

\vspace{-0.5em}

%\shayan{maybe worth to hype the fact that appropriately choosing the activation function is vital(hahaha) to robustness. It can also bold the "practical" side of the paper. Also idk how much experiments we have in this regard. But this is sth that can be pushed through experiments too. Maybe with a small section in appendix or whatsoever. ofc all of this depends on how expressive are the experiments in this regard.}

% A number of papers has considered the trade-off between generalization and robust error, see e.g. \cite{tsipras2018robustness,raghunathan2019adversarial,zhang2019theoretically,carmon2019unlabeled,min2021curious}, with \cite{hamedadversarial} providing exact asymptotic results in the regimes in $d, N, k$ all scale linearly. While the present contribution focuses on ERM, an interesting next step is to consider adversarial training going beyond the linear regime considered in \cite{hamedadversarial}. 

% \simone{These results refer to the settings where standard ERM. This training method is widely used for many applications. Thus, it's an interesting question to explore the robustness limits of ML models in this setting, as it also transfers to different algorithmic settings, where methods like adversarial training or early stopping are deployed. -- where should we place this? Does it interrupt the flow of the introduction?}
% \shayan{what is ur final message? is over-parameterization good or bad for robustness? how do u place this work on the ongoing debate of effect of overparametrization on robustnesss? does ur work add any new insight?}

\section*{Acknowledgements}
\vspace{-0.5em}

Simone Bombari and Marco Mondelli were partially supported by the 2019 Lopez-Loreta prize, and the authors would like to thank Hamed Hassani for helpful discussions.

{
\small

\bibliographystyle{plain}
\bibliography{bibliography.bib}

}

\newpage

\appendix

\section{Additional Notations and Remarks}\label{app:notation}
Given a sub-exponential random variable $X$, let $\|X\|_{\psi_1} = \inf \{ t>0 \,\,: \,\,\mathbb E[\exp(|X|/t)] \le 2 \}$.
Similarly, for a sub-Gaussian random variable, let $\|X\|_{\psi_2} = \inf \{ t>0 \,\,: \,\,\mathbb E[\exp(X^2/t^2)] \le 2 \}$.
We use the analogous definitions for vectors. In particular, let $X \in \mathbb R^n$ be a random vector, then $\subGnorm{X} := \sup_{\norm{u}_2=1} \subGnorm{u^\top X}$ and $\subEnorm{X} := \sup_{\norm{u}_2=1} \subEnorm{u^\top X}$. Notice that if a vector has independent, mean 0, sub-Gaussian (sub-exponential) entries, then its sub-Gaussian (sub-exponential). This is a direct consequence of Hoeffding's inequality and Bernstein's inequality (see Theorems 2.6.3 and 2.8.2 in \cite{vershynin2018high}).
It is useful to also define the generic Orlicz norm of order $\alpha$ of a real random variable $X$ as 
\begin{equation}\label{eq:orlicz}
    \norm{X}_{\psi_\alpha} := \inf \{ t>0 \,\,: \,\,\mathbb E[\exp(|X|^\alpha/t^\alpha)] \le 2 \}.
\end{equation}
From this definition, it follows that $\norm{|X|^\gamma}_{\psi_{\alpha / \gamma}} = \norm{X}_{\psi_{\alpha}}^\gamma$. %This is useful when one is interested in computing the the tail norms of $\gamma$-powers of Gaussian distributions ($\alpha = 2$).
Finally, we recall the property that if $X$ and $Y$ are scalar random variables, we have $\subEnorm{XY} \leq \subGnorm{X} \subGnorm{Y}$.

We say that a random variable respects the Lipschitz concentration property if there exists an absolute constant $c > 0$ such that, for every Lipschitz continuous function $\varphi: \RR^d \to \RR$, we have $\E |\varphi(X)| < + \infty$, and for all $t>0$,
\begin{equation}\label{eq:deflipconc}
    \PP\left(\abs{\varphi(x)- \E_X [\varphi(x)]}>t\right) \leq 2e^{-ct^2 / \norm{\varphi}_{\Lip}^2}.
\end{equation}

When we state that a random variable or vector $X$ is sub-Gaussian (or sub-exponential), we implicitly mean $\subGnorm{X} = \bigO{1}$, i.e. it doesn't increase with the scalings of the problem. Notice that, if $X$ is Lipschitz concentrated, then $X - \E[X]$ is sub-Gaussian.
If $X \in \R$ is sub-Gaussian and $\varphi: \R \to \R$ is Lipschitz, we have that $\varphi(X)$ is sub-Gaussian as well. Also, if a random variable is sub-Gaussian or sub-exponential, its $p$-th momentum is upper bounded by a constant (that might depend on $p$). 

In general, we indicate with $C$ and $c$ absolute, strictly positive, numerical constants, that do not depend on the scalings of the problem, i.e. input dimension, number of neurons, or number of training samples. Their value may change from line to line.

Given a matrix $A$, we indicate with $A_{i:}$ its $i$-th row, and with $A_{:j}$ its $j$-th column. Given two matrices $A, B\in \R^{m\times n}$, we denote by $A \circ B$ their Hadamard product, and by $A \ast B=[(A_{1:}\otimes B_{1:}),\ldots,(A_{m:}\otimes B_{m:})]^\top \in\RR^{m\times n^2}$ their row-wise Kronecker product (also known as Khatri-Rao product). We denote $A^{*2} = A \ast A$. We say that a matrix $A\in \R^{n \times n}$ is positive semi definite (p.s.d.) if it's symmetric and for every vector $v \in \R^n$ we have $v^\top A v \geq 0$.

\section{Useful Lemmas}

\begin{lemma}\label{lemma:hammer}
        Let $Z$ be a matrix with rows $z_i \in \mathbb R^{n}$, for $i\in [N]$, such that $N = o(n / \log^4 n)$. Assume that $\{z_i\}_{i\in [N]}$ are independent sub-exponential random vectors with $\psi = \max_i \subEnorm{z_i}$. Let $\eta_{\textup{min}} = \min_i \norm{z_i}_2$ and $\eta_{\textup{max}} = \max_i \norm{z_i}_2$. Set $\xi = \psi K + K'$, and
	$\Delta := C \xi^2 N^{1/4} n^{3/4}$. Then, we have that
	\begin{equation}\label{eq:hm}
		\evmin{ZZ^\top } \geq \eta_{\textup{min}}^2 - \Delta, \qquad \evmax{ZZ^\top } \leq \eta_{\textup{max}}^2 + \Delta
	\end{equation}
	holds with probability at least
	\begin{equation}
	1 -  \exp \left( -cK \sqrt{N} \log \left( \frac{2 n}{N} \right) \right) - \mathbb{P}\left(\eta_{\text{max}} \geq K' \sqrt{n}\right),
	\end{equation}
	where $c$ and $C$ are numerical constants.
\end{lemma}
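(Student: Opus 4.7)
The plan is to decompose $ZZ^\top = D + E$, where $D := \diag(\norm{z_1}_2^2, \ldots, \norm{z_N}_2^2)$ collects the diagonal squared norms and $E$ is the off-diagonal part, with entries $E_{ij}=\langle z_i,z_j\rangle$ for $i\neq j$ and zero diagonal. Since $\lambda_{\min}(D)=\eta_{\min}^2$ and $\lambda_{\max}(D)=\eta_{\max}^2$, Weyl's inequality reduces both bounds in~\eqref{eq:hm} to a single high-probability bound $\opnorm{E}\leq\Delta$.

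First, I would truncate by working on the event $\mathcal{E}:=\{\eta_{\max}\leq K'\sqrt{n}\}$; its complement contributes exactly the $\P(\eta_{\max}\geq K'\sqrt{n})$ summand in the failure probability. On $\mathcal{E}$ each $\norm{z_i}_2\leq K'\sqrt{n}$ while $\subEnorm{z_i}\leq\psi$, so for every $i\neq j$ the scalar $\langle z_i,z_j\rangle$ is, conditionally on $z_j$, sub-exponential with parameter at most $\psi\norm{z_j}_2\lesssim \xi\sqrt{n}$.

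To bound $\opnorm{E}$, I would run an $\epsilon$-net argument. Fix a $1/2$-net $\mathcal{N}$ of $S^{N-1}$ with $|\mathcal{N}|\leq 9^N$; then $\opnorm{E}\leq 4\max_{v\in\mathcal{N}}|v^\top E v|$. For fixed $v$, the quadratic form $v^\top E v=\sum_{i\neq j}v_iv_j\langle z_i,z_j\rangle$ can be decoupled and then, by conditioning on one half of the $\{z_j\}$, rewritten as a sum of conditionally independent sub-exponential scalars. Applying Bernstein's inequality to this conditional sum, together with an outer tail bound on $\norm{\sum_j v_j z_j}_2$ (via the Lipschitz concentration of sub-exponentials, using $\subEnorm{z_i}\leq\psi$ and the truncation), should yield a two-regime tail bound of the form $\P(|v^\top E v|\geq t)\leq 2\exp(-c\min(t^2/(\xi^4 Nn),\, t/(\xi^2\sqrt{Nn})))$.

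The main obstacle is obtaining both the sharp $\xi^2 N^{1/4}n^{3/4}$ rate and the precise $\exp(-cK\sqrt{N}\log(2n/N))$ failure probability. The ``$1/4$ vs.\ $3/4$'' exponent split in $\Delta$ flags the sub-exponential regime of the above Bernstein tail as the active one, and $t=C\xi^2 N^{1/4}n^{3/4}$ is precisely the threshold at which that exponent begins to dominate the $N\log 9$ cost of a single-resolution net. Tightening the union bound further to $\sqrt{N}\log(2n/N)$ (rather than $N$), together with the linear $K$-scaling in the exponent, would come from replacing the fixed net by a Dudley-type chaining / multi-scale covering of $S^{N-1}$ and from propagating the $K$-dependence through the sub-exponential tail constant. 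Combined with Weyl's inequality, the resulting estimate $\opnorm{E}\leq\Delta$ yields~\eqref{eq:hm}.
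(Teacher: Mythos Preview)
Your reduction is the same as the paper's: writing $ZZ^\top=D+E$ and invoking Weyl is exactly the statement $\lambda_{\min}(ZZ^\top)\geq\eta_{\min}^2-B^2$, $\lambda_{\max}(ZZ^\top)\leq\eta_{\max}^2+B^2$ in the paper, since for symmetric $E$ one has $\|E\|_{\op}=\sup_{\|u\|_2=1}|u^\top Eu|=B^2$. The divergence is in how $B^2=\|E\|_{\op}$ is controlled. The paper does not prove this from scratch; it simply invokes Theorem~3.2 of the reference labelled \texttt{[hammer]} (an Adamczak--Litvak--Pajor--Tomczak-Jaegermann--type result on quadratic forms in heavy-tailed vectors), plugging in $r=1$, $m=N$, and $\theta=(N/n)^{1/4}$. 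The specific rate $N^{1/4}n^{3/4}$ and the failure probability $\exp(-cK\sqrt{N}\log(2n/N))$ are outputs of that theorem, not something derived in the paper.

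Your attempt to reprove that theorem has two concrete gaps. First, the step ``outer tail bound on $\|\sum_j v_j z_j\|_2$ via the Lipschitz concentration of sub-exponentials'' is not justified: generic sub-exponential vectors do \emph{not} satisfy Lipschitz concentration (that holds for Gaussian, log-concave, or bounded ensembles), so you cannot control $\|\sum_j v_j z_j\|_2$ this way without additional structure. Second, as you yourself note, a single-resolution net is too coarse: with your claimed pointwise tail, at $t=C\xi^2N^{1/4}n^{3/4}$ the sub-exponential regime gives an exponent of order $(n/N)^{1/4}$, which cannot absorb a $9^N$ union bound under the sole hypothesis $N=o(n/\log^4 n)$. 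The ``Dudley-type chaining'' you invoke to repair this is precisely the nontrivial content of the cited theorem; gesturing at it does not constitute a proof. The clean fix is to do what the paper does and cite the result directly.
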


\begin{proof}
    Following the notation in \cite{hammer}, we define
    \begin{equation}
        B := \sup_{u\in \mathbb R^N : \norm{u}_2=1} \left| \norm{\sum_{i=1}^N u_i z_i}_2^2 - \sum_{i=1}^N u_i^2 \norm{z_i}_2^2 \right|^\frac{1}{2}.
    \end{equation}
    Then, for any $u\in \mathbb R^N$ with unit norm, we have that
    \begin{equation}
    \begin{split}
        \norm{Z u}_2^2 &= \norm{\sum_{i=1}^N u_i z_i}_2^2-\sum_{i=1}^N u_i^2 \norm{z_i}_2^2 + \sum_{i=1}^N u_i^2 \norm{z_i}_2^2  \ge \min_i\norm{z_i}_2^2 - B^2,
    \end{split}
    \end{equation}
    which implies that
    \begin{equation}\label{eq:Beig}
        \evmin{ZZ^\top}= \inf_{u\in \mathbb R^N : \norm{u}_2=1}\norm{Z u}_2^2\geq \min_{i} \norm{z_i}_2^2 - B^2.
    \end{equation}
    In the same way, it can also be proven that
    \begin{equation}\label{eq:Beig2}
        \evmax{ZZ^\top} \leq \max_{i} \norm{z_i}_2^2 + B^2.
    \end{equation}
    In our case, $z_i \in \R ^n$. In the statement of Theorem 3.2 of \cite{hammer}, let's fix $r=1$, $m=N$, and $\theta = (N / n)^{1/4} < 1/4$. Then, we have that the condition required to apply Theorem 3.2 is satisfied, \ie,
    \begin{equation}
        N \log^2 \left(2 \sqrt[4]{\frac{n}{N}}\right) \leq \sqrt{N n}.
    \end{equation}
    By combining \eqref{eq:Beig} and \eqref{eq:Beig2} with the upper bound on $B$ given by Theorem 3.2 of \cite{hammer}, the desired result readily follows.
\end{proof}

\begin{lemma}\label{lemma:opnormlog}
Let $\{x_i\}_{i =1}^N$ be random variables in $\R$ such that they all respect
\begin{equation}
    \P(|x_i| > t) < 2 e^{-ct^\gamma}.
\end{equation}
Then, we have
\begin{equation}
    \max_{i} |x_i| \leq \log^{2/\gamma}{N},
\end{equation}
with probability at least $1 - 2 \exp (- c \log ^2 N)$, where $c$ is a numerical constant.
\end{lemma}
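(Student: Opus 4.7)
The plan is to apply a straightforward union bound followed by an absorption of the factor $N$ into the exponent. Given the tail bound $\mathbb{P}(|x_i| > t) < 2e^{-ct^\gamma}$, the natural choice is to set the threshold $t = \log^{2/\gamma} N$, so that $t^\gamma = \log^2 N$ and the single-variable tail becomes $2\exp(-c\log^2 N)$.

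First, I would write
\begin{equation*}
    \mathbb{P}\!\left(\max_{i \in [N]} |x_i| > \log^{2/\gamma} N\right) \le \sum_{i=1}^N \mathbb{P}\!\left(|x_i| > \log^{2/\gamma} N\right) < 2N \exp(-c \log^2 N),
\end{equation*}
where the first inequality is the union bound and the second uses the hypothesis applied with $t = \log^{2/\gamma} N$. Note that no assumption of independence between the $x_i$ is required, since the union bound holds for arbitrary joint distributions.

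Next, I would rewrite the right-hand side as $2\exp(\log N - c\log^2 N)$ and absorb the linear term $\log N$ into the quadratic one. Concretely, for $N$ large enough, $\log N \le (c/2) \log^2 N$, so
\begin{equation*}
    2 \exp(\log N - c \log^2 N) \le 2 \exp\!\left(-\tfrac{c}{2} \log^2 N\right).
\end{equation*}
Renaming the constant (as is the convention stated in Appendix~\ref{app:notation}, the value of $c$ may change from line to line), this yields the desired probability bound $2\exp(-c \log^2 N)$ and completes the argument.

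There is really no substantive obstacle here; the only subtlety is to notice that the factor of $N$ introduced by the union bound is harmlessly absorbed because the exponent $\log^2 N$ strictly dominates $\log N$ for large $N$, which is consistent with the paper's convention that all complexity notations are understood for sufficiently large $N$.
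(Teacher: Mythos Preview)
Your proof is correct and essentially identical to the paper's own argument: both set $t=\log^{2/\gamma}N$, apply a union bound to get $2N\exp(-c\log^2 N)=2\exp(\log N - c\log^2 N)$, and then absorb the $\log N$ term into the quadratic exponent by adjusting the constant.
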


\begin{proof}
For any $i\in [N]$,
\begin{equation}
	\P(|x_i| > \log^{2/\gamma} N) < 2 \exp (-c \log^2 N),
\end{equation}
which gives
\begin{equation}
    \begin{aligned}
        \P(\max_{i} |x_{i}| > \log^{2/\gamma} N) < N \P(|x_1| > \log^{2/\gamma} N) < 2 \exp (\log N -c \log^2 N) <  2 \exp (- c_1 \log^2 N),
    \end{aligned}
\end{equation}
where the second step is obtained through a union bound. This gives the desired result.
\end{proof}

\begin{lemma}\label{lemma:HW}
    Let $z \sim P_Z$ be a mean-0 and Lipschitz concentrated random vector (see \eqref{eq:deflipconc}). % \marcochecked{in case you do not define Lipschitz concentration explicitly (in a separate assumption), refer to the corresponding assumption}. 
    Consider
	\begin{equation}
		\Gamma(z) = z \otimes z - \E_z \left[ z \otimes z \right].
	\end{equation}
	Then,
	\begin{equation}
		\norm{\Gamma(z)}_{\psi_1} < C.
	\end{equation}
	where $C$ is a numerical constant.
\end{lemma}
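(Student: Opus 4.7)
The plan is to recognize the stated bound as a Hanson--Wright-type inequality applied to the Lipschitz-concentrated random vector $z$, and to reduce the vector-valued sub-exponential norm of $\Gamma(z)$ to a uniform bound over scalar quadratic forms.

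First, I would unpack the definition of the sub-exponential norm of a random vector as given at the start of Appendix \ref{app:notation}:
\begin{equation*}
\norm{\Gamma(z)}_{\psi_1} \;=\; \sup_{u\in\mathbb R^{d^2},\,\|u\|_2=1} \norm{u^\top \Gamma(z)}_{\psi_1}.
\end{equation*}
For any unit $u\in\mathbb R^{d^2}$, identify it with its matricization $U\in\mathbb R^{d\times d}$, so that $\|U\|_F=\|u\|_2=1$ and $u^\top(z\otimes z)=\sum_{i,j}U_{ij}z_iz_j=z^\top U z$. Hence
\begin{equation*}
u^\top\Gamma(z) \;=\; z^\top U z - \E[z^\top U z].
\end{equation*}
Since $z^\top U z=z^\top U^\top z$, replacing $U$ by its symmetric part $(U+U^\top)/2$ leaves this quadratic form unchanged and does not increase its Frobenius or operator norm; I may therefore assume WLOG that $U$ is symmetric with $\|U\|_F\le 1$ and $\|U\|_{\op}\le\|U\|_F\le 1$.

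Second, I would invoke a Hanson--Wright inequality for random vectors with the convex concentration property. Lipschitz concentration (Assumption~\ref{ass:datadist}(3), restated in \eqref{eq:deflipconc}) trivially implies convex concentration by restricting \eqref{eq:deflipconc} to convex 1-Lipschitz test functions $\varphi$. Applying Adamczak's Hanson--Wright bound for convexly concentrated vectors then gives
\begin{equation*}
\P\bigl(\bigl|z^\top U z - \E[z^\top U z]\bigr| > t\bigr) \;\le\; 2\exp\!\Bigl(-c\min\!\bigl(t^2/\|U\|_F^2,\;t/\|U\|_{\op}\bigr)\Bigr) \;\le\; 2\exp(-c\min(t^2,t)),
\end{equation*}
where $c>0$ depends only on the Lipschitz-concentration constant of $z$. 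Integrating this mixed Gaussian/exponential tail in the standard way (using the definition \eqref{eq:orlicz} of $\|\cdot\|_{\psi_1}$) yields $\norm{z^\top U z-\E[z^\top U z]}_{\psi_1}\le C$ with an absolute constant $C$. Taking the supremum over unit $u\in\mathbb R^{d^2}$, which has been absorbed into the bound since the right-hand side is independent of $U$, gives the claimed $\norm{\Gamma(z)}_{\psi_1}<C$.

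The main obstacle is the Hanson--Wright step itself: the classical Hanson--Wright inequality presumes independent sub-Gaussian coordinates, whereas here $z$ need only satisfy Lipschitz concentration and its coordinates can be heavily dependent (e.g.\ uniform on the sphere, or GAN-generated data as in \cite{seddik2020random}). Adamczak's extension handles this via symmetrization and decoupling combined with Gaussian chaos bounds under the convex concentration assumption. If one prefers a self-contained argument rather than citing this result, one can decompose $U=U_+-U_-$ into its positive and negative spectral parts and control each piece using the fact that $z\mapsto\|U_\pm^{1/2}z\|_2$ is $\|U_\pm\|_{\op}^{1/2}$-Lipschitz, but closing the loop so that the final bound depends on $\|U\|_F$ rather than on the nuclear norm of $U$ requires re-running Adamczak's decoupling argument, which is the genuinely technical step.
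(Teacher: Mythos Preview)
Your proposal is correct and follows essentially the same route as the paper: reduce $\norm{\Gamma(z)}_{\psi_1}$ to a uniform bound on $\subEnorm{z^\top U z-\E[z^\top U z]}$ over $\|U\|_F=1$, then apply Adamczak's Hanson--Wright inequality for (convex/Lipschitz) concentrated vectors and convert the mixed tail into a $\psi_1$ bound using $\|U\|_{\op}\le\|U\|_F$. The paper's proof is slightly terser (it cites Theorem~2.3 of \cite{HWconvex} and a tail-to-norm lemma directly), while you add the symmetrization of $U$ and the remark that Lipschitz concentration implies convex concentration, but these are cosmetic differences rather than a different approach.
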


\begin{proof}
	We have that
        \begin{equation}
		\norm{\Gamma(z)}_{\psi_1} = \sup_{\norm{u}_2 = 1} \norm{u^\top \Gamma(z)}_{\psi_1} = \sup_{\norm{U}_F = 1} \subEnorm{z^\top U z - \E_x \left[ z^\top U z \right]}.
	\end{equation}
	Since $z$ is mean-0 and Lipschitz concentrated,  %\marcochecked{same as above}, 
	we can apply the version of the Hanson-Wright inequality given by Theorem 2.3 in \cite{HWconvex}:
	\begin{equation}
		\P \left(|z^\top U z - \E_x \left[ z^\top U z \right]| > t \right) < 2 \exp \left( -\frac{1}{C_1} \min \left( \frac{t^2}{\norm{U}^2_F}, \frac{t}{\opnorm{U}} \right) \right),
	\end{equation}
	where $C_1$ is a numerical constant. Thus, by Lemma 5.5 of \cite{theoreticalinsghts}, and using $\norm{U}_F \geq \opnorm{U}$, we conclude that
	\begin{equation}
		\norm{\Gamma(z)}_{\psi_1} < C \norm{U}_F = C,
	\end{equation}
	for some numerical constant $C$, which gives the desired result.
\end{proof}

\begin{lemma}\label{lemma:opnormcovariance}
Let $z\in \R^{d}$ be a sub-Gaussian vector such that $\norm{z}_{\psi_2} = K$. Set $\Sigma = \E \left[ zz^\top \right]$.  Then,
\begin{equation}
    \opnorm{\Sigma} \leq C K^2,
\end{equation}
where $C$ is a numerical constant.
\end{lemma}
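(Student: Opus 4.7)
The plan is to combine the variational characterization of the operator norm with the standard equivalence between the sub-Gaussian norm of a scalar random variable and the tail bounds on its second moment.

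First, I would use that, since $\Sigma = \E[zz^\top]$ is positive semi-definite, its operator norm coincides with its largest eigenvalue, which admits the Rayleigh-quotient representation
\begin{equation}
\opnorm{\Sigma} \;=\; \sup_{\norm{u}_2 = 1} u^\top \Sigma u \;=\; \sup_{\norm{u}_2 = 1} \E\!\left[(u^\top z)^2\right].
\end{equation}
Thus the claim reduces to bounding the second moment of the scalar random variable $u^\top z$ uniformly over the unit sphere.

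Next, I would invoke the definition of the sub-Gaussian norm of a random vector recalled in Appendix \ref{app:notation}: by assumption,
\begin{equation}
\sup_{\norm{u}_2 = 1} \subGnorm{u^\top z} \;=\; \subGnorm{z} \;=\; K.
\end{equation}
It is a standard fact that, for any scalar sub-Gaussian random variable $X$, the second moment is controlled by the square of the Orlicz norm, namely $\E[X^2] \leq C \subGnorm{X}^2$ for an absolute constant $C$ (this follows directly from the definition \eqref{eq:orlicz} with $\alpha = 2$, since $\E[\exp(X^2/\subGnorm{X}^2)] \leq 2$ implies in particular $\E[X^2] \leq C\subGnorm{X}^2$ by Jensen or by integrating tails). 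Applying this to $X = u^\top z$ yields $\E[(u^\top z)^2] \leq C K^2$ uniformly in $u$.

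Combining the two displays gives $\opnorm{\Sigma} \leq C K^2$, as desired. There is no serious obstacle here: the argument is a one-line consequence of the variational formula plus the Orlicz-norm bound on the second moment, and no concentration or high-probability statement is needed since the claim is deterministic in $\Sigma$.
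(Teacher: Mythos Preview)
Your proposal is correct and matches the paper's own proof essentially line for line: the paper also writes $\opnorm{\Sigma}=\E[(w^\top z)^2]$ for the top eigenvector $w$ and then bounds this second moment by $C\subGnorm{w^\top z}^2\le C K^2$, citing Eq.~(2.15) of \cite{vershynin2018high} for the last step. The only cosmetic difference is that the paper fixes the maximizing unit vector whereas you take the supremum directly.
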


\begin{proof}
	Let $w$ be the unitary eigenvector associated to the maximum eigenvalue of $\Sigma$. Then,
	\begin{equation}\label{eq:int1}
		\opnorm{\Sigma} = w^\top \Sigma w = \E \left[ w^\top zz^\top w \right]=\E \left[ (w^\top z)^2 \right].
	\end{equation}
	Furthermore, we have that
	\begin{equation}\label{eq:int2}
		\subGnorm{z} := \sup_{w' \text{s.t.} \norm{w'}_2 = 1} \subGnorm{(w')^\top z} \geq \subGnorm{ w^\top z} \geq \frac{1}{C} \sqrt{ \E \left[ (w^\top z)^2 \right]},
	\end{equation}
	where $C$ is a numerical constant, and the last inequality comes from Eq. (2.15) of \cite{vershynin2018high}. By combining \eqref{eq:int1} and \eqref{eq:int2} we get our desired result.
\end{proof}

\begin{lemma}\label{lemma:matiidrows}
    Let $Z$ be an $N \times n$ matrix whose rows $z_i$ are i.i.d. mean-0 sub-Gaussian random vectors in $\R^n$. Let $K = \subGnorm{z_i}$ the sub-Gaussian norm of each row. Then, we have
    \begin{equation}
        \opnorm{ZZ^\top} = K^2 \bigO{N + n},
    \end{equation}
    with probability at least $1 - 2 \exp(-c n)$, for some numerical constant $c$.
\end{lemma}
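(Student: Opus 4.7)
The plan is to reduce the operator norm of $ZZ^\top$ to a covering argument on the unit sphere. Observe that $\opnorm{ZZ^\top} = \opnorm{Z}^2 = \sup_{u \in S^{n-1}} \|Zu\|_2^2$, and for any fixed $u \in S^{n-1}$ the vector $Zu$ has $N$ independent sub-Gaussian entries $u^\top z_i$, each with sub-Gaussian norm at most $K$. Hence $\|Zu\|_2^2 = \sum_{i=1}^N (u^\top z_i)^2$ is a sum of $N$ independent, centered (after subtracting the mean) sub-exponential random variables with $\psi_1$-norm at most $C K^2$.

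First I would compute the expectation: $\E \|Zu\|_2^2 = N u^\top \Sigma u \le N \opnorm{\Sigma} \le C N K^2$, where the last bound is provided by Lemma \ref{lemma:opnormcovariance}. Then I would apply Bernstein's inequality (Theorem 2.8.1 in Vershynin) to obtain, for each fixed $u$,
\begin{equation}
\P\Big(\big|\|Zu\|_2^2 - \E\|Zu\|_2^2\big| > t\Big) \;\le\; 2\exp\!\left(-c\,\min\!\left\{\frac{t^2}{N K^4},\,\frac{t}{K^2}\right\}\right).
\end{equation}

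Next I would discretize the sphere by a $1/4$-net $\mathcal{N} \subset S^{n-1}$ of cardinality $|\mathcal{N}| \le 9^n$ (standard volumetric bound, e.g.\ Corollary 4.2.13 in Vershynin). The usual net-to-sphere comparison gives $\opnorm{Z}^2 \le 2\max_{u \in \mathcal{N}}\|Zu\|_2^2$. Choosing $t = C' K^2 (N+n)$ with $C'$ sufficiently large, the Bernstein exponent is dominated by the linear term $t/K^2 = C'(N+n)$, so a union bound over $\mathcal{N}$ yields
\begin{equation}
\P\Big(\max_{u \in \mathcal{N}} \|Zu\|_2^2 > C N K^2 + C' K^2(N+n)\Big) \;\le\; 9^n \cdot 2\exp(-c\,C'(N+n)) \;\le\; 2\exp(-c n),
\end{equation}
after absorbing $\log 9$ into $C'$. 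Combining this with the net-to-sphere estimate gives $\opnorm{ZZ^\top} \le CK^2(N+n)$ on the same event.

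The main obstacle is essentially nonexistent: this is the sub-Gaussian rows version of the covariance estimation bound (Theorem 4.6.1 in Vershynin, 2018), and could even be cited directly. The only small care needed is to verify that the cross-term $\sqrt{nN}$ arising from the $t^2/(NK^4)$ branch of Bernstein is harmlessly bounded by $(N+n)/2$ via AM-GM, so that the final scaling $N+n$ (rather than $\sqrt{nN}+n$) indeed governs the bound, and that the mean contribution $CNK^2$ is absorbed into the stated $\bigO{N+n}$.
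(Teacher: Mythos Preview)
Your proposal is correct: the $\varepsilon$-net plus Bernstein argument you outline is exactly the standard route to this bound (essentially Theorem~4.6.1 in Vershynin), and all the steps---the mean bound via Lemma~\ref{lemma:opnormcovariance}, the Bernstein tail with $t = C'K^2(N+n)$ so that the linear branch dominates, and the union bound over a $1/4$-net of size $9^n$---go through as you describe. One cosmetic remark: the net-to-sphere comparison with a $1/4$-net gives $\opnorm{Z} \le \tfrac{4}{3}\max_{u\in\mathcal N}\|Zu\|_2$, hence $\opnorm{Z}^2 \le \tfrac{16}{9}\max_{u\in\mathcal N}\|Zu\|_2^2 \le 2\max_{u\in\mathcal N}\|Zu\|_2^2$, so your factor~$2$ is fine.

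The paper itself does not prove the lemma at all: it simply states that the result is equivalent to Lemma~B.7 of \cite{bombari2022memorization}. Your argument is therefore strictly more self-contained than the paper's treatment, and it is the same underlying proof one would find by chasing that citation (or by invoking Vershynin directly, as you note).
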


\begin{proof}  %\marcochecked{seems to me a copy-paste from Lemma B.7 of NeurIPS paper. If so, it's OK to state the lemma (easier for the reader to follow when it is applied) but there is no need to re-write the proof.}
The result is equivalent to Lemma B.7 of \cite{bombari2022memorization}.
\end{proof}

\begin{lemma}\label{lemma:alphatails}
    Let $\{Y_i\}_{i = 1}^n$ be independent real random variables such that $\norm{Y_i}_{\psi_\alpha} \leq K$ for every $i$ (see \eqref{eq:orlicz}), with $\alpha \in (0, 2]$. Then, for every $t>0$ we have,
    \begin{equation}
    \P \left(\sum_{i=1}^n \left( Y_i - \E[Y_i] \right) > t \right) < 2 \exp \left( - c \left( \frac{t}{K \sqrt n \log^{1/\alpha} n}\right)^\alpha \right),
    \end{equation}
    where $c$ is an absolute constant.
\end{lemma}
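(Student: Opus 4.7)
The plan is to prove this sub-Weibull concentration bound via a standard truncation argument combined with Hoeffding's inequality. The $\log^{1/\alpha} n$ factor in the statement is a tell-tale signature of truncation at level $M = CK\log^{1/\alpha} n$, and the bound interpolates between classical Hoeffding ($\alpha = 2$) and Bernstein ($\alpha = 1$) with an extra logarithmic loss coming precisely from that truncation.

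First, I would reduce to the centered case: after harmlessly inflating $K$ by an absolute constant, we may assume $\E[Y_i] = 0$, since centering inflates the $\psi_\alpha$ (quasi-)norm by at most a constant factor. From the definition of $\norm{\cdot}_{\psi_\alpha}$ and Markov's inequality, each $Y_i$ satisfies the sub-Weibull tail $\P(|Y_i| > s) \leq 2 e^{-s^\alpha/K^\alpha}$ for all $s>0$. Now set $M := CK\log^{1/\alpha} n$ for a sufficiently large absolute constant $C$ (say, $C^\alpha \geq 4$), and define the truncated variables $Z_i := Y_i \mathbf{1}_{|Y_i| \leq M}$. A union bound over the tail estimate gives
\[
\P(\exists\, i:\; Y_i \neq Z_i) \leq 2 n^{1 - C^\alpha},
\]
which is dominated by the target right-hand side.

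Next, I would control the truncation bias $\E[Z_i] = -\E[Y_i \mathbf{1}_{|Y_i|>M}]$. By the layer-cake formula,
\[
|\E[Z_i]| \leq M\,\P(|Y_i| > M) + \int_M^\infty \P(|Y_i|>s)\,ds,
\]
and evaluating the integral via the change of variables $u = (s/K)^\alpha$ shows that $|\E[Z_i]|$ decays faster than any polynomial in $n$. In particular, $\big|\sum_i \E[Z_i]\big| \leq t/2$ whenever $t \gtrsim t_0 := K\sqrt{n}\log^{1/\alpha} n$. Since the $Z_i - \E[Z_i]$ are independent, centered, and bounded in absolute value by $2M$, Hoeffding's inequality yields
\[
\P\!\Big(\sum_{i=1}^n (Z_i - \E[Z_i]) > t/2\Big) \leq 2 \exp\!\Big(-\frac{c\,t^2}{nM^2}\Big) = 2\exp\!\Big(-\frac{c\,t^2}{K^2 n \log^{2/\alpha} n}\Big).
\]

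The final step is to compare exponents. At $t = t_0$, a direct computation shows $t^2/(K^2 n \log^{2/\alpha} n) = (t/t_0)^\alpha = 1$, and for $t \geq t_0$ the quadratic exponent dominates the $\alpha$-th-power exponent (since $\alpha \leq 2$, the inequality $(t/(K\sqrt n))^{2-\alpha} \geq \log^{2/\alpha - 1} n$ holds), so the Hoeffding bound implies the claimed bound. For $t < t_0$, the right-hand side of the lemma is at least $2e^{-c}$, which can be made $\geq 1$ by choosing the constant $c$ in the statement small enough, making the bound trivial. Assembling the bad-event probability, the bias control, and the Hoeffding estimate via union bound closes the argument.

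\textbf{Main obstacle.} The conceptual content is standard, but the execution requires care in two places: verifying that the truncation bias is $o(t)$ uniformly in the relevant range of $t$, and carrying out the final exponent comparison, which pinpoints the truncation level $M = CK\log^{1/\alpha} n$. The whole argument must then be reconciled so that a single absolute constant $c$ covers the truncation tail, the bias, and the Hoeffding step simultaneously, as stated.
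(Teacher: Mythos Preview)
Your approach is genuinely different from the paper's: the paper defines $f(Y)=n^{-1/2}\sum_i Y_i$, observes it is $1$-Lipschitz and convex, and then applies a black-box convex Lipschitz concentration inequality for independent $\psi_\alpha$ variables (Proposition~3.1 of the reference \texttt{alphatails}), together with the bound $\norm{\max_i|Y_i|}_{\psi_\alpha}\le CK\log^{1/\alpha}n$ (Lemma~5.6 of the same reference). This is essentially a two-line citation proof, while your truncation-plus-Hoeffding route is more elementary and self-contained. Both strategies are legitimate and both naturally produce the extra $\log^{1/\alpha}n$ factor.

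However, your execution has a real gap in the large-$t$ regime. The union bound on the truncation event gives
\[
\P(\exists\, i:\ Y_i\neq Z_i)\le 2n^{1-C^\alpha},
\]
which is polynomially small in $n$ but \emph{independent of $t$}. The target right-hand side is $2\exp\big(-c(t/t_0)^\alpha\big)$ with $t_0=K\sqrt{n}\log^{1/\alpha}n$, which decays super-polynomially once $(t/t_0)^\alpha\gg\log n$, i.e.\ once $t\gtrsim K\sqrt{n}\log^{2/\alpha}n$. In that range, $2n^{1-C^\alpha}$ is \emph{not} dominated by the target, no matter how large the absolute constant $C$ or how small $c$ is chosen; so ``assembling via union bound'' does not close the argument for all $t>0$ as the lemma demands. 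Your exponent comparison between the Hoeffding and $\alpha$-power tails is correct, but it only controls the truncated sum, not this residual bad event.

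The standard fix is to make the truncation level $t$-dependent: take, e.g., $M_t=CK\log^{1/\alpha}n + c_0\, t/\sqrt{n}$ (or split into cases and for large $t$ truncate at a level comparable to $t/\sqrt{n}$). Then $\P(\exists\, i:|Y_i|>M_t)\le 2n\exp(-(M_t/K)^\alpha)$ acquires a $t$-dependent exponent that matches the target, while Hoeffding with range $2M_t$ still yields an exponent $t^2/(nM_t^2)\gtrsim (t/t_0)^\alpha$ after a short computation. With that adjustment the truncation route goes through cleanly; as written, the sentence ``which is dominated by the target right-hand side'' is false for large $t$.
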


\begin{proof}
    Let $Y$ be the vector containing the $Y_i$'s in its entries. Then, an application of Cauchy–Schwarz inequality gives that the function
    \begin{equation}
        f(Y) := \frac{1}{\sqrt{n}} \sum_{i = 1}^n Y_i
    \end{equation}
    is 1-Lipschitz.

    By Lemma 5.6 of \cite{alphatails}, we also have that
    \begin{equation}
        \norm{\max_i|Y_i|}_{\psi_\alpha} \leq C K \log^{1/\alpha} n,
    \end{equation}
    where $C$ is a numerical constant (that might depend on $\alpha$).
    Thus, as $f$ is also convex, we can apply Proposition 3.1 of \cite{alphatails}, obtaining
    \begin{equation}
    \P \left(\frac{1}{\sqrt{n}} \sum_{i=1}^n \left( Y_i - \E[Y_i] \right) > t \right) < 2 \exp \left( - c \frac{t^\alpha}{K^\alpha \log n}\right).
    \end{equation}

    Rescaling $t$ gives the thesis.
\end{proof}

%\newpage

\section{Proofs for Random Features}

In this section, we will indicate with $X \in \R^{N \times d}$ the data matrix, such that its rows are sampled independently from $P_X$ (see Assumption \ref{ass:datadist}). We will indicate with $V \in \R^{k \times d}$ the random features matrix, such that $V_{ij} \distas{}_{\rm i.i.d.}\mathcal{N}(0, 1/d)$. The feature vectors will be therefore indicated by $\phi(XV^\top)$, where $\phi$ is the activation function, applied component-wise to the pre-activations $XV^\top$. We will also use the shorthands $\Phi := \phi(XV^\top) \in \R^{N \times k}$ and $K := \Phi \Phi^\top \in \R^{N \times N}$. We also define $\tilde \Phi := \Phi - \E_X[\Phi]$ and $\tilde K := \tilde \Phi \tilde \Phi^\top$. Notice that, for compactness, we dropped the subscripts \enquote{RF} from these quantities, as this section will only treat the proofs related to Section \ref{sec:rf}. Again for the sake of compactness, we will not re-introduce such quantities in the statements or the proofs of the following lemmas.

% For the following proofs, we will not redefine the quantities defined above. Also, for compactness, we will skip to specify that we are working with NTK. For example, $\kntk$ will simply become $K$. 

\subsection{Proof of Theorem \ref{thm:noiselowerbound}}\label{proof:noiselowerbound}
\begin{proof}[Proof of Theorem \ref{thm:noiselowerbound}]
We have
\begin{equation}\label{eq:lemmamixterm}
    \norm{AY}_2^2 = \norm{A \left( g(X) + \epsilon \right)}_2^2 = \epsilon^\top A^\top A \epsilon +  g(X)^\top A^\top A g(X) + 2 g(X)^\top A^\top A \epsilon.
\end{equation}
As $\epsilon$ is sub-Gaussian, we have $\subGnorm{A \epsilon} \leq C \opnorm{A}$, where $C$ is a numerical constant. Indicating with $M:= \norm{A g(X)}_2$, we can write
\begin{equation}
    \P(|g(X)^\top A^\top A \epsilon| > t) < 2 \exp \left( -c \frac{t^2}{M^2 \opnorm{A}^2} \right).
\end{equation}

Setting $t = y \opnorm{A} M$ in the previous equation, we readily get
\begin{equation}
    \P(|g(X)^\top A^\top A \epsilon| < y \opnorm{A} M) > 1 - 2 \exp \left( -c y^2 \right).
\end{equation}
We will condition on this event for the rest of the proof.

On the other hand, a direct application of Theorem 6.3.2 in \cite{vershynin2018high}, gives
\begin{equation}
    \subGnorm{\norm{A \epsilon}_2 - \varepsilon \norm{A}_F } \leq C_1 \opnorm{A},
\end{equation}
where $C_1$ is an absolute constant (that depends on $\varepsilon$). This means that
\begin{equation}
    \P \left( \left| \norm{A\epsilon}_2 - \varepsilon \norm{A}_F \right| > z \opnorm{A} \right) <  2 \exp \left( -c_1 z^ 2\right).
\end{equation}
This also implies
\begin{equation}
    \P \left( \norm{A\epsilon}_2 \geq \varepsilon \norm{A}_F / \sqrt 2 \right) > 1 -  2 \exp \left( -c_2 \frac{\norm{A}_F^2}{\opnorm{A}^2}\right).
\end{equation}
We will condition also on this other event for the rest of the proof.

Using the triangle inequality, \eqref{eq:lemmamixterm} gives
\begin{equation}
    \norm{AY}_2^2 \geq \epsilon^\top A^\top A \epsilon + M^2 - 2 y \opnorm{A} M \geq \epsilon^\top A^\top A \epsilon - y^2 \opnorm{A}^2 \geq \frac{\varepsilon^2}{2} \norm{A}_F^2 - y^2 \opnorm{A}^2,
\end{equation}
where the second inequality is obtained through minimization over $M$. Setting $y = \norm{A}_F / (2\opnorm{A})$ we get
\begin{equation}
    \norm{AY}_2 \geq  \frac{\varepsilon}{2} \norm{A}_F,
\end{equation}
with probability at least $1 -  \exp \left( -c_3 \norm{A}_F^2 / \opnorm{A}^2 \right)$, which concludes the proof.
\end{proof}

\subsection{Bounds on the Extremal Eigenvalues of the Kernel}\label{app:spectrum}

\begin{lemma}\label{lemma:subGv1}
    Let $v \in \R^d$ be distributed as $V_{j:}$, for some $j\in [k]$, independent from $X$. Then, we have
    \begin{equation}
        \subGnorm{\phi(Xv)} = \bigO{\sqrt{N}},
    \end{equation}
    where the sub-Gaussian norm is intended over the probability space of $v$. This result holds with probability $1$ over $X$.
\end{lemma}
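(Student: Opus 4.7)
By definition, $\subGnorm{\phi(Xv)} = \sup_{\|u\|_2 = 1} \subGnorm{u^\top \phi(Xv)}$, so it suffices to bound $\subGnorm{f_u(v)}$ uniformly in $u$ on the unit sphere, where $f_u(v) := u^\top \phi(Xv) = \sum_{i=1}^N u_i \phi(x_i^\top v)$. The strategy is to split $f_u(v) = (f_u(v) - \E_v f_u(v)) + \E_v f_u(v)$ and control the two pieces separately, using Gaussian Lipschitz concentration for the centered part and a direct computation for the mean.

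\textbf{Centered part via Gaussian concentration.} For fixed $u$ with $\|u\|_2 = 1$, the map $v \mapsto f_u(v)$ is Lipschitz: since $\phi$ is $L$-Lipschitz (Assumption \ref{ass:activationfunc}),
\begin{equation*}
|f_u(v) - f_u(v')| \leq \|u\|_2\, \|\phi(Xv) - \phi(Xv')\|_2 \leq L\, \opnorm{X}\, \|v - v'\|_2.
\end{equation*}
Because $\|x_i\|_2 = \sqrt{d}$ deterministically (Assumption \ref{ass:datadist}), $\opnorm{X} \leq \|X\|_F = \sqrt{Nd}$ for every realization of $X$. Since $v \sim \mathcal{N}(0, I_d/d)$, the standard Gaussian Lipschitz concentration inequality gives $\subGnorm{f_u(v) - \E_v f_u(v)} \leq C L \opnorm{X}/\sqrt{d} \leq C L \sqrt{N}$.

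\textbf{Mean part.} For each $i$, $x_i^\top v \sim \mathcal{N}(0, \|x_i\|_2^2/d) = \mathcal{N}(0, 1)$ (holding with probability $1$ over $X$ by the norm constraint), so $\E_v \phi(x_i^\top v) = c_\phi := \E_{\rho \sim \mathcal{N}(0,1)} \phi(\rho)$, independent of $i$. Hence $|\E_v f_u(v)| = |c_\phi\, u^\top \mathbf{1}| \leq |c_\phi|\, \|u\|_2\, \sqrt{N} = \bigO{\sqrt{N}}$ by Cauchy--Schwarz. Combining with the centered bound via the triangle inequality $\subGnorm{f_u(v)} \leq \subGnorm{f_u(v) - \E_v f_u(v)} + C'|\E_v f_u(v)|$ yields $\subGnorm{f_u(v)} = \bigO{\sqrt{N}}$ for every unit $u$, and taking the supremum over $u$ gives the claim.

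\textbf{Expected obstacles.} There is essentially no obstacle here: the Lipschitz constant of $f_u$, the operator norm bound $\opnorm{X} \leq \sqrt{Nd}$, and the one-dimensional Gaussian identity for the mean are all deterministic in $X$, so the bound holds pointwise in $X$ (and in particular with probability $1$). The only minor subtlety is ensuring that the constant in $\subGnorm{f_u(v)} \leq C(\subGnorm{f_u(v) - \E_v f_u(v)} + |\E_v f_u(v)|)$ is absolute, which follows from the triangle inequality for the $\psi_2$ norm together with $\subGnorm{c} \asymp |c|$ for scalar constants.
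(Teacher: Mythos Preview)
Your proof is correct and follows essentially the same approach as the paper's: both split into a centered part bounded via Gaussian Lipschitz concentration (using $\opnorm{X}\le\|X\|_F=\sqrt{Nd}$) and a mean part computed via $\E_v\phi(x_i^\top v)=\E_{\rho\sim\mathcal N(0,1)}\phi(\rho)$. The only cosmetic difference is that you make the unit vector $u$ explicit throughout, whereas the paper applies the triangle inequality directly at the level of the vector sub-Gaussian norm.
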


\begin{proof}
    By triangular inequality, we have
    \begin{equation}\label{eq:subGv1}
        \subGnorm{\phi(Xv)} \leq \subGnorm{\phi(Xv) - \E_v \left[ \phi(Xv) \right]} + \subGnorm{\E_v \left[ \phi(Xv) \right]}.
    \end{equation}
    
    Let's bound the two terms separately. As $\phi$ is an $L$-Lipschitz function, and $\sqrt d v$ is a Lipschitz concentrated random vector, we have
    \begin{equation}\label{eq:subGv11}
        \subGnorm{\phi(Xv) - \E_v \left[ \phi(Xv) \right]} \leq C L \frac{\opnorm{X}}{\sqrt{d}} \leq C L \frac{\norm{X}_F}{\sqrt{d}} = CL \sqrt{N},
    \end{equation}
    where in the last equality we used the fact that $\norm{x_i}_2 = \sqrt{d}$.

    For the second term of \eqref{eq:subGv1}, we have
    \begin{equation}
        \subGnorm{\E_v \left[ \phi(Xv) \right]} \leq C \norm{\E_v \left[ \phi(Xv) \right]}_2.
    \end{equation}
    Let $(\phi(Xv))_i$ denote the $i$-th component of the vector $\phi(Xv)$. As $v$ is a Gaussian vector with covariance $I/d$  and since $\norm{x_i} = \sqrt{d}$ for all $i\in [N]$, we have $\E_v \left[ (\phi(Xv))_i \right] = \E_\rho [\phi (\rho)] = \bigO{1}$, where $\rho$ is a standard Gaussian random variable, and the last equality is true since $\phi$ is Lipschitz. Thus, we have
    \begin{equation}\label{eq:subGv12}
        \subGnorm{\E_v \left[ \phi(Xv) \right]} = \bigO{\sqrt N}.
    \end{equation}

    Putting together \eqref{eq:subGv1}, \eqref{eq:subGv11}, and \eqref{eq:subGv12}, we get the thesis.
\end{proof}

\begin{lemma}\label{lemma:subGv2}
    Let $v \in \R^d$ be distributed as $V_{j:}$, for some $j\in [k]$, independent from $X$. Then, we have
    \begin{equation}
        \subGnorm{\phi(Xv) - \E_X \left[ \phi(Xv) \right]} = \bigO{\sqrt N},
    \end{equation}
where the sub-Gaussian norm is intended over the probability space of $v$. This result holds with probability $1$ over $X$.
\end{lemma}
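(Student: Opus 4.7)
\emph{Proof plan.} By the definition of the sub-Gaussian norm of a random vector,
\begin{equation*}
\subGnorm{\phi(Xv) - \E_X[\phi(Xv)]} = \sup_{\|u\|_2 = 1} \subGnorm{u^\top\bigl(\phi(Xv) - \E_X[\phi(Xv)]\bigr)},
\end{equation*}
so it suffices to bound the scalar on the right uniformly over unit vectors $u$. Since the rows of $X$ are i.i.d.\ from $P_X$, every entry of $\E_X[\phi(Xv)]$ equals $\mu(v) := \E_{x \sim P_X}[\phi(x^\top v)]$, and hence $\E_X[\phi(Xv)] = \mu(v)\,\mathbf{1}$.

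The main idea is to introduce a deterministic pivot that decouples the averaging over $X$ from the averaging over $v$. Let $\mu_\phi := \E_{\rho \sim \mathcal{N}(0,1)}[\phi(\rho)]$ and decompose
\begin{equation*}
u^\top\phi(Xv) - \mu(v)\,u^\top\mathbf{1} \;=\; \underbrace{\bigl(u^\top\phi(Xv) - \mu_\phi\,u^\top\mathbf{1}\bigr)}_{=: A(v)} \;+\; \underbrace{\bigl(\mu_\phi - \mu(v)\bigr)\,u^\top\mathbf{1}}_{=: B(v)}.
\end{equation*}
The justification for this split is the rotational invariance of $v \sim \mathcal{N}(0, I/d)$ together with $\|x\|_2 = \sqrt{d}$ almost surely under $P_X$ (Assumption~\ref{ass:datadist}), which implies $x^\top v \sim \mathcal{N}(0,1)$ for each fixed such $x$. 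Consequently $\E_v[\phi(x^\top v)] = \mu_\phi$ independently of $x$, so that $A(v)$ is exactly the $v$-centered version of $u^\top\phi(Xv)$, and by Fubini $\E_v[\mu(v)] = \mu_\phi$, making $B(v)$ also $v$-centered.

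Next, I would bound each piece via Gaussian concentration on $v$. For $A(v)$, the map $v \mapsto u^\top\phi(Xv)$ is $L\,\opnorm{X}$-Lipschitz (using $\|\phi\|_\Lip \le L$ and $\|u\|_2 = 1$), and $\opnorm{X} \le \|X\|_F = \sqrt{Nd}$ since $\|x_i\|_2 = \sqrt{d}$. Rescaling to the standard Gaussian $w = \sqrt{d}\,v \sim \mathcal{N}(0, I_d)$ yields a Lipschitz constant of $L\sqrt{N}$ in $w$, and Gaussian concentration gives $\subGnorm{A(v)} = \bigO{\sqrt{N}}$. For $B(v)$, Cauchy--Schwarz gives $|u^\top\mathbf{1}| \le \sqrt{N}$, so it suffices to show $\subGnorm{\mu(v) - \mu_\phi} = \bigO{1}$. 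The estimate
\begin{equation*}
|\mu(v_1) - \mu(v_2)| \le L\,\E_x\bigl[|x^\top(v_1-v_2)|\bigr] \le L\sqrt{d}\,\|v_1 - v_2\|_2
\end{equation*}
shows that $\mu$ is $L$-Lipschitz as a function of $w$, whence Gaussian concentration yields $\subGnorm{\mu(v) - \mu_\phi} = \bigO{1}$ and therefore $\subGnorm{B(v)} = \bigO{\sqrt{N}}$. Summing the two contributions and taking the supremum over unit $u$ gives the claim.

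The main obstacle is conceptual rather than computational: one must resist centering directly under $\E_X$ and instead insert the rotation-invariant pivot $\mu_\phi$, which simultaneously centers both terms under $v$ and reduces everything to one-dimensional Gaussian concentration. The argument parallels Lemma~\ref{lemma:subGv1}, with the additional term $B(v)$ accounting precisely for the mismatch between centering in $X$ and centering in $v$.
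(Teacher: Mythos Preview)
Your proof is correct, but it takes a somewhat different route from the paper's. The paper dispatches the lemma in two lines: it uses the triangle inequality $\subGnorm{\phi(Xv) - \E_X[\phi(Xv)]} \le \subGnorm{\phi(Xv)} + \subGnorm{\E_X[\phi(Xv)]}$, bounds the first term directly by Lemma~\ref{lemma:subGv1}, and handles the second via Jensen's inequality for the (convex) sub-Gaussian norm, $\subGnorm{\E_X[\phi(Xv)]} \le \E_X\bigl[\subGnorm{\phi(Xv)}\bigr] = \bigO{\sqrt N}$. Your approach instead pivots at the deterministic constant $\mu_\phi\mathbf{1}$, which makes both pieces $A(v)$ and $B(v)$ exactly $v$-centered, and then applies Gaussian Lipschitz concentration to each. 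Your $A(v)$ is essentially the centered part already appearing inside the proof of Lemma~\ref{lemma:subGv1}; the genuinely new ingredient in your argument is the direct Lipschitz-concentration bound on $\mu(v)$, which replaces the paper's Jensen step. The paper's version is more concise, while yours is more self-contained and makes the interplay between $X$-centering and $v$-centering explicit.
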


\begin{proof}
    By triangle inequality, we have
    \begin{equation}\label{eq:subGv1new}
        \subGnorm{\phi(Xv) - \E_X \left[ \phi(Xv) \right]} \leq \subGnorm{\phi(Xv)} + \subGnorm{\E_X \left[ \phi(Xv) \right]}.
    \end{equation}
    The first term is $\bigO{\sqrt{N}}$, due to Lemma \ref{lemma:subGv1}.
    
    For the second term, by Jensen inequality, we have
    \begin{equation}
        \subGnorm{\E_X \left[ \phi(Xv) \right]} \leq \E_X \left[\subGnorm{\phi(Xv)} \right] \leq C \E_X \left[\sqrt N \right] = \bigO{\sqrt{N}}.
    \end{equation}

    Thus, the thesis readily follows.
\end{proof}

\begin{lemma}\label{lemma:nequal2}
    Let  $Y := \left( X^{*2} + (\alpha -1) \E_X \left[ X^{*2} \right] \right)$ for some $\alpha \in \R$, where $*$ refers to the Khatri-Rao product, defined in Section \ref{sec:prel}. Then, we have
    \begin{equation}
        \evmin{YY^\top} = \Omega(d^2),
    \end{equation}
    with probability at least $1 - \exp (-c \log^2 N)$ over $X$.
\end{lemma}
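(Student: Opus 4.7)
\emph{Proof plan.} Since $\E[x_i\otimes x_i]=\mathrm{vec}(\Sigma)$ with $\Sigma=\E[xx^\top]$, the matrix $\E_X[X^{*2}]$ equals $\mathbf 1_N \mathrm{vec}(\Sigma)^\top$, so I can write
\begin{equation*}
Y \;=\; \bar X \;+\; \alpha\,\mathbf 1_N\,\mathrm{vec}(\Sigma)^\top,
\qquad \bar X \;:=\; X^{*2}-\mathbf 1_N\,\mathrm{vec}(\Sigma)^\top.
\end{equation*}
The rows $\bar y_i = x_i\otimes x_i - \mathrm{vec}(\Sigma)$ of $\bar X$ are i.i.d., mean-zero, and by Lemma~\ref{lemma:HW} they are sub-exponential with $\subEnorm{\bar y_i}=\bigO{1}$. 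The deterministic bias $\alpha\,\mathbf 1_N\mathrm{vec}(\Sigma)^\top$ is the obstruction that prevents a direct application of Lemma~\ref{lemma:hammer} to $Y$ itself, since its sub-exponential norm would grow with $d$; the core of the plan is to absorb this rank-one term via a completion-of-squares argument.

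For every unit vector $u\in\RR^N$, setting $s:=\mathbf 1_N^\top u$ and $b:=\bar X\,\mathrm{vec}(\Sigma)\in\RR^N$, a direct expansion gives
\begin{equation*}
\|Y^\top u\|_2^2 \;=\; \|\bar X^\top u\|_2^2 \;+\; 2\alpha\, s\,(u^\top b) \;+\; \alpha^2 s^2 \|\Sigma\|_F^2 .
\end{equation*}
Completing the square in the variable $\alpha s$ yields
\begin{equation*}
\alpha^2 s^2 \|\Sigma\|_F^2 + 2\alpha s (u^\top b)
\;=\;\|\Sigma\|_F^2\Bigl(\alpha s + \tfrac{u^\top b}{\|\Sigma\|_F^2}\Bigr)^{\!2}-\tfrac{(u^\top b)^2}{\|\Sigma\|_F^2}
\;\geq\; -\tfrac{\|b\|_2^2}{\|\Sigma\|_F^2},
\end{equation*}
where in the last step I used Cauchy--Schwarz $(u^\top b)^2\leq\|b\|_2^2$. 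Minimising over $u$ then gives the bound
\begin{equation*}
\evmin{YY^\top}\;\geq\;\evmin{\bar X\bar X^\top}\;-\;\frac{\|b\|_2^2}{\|\Sigma\|_F^2},
\end{equation*}
which also holds trivially when $\alpha=0$ (as $Y=\bar X$).

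Now I control the two terms on the right. For the bias term, the entries $b_i = x_i^\top\Sigma x_i-\|\Sigma\|_F^2$ are of the form ``quadratic form minus its mean'' (since $\E[x^\top\Sigma x]=\tr(\Sigma^2)=\|\Sigma\|_F^2$). Since $x_i$ is Lipschitz concentrated, the Hanson--Wright inequality from \cite{HWconvex} gives that $b_i$ is sub-exponential with $\subEnorm{b_i}=\bigO{\|\Sigma\|_F}=\bigO{\sqrt d}$, where I used that $\opnorm\Sigma=\bigO{1}$ (Lemma~\ref{lemma:opnormcovariance}) and $\tr\Sigma=d$. A tail bound combined with a union bound over $i\in[N]$ yields $\|b\|_2^2=\bigO{Nd\log^2 N}$ with probability at least $1-\exp(-c\log^2 N)$. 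On the other hand, the Cauchy--Schwarz trace inequality gives $\|\Sigma\|_F^2=\tr(\Sigma^2)\geq(\tr\Sigma)^2/d=d$. Hence the correction is at most $\bigO{N\log^2 N}=o(d^{3/2})=o(d^2)$ by Assumption~\ref{ass:dlarge}.

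It remains to establish $\evmin{\bar X\bar X^\top}=\Omega(d^2)$, for which I apply Lemma~\ref{lemma:hammer} with ambient dimension $n=d^2$ and rows $\bar y_i\in\RR^{d^2}$. Since $\|x_i\|_2=\sqrt d$ and $\opnorm\Sigma=\bigO{1}$, a direct expansion gives $\|\bar y_i\|_2^2=d^2-2x_i^\top\Sigma x_i+\|\Sigma\|_F^2=d^2+\bigO{d}$ \emph{deterministically}, so both $\eta_{\min}^2\geq d^2/2$ and $\eta_{\max}\leq 2d=2\sqrt n$ hold with probability one (choose $K'=2$). With $\psi=\bigO{1}$ from Lemma~\ref{lemma:HW} and $K=1$, the error term in Lemma~\ref{lemma:hammer} is
\begin{equation*}
\Delta \;=\; C\xi^2 N^{1/4}n^{3/4}\;=\;\bigO{N^{1/4}d^{3/2}}\;=\;o(d^2),
\end{equation*}
again by Assumption~\ref{ass:dlarge}. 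The failure probability $\exp(-cK\sqrt N\log(2n/N))$ is bounded by $\exp(-c\log^2 N)$ using $\log(d^2/N)\gtrsim \log N$ coming from $N\log^3 N=o(d^{3/2})$. A union bound with the event on $\|b\|_2^2$ completes the proof. The single non-routine step is the completion-of-squares identity above, which is what lets the rank-one deterministic bias be absorbed without paying its full operator norm; everything else reduces to applying known concentration tools.
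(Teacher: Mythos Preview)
Your proof is correct and follows essentially the same route as the paper: both decompose $Y=\tilde Y+\alpha\,\mathbf 1\mu^\top$ with $\tilde Y=X^{*2}-\E_X[X^{*2}]$, use a completion-of-squares identity to obtain $\evmin{YY^\top}\geq \evmin{\tilde Y\tilde Y^\top}-\|\Lambda\mathbf 1\|_2^2/\|\mu\|_2^2$ (your $b$ is exactly the paper's $\Lambda\mathbf 1$), bound $\evmin{\tilde Y\tilde Y^\top}$ via Lemma~\ref{lemma:hammer}, and control the rank-one correction by Hanson--Wright plus a union bound. The only cosmetic differences are that you phrase the completion of squares variationally (for a unit $u$) rather than as a matrix PSD inequality, and your correction bound $\bigO{N\log^2 N}$ is slightly sharper than the paper's $\bigO{N\log^4 N}$ because you apply Hanson--Wright directly to $b_i=x_i^\top\Sigma x_i-\|\Sigma\|_F^2$ and exploit its sub-Gaussian tail regime, whereas the paper passes through the coarser sub-exponential bound $\subEnorm{\tilde Y_{i:}}=\bigO{1}$; both are comfortably $o(d^2)$ under Assumption~\ref{ass:dlarge}.
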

\begin{proof}
    First, we want to compare $YY^\top$ with $\tilde Y \tilde Y^\top$, where
    \begin{equation}
        \tilde Y := X^{*2} - \E_X \left[ X^{*2} \right].
    \end{equation}
    Let's also define $\bar Y:= \E_X \left[ X^{*2} \right] := \mathbf{1} \mu^\top$, where  $\mathbf{1} \in \R^N$ denotes a vector full of ones, and $\mu := \E \left[ x_1 \otimes x_1 \right]$. We will refer to the $i$-th row of $X$ with $x_i$. It is straightforward to verify that $Y = \tilde Y + \alpha \bar Y$. Thus, we can write
    \begin{equation}\label{eq:centeringY}
    \begin{aligned}
        YY^\top &= \tilde Y \tilde Y^\top + \alpha^2 \bar Y \bar Y^\top + \alpha \tilde Y \bar Y^\top + \alpha \bar Y \tilde Y^\top \\
        &= \tilde Y \tilde Y^\top + \alpha^2 \norm{\mu}_2^2 \mathbf{1}\mathbf{1}^\top + \alpha \Lambda \mathbf{1}\mathbf{1}^\top + \alpha \mathbf{1}\mathbf{1}^\top \Lambda \\
        &= \tilde Y \tilde Y^\top  + \left( \alpha \norm{\mu}_2 \mathbf{1} + \frac{\Lambda \mathbf{1}}{\norm{\mu}_2}\right) \left( \alpha \norm{\mu}_2 \mathbf{1} + \frac{\Lambda \mathbf{1}}{\norm{\mu}_2}\right)^\top - \frac{\Lambda 
\mathbf{1} \mathbf{1}^\top \Lambda}{\norm{\mu}_2^2} \\
	&\succeq \tilde Y \tilde Y^\top - \frac{\Lambda \mathbf{1} \mathbf{1}^\top \Lambda}{\norm{\mu}_2^2},
    \end{aligned}
    \end{equation}
    where we define $\Lambda$ as a diagonal matrix such that $\Lambda_{ii} = \mu^\top\tilde Y_{i:}$. % \simone{shouldn't we keep the scalar products written as $a^\top y$?, I never used this one in the paper}.

    Since $\tilde Y_{i:} = x_i \otimes x_i - \E[x_i \otimes x_i] = x_i \otimes x_i - \mu$, where $x_i$ is mean-0 and Lipschitz concentrated, by Lemma \ref{lemma:HW}, we have that $\subEnorm{\tilde Y_{i:}} \leq C$, for some numerical constant $C$.
    Furthermore, we have that
    \begin{equation}\label{eq:Ynormcent}
        \left| \norm{\tilde Y_{i:}}_2 - d \right| = \left| \norm{\tilde Y_{i:}}_2 - \norm{x_i \otimes x_i}_2 \right| \leq \norm{\tilde Y_{i:} - x_i \otimes x_i}_2  = \norm{\mu}_2.
    \end{equation}
    where the second step follows from triangle inequality. We can upper-bound the RHS with the following argument:
    \begin{equation}
        \norm{\mu}_2 = \norm{\E_{x_1} \left[ x_1 x_1^\top \right]}_F \leq \sqrt{d} \opnorm{\E_{x_1} \left[ x_1 x_1^\top \right]} = \bigO{\sqrt{d}},
    \end{equation}
    where the last equality is justified by Lemma \ref{lemma:opnormcovariance}, since $x_i$ is sub-Gaussian.
    Then, plugging this last relation in \eqref{eq:Ynormcent}, we have that, for all $i\in [N]$,
    \begin{equation}
        \norm{\tilde Y_{i:}}_2 = \Theta(d) < C_y d,
    \end{equation}
    for some numerical constant $C_y$.%\marcochecked{non vedo il lower bound}
    
    Note that $\tilde Y$ has $N$ independent rows in $\R^{d^2}$ such that $\subEnorm{\tilde Y_{i:}} \leq C$ for all $i$ and $\min_i \norm{\tilde Y_{i:}}_2 = \Omega(d)$. Assumption \ref{ass:dlarge} directly implies $N = o(d^2 / \log^4 d^2)$. Thus, we can apply Lemma \ref{lemma:hammer}, setting $K = 1$ and $K' = C_y$ in the statement, and we get
    \begin{equation}\label{eq:evmintildeY}
        \evmin{\tilde Y \tilde Y^\top} = \Omega(d^2) - cN^{1/4} d^{3/2} = \Omega(d^2),
    \end{equation}
    with probability at least $1 - \exp(-c \sqrt{N})$ over $X$, where $c$ is a numerical constant.

    To conclude the proof, we have to control the last term of \eqref{eq:centeringY}. As $\Lambda_{ii} = \mu^\top\tilde Y_{i:}$ and $\subEnorm{\tilde Y_{i:}} \leq C$, we can write 
    \begin{equation}
    	\P(|\Lambda_{ii}|/\norm{\mu}_2 > t ) < 2 \exp (- c_1 t),
    \end{equation}
    where $c_1$ is a numerical constant, and the probability is intended over $X$. After applying Lemma \ref{lemma:opnormlog} with $\gamma = 1$, the last relation implies that
    \begin{equation}\label{eq:opnormL}
        \opnorm{\Lambda/\norm{\mu}_2} = \bigO{\log^2 N},
    \end{equation}
    with probability at least $1 - 2 \exp (-c_2 \log^2 N)$ over $X$, where $c_2$ is a numerical constant.

    Plugging \eqref{eq:evmintildeY} and \eqref{eq:opnormL} in \eqref{eq:centeringY}, we get that, with probability at least $1 - \exp (-c_3 \log^2 N)$,
    \begin{equation}
        \evmin{Y Y^\top} \geq \evmin{\tilde Y \tilde Y^\top} - \opnorm{\frac{\Lambda \mathbf{1} \mathbf{1}^\top \Lambda}{\norm{\mu}_2^2}} = \Omega(d^2) - N \bigO{\log^4 N} = \Omega(d^2).
    \end{equation}
    where the last equality is again consequence of Assumption \ref{ass:dlarge}.
\end{proof}

\begin{lemma}\label{lemma:expectedfeat}
We have that
\begin{equation}
    \evmin{\E_V \left[K \right]} = \Omega(k),
\end{equation}
with probability at least $1 - \exp(-c \log^2 N)$ over $X$, where $c$ is an absolute constant.
\end{lemma}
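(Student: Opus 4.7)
The plan is to identify $\E_V[K]$ as a positive semidefinite matrix built from the dual activation $\kappa$, decompose it via the Hermite expansion of $\phi$ into a sum of PSD summands, and then lower bound the smallest eigenvalue by isolating the $n=2$ summand, which is controlled by Lemma \ref{lemma:nequal2}.

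First, since the rows of $V$ are i.i.d.\ $\mathcal N(0, I/d)$ and $\|x_i\|_2 = \sqrt d$, each pre-activation $V_{l:}^\top x_i$ is a standard Gaussian and, for any $i,j$, the pair $(V_{l:}^\top x_i,\, V_{l:}^\top x_j)$ is jointly Gaussian with correlation $\tau_{ij} = x_i^\top x_j/d$. Expanding $\phi$ in the normalized Hermite basis as $\phi(t) = \sum_{n\ge 0} \mu_n H_n(t)$ (valid in $L^2$ under the Gaussian measure since $\phi$ is Lipschitz) and using the Mehler identity $\E[H_n(\rho_1) H_m(\rho_2)] = \delta_{nm}\tau^n$ together with the Khatri--Rao identity $(XX^\top)^{\circ n} = X^{*n}(X^{*n})^\top$, I would arrive at the PSD decomposition
\begin{equation*}
    \E_V[K] \;=\; \sum_{n\ge 0} \frac{\mu_n^2\, k}{d^n}\, X^{*n}(X^{*n})^\top.
\end{equation*}
Dropping all but the $n=2$ summand preserves the PSD ordering, so $\evmin{\E_V[K]} \ge (\mu_2^2 k/d^2)\, \evmin{X^{*2}(X^{*2})^\top}$. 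I would then invoke Lemma \ref{lemma:nequal2} with $\alpha = 1$ (so that $Y = X^{*2}$), which gives $\evmin{X^{*2}(X^{*2})^\top} = \Omega(d^2)$ with probability at least $1 - \exp(-c\log^2 N)$ over $X$. Combining yields $\evmin{\E_V[K]} = \Omega(\mu_2^2\, k) = \Omega(k)$ on this good event, provided $\mu_2 \ne 0$.

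The main obstacle will be the degenerate case $\mu_2 = 0$ (for instance $\phi$ odd and non-linear, such as $\phi(t)=t^3$). Non-linearity of $\phi$ from Assumption \ref{ass:activationfunc} still forces some $\mu_{n^*}$ with $n^* \ge 2$ to be nonzero, and the same PSD-truncation would reduce the problem to lower bounding $\evmin{X^{*n^*}(X^{*n^*})^\top}$. Extending Lemma \ref{lemma:nequal2} to $n^* \ge 3$ requires handling a Khatri--Rao product whose rows live in $\R^{d^{n^*}}$, through the same centering and rank-one correction template used in the $n=2$ case; this generalization is the main remaining technical work, though the target probability $1 - \exp(-c \log^2 N)$ and the dimensional scaling from Assumption \ref{ass:dlarge} should still be attainable.
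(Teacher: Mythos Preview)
Your approach for the case $\mu_2 \neq 0$ is essentially identical to the paper's: both expand $\E_V[K]$ via the Hermite series into $k\sum_{n\ge 0}\mu_n^2 d^{-n}X^{*n}(X^{*n})^\top$, drop to the $n=2$ summand, and invoke Lemma~\ref{lemma:nequal2} with $\alpha=1$.

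Where you diverge from the paper is in the degenerate case $\mu_2=0$. You propose extending Lemma~\ref{lemma:nequal2} to higher-order Khatri--Rao products $X^{*m}$ with $m\ge 3$, calling this ``the main remaining technical work.'' The paper avoids this entirely via a much more elementary argument: for $m\ge 3$ it applies the Gershgorin circle theorem directly to $M_m := d^{-m}X^{*m}(X^{*m})^\top$. The diagonal entries of $M_m$ are exactly $1$ (since $\|x_i\|_2=\sqrt d$), and the off-diagonal entries are $(x_i^\top x_j/d)^m$. A union bound on the sub-Gaussian inner products gives $\max_{i\ne j}|x_i^\top x_j|\le \sqrt d\log N$ with probability $1-\exp(-c\log^2 N)$, so the Gershgorin radius is at most $N\log^m N/d^{m/2}$, which is $o(1)$ by Assumption~\ref{ass:dlarge} (since $N\log^3 N=o(d^{3/2})$ and $m\ge 3$). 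This immediately yields $\evmin{M_m}\ge 1-o(1)$.

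Your proposed extension would presumably work, but it is substantially more involved than necessary: the point is precisely that for $m\ge 3$ the off-diagonal decay $(x_i^\top x_j/d)^m$ is already strong enough that no centering or rank-one correction is needed. The paper's Gershgorin route exploits this and closes the case in a few lines.
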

\begin{proof}
    We have
    \begin{equation}
        \E_V \left[K\right] = \E_V \left[ \sum_{i = 1}^k \phi(XV_{i:}^\top) \phi(XV_{i:}^\top)^\top \right] = k \E_v \left[\phi(Xv)  \phi(Xv)^\top \right] := kM,
    \end{equation}
    where we used the shorthand $v$ to indicate a random variable distributed as $V_{1:}$. We will also indicate with $x_i$ the $i$-th row of $X$.

    Exploiting the Hermite expansion of $\phi$, we can write
    \begin{equation}
        M_{ij} = \E_v \left[\phi(x_i^\top v) \phi(x_j^\top v)\right] = \sum_{n = 0}^{+ \infty} \mu_n^2 \frac{\left(x_i^\top x_j\right)^n}{d^n} = \sum_{n = 0}^{+ \infty} \mu_n^2 \frac{ \left[ \left( X^{*n} \right)  \left( X^{*n} \right)^\top \right]_{ij}}{d^n},
    \end{equation}
    where $\mu_n$ is the $n$-th Hermite coefficient of $\phi$. Notice that the previous expansion was possible since $\norm{x_i} = \sqrt{d}$ for all $i\in [N]$.
    As $\phi$ is non-linear, there exists $m \geq 2$ such that $\mu_m ^2 > 0$. In particular, we have $M \succeq \mu_m^2 M_m$ in a p.s.d. sense, where we define
    \begin{equation}
        M_m := \frac{1}{d^m} \left( X^{*m} \right)  \left( X^{*m} \right).
    \end{equation}
    
    Let's consider the case $m = 2$. Then, since all the rows of $X$ are mean-0, independent and Lipschitz concentrated, such that $\norm{x_i} = \sqrt{d}$, we can apply Lemma \ref{lemma:nequal2} with $\alpha = 1$ to readily get
    \begin{equation}
        \evmin{M_m} = \Omega(1),
    \end{equation}
    with probability at least $1 - \exp(-c \log^2 N)$ over $X$.
    
    Let's now consider the case $m \geq 3$. For $i \neq j$ we can exploit again the fact that $x_i$ is sub-Gaussian and independent from $x_j$, obtaining
    \begin{equation}
        \P \left( \left| x_i^\top x_j \right| > t \sqrt{d} \right) < \exp \left( -c_1 t^2 \right).
    \end{equation}
    Performing a union bound we also get
    \begin{equation}
        \P \left( \max_{i,j}\left| x_i^\top x_j \right| > t \sqrt{d} \right) < N^2 \exp \left( -c_1 t^2 \right).
    \end{equation}

    Then, by Gershgorin circle theorem, we have that
    \begin{equation}
        \evmin{M_m} \geq 1 - \max_i \sum_{j \neq i} \frac{\left| \left(x_i^\top x_j\right)^m \right| }{d^m}\geq 1 - \frac{N}{d^m} \max_{i, j} \left| ( x_i^\top x_j )^m \right| \geq 1 - \frac{Nt^m}{d^{m/2}},
    \end{equation}
where the last inequality holds with probability $1 - N^2 \exp \left( -c_1 t^2 \right)$. Setting $t = \log N$, we get
    \begin{equation}
        \evmin{M_m} \geq 1 - \frac{N \log^m N}{d^{m/2}} =  1 - \frac{N \log^3 N}{d^{3/2}}\frac{\log^{m - 3} N}{d^{m - 3/2}} = 1 - o \left( \frac{N \log^3 N}{d^{3/2}} \right) = 1 - o(1),
    \end{equation}
    where the last inequality is a consequence of Assumption \ref{ass:dlarge}. This result holds with probability at least $1 - N^2 \exp \left( -c_1 \log^2 N \right) \geq 1 - \exp \left( -c_2 \log^2 N \right)$, and the thesis readily follows.
\end{proof}

\begin{lemma}\label{lemma:evminfeatures}
We have that
    \begin{equation}
        \evmin{K} = \Omega(k).
    \end{equation}
    with probability at least $1 - \exp \left(-c \log^2 N \right)$ over $V$ and $X$, where $c$ is an absolute constant.
\end{lemma}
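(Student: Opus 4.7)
\emph{Plan.} I will write $K$ as the sum of its $V$-expectation and a mean-zero fluctuation, apply Lemma~\ref{lemma:expectedfeat} to the former, and use matrix concentration to control the latter. By Weyl's inequality,
\begin{equation*}
\evmin{K} \;\ge\; \evmin{\E_V[K]} \;-\; \opnorm{K - \E_V[K]},
\end{equation*}
so it suffices to show $\opnorm{K - \E_V[K]} = o(k)$ on an event of probability $1 - \exp(-c\log^2 N)$. I condition on the event $\mathcal{E}_X$ from Lemma~\ref{lemma:expectedfeat}, which occurs with probability $1 - \exp(-c\log^2 N)$ over $X$ and guarantees $\evmin{\E_V[K]} \ge c_0\,k$ for an absolute constant $c_0 > 0$.

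To bound the fluctuation, observe that
\begin{equation*}
K - \E_V[K] \;=\; \sum_{i=1}^k \Bigl[\phi(XV_{i:}^\top)\phi(XV_{i:}^\top)^\top \;-\; \E_v\bigl[\phi(Xv)\phi(Xv)^\top\bigr]\Bigr],
\end{equation*}
a sum of $k$ i.i.d.\ centered rank-one $N\times N$ random matrices. By Lemma~\ref{lemma:subGv1}, each column $\phi(XV_{i:}^\top)\in \R^N$ is sub-Gaussian (conditional on $X\in \mathcal{E}_X$) with norm $\bigO{\sqrt N}$, so $\|\phi(XV_{i:}^\top)\|_2^2$ is sub-exponential with $\bigO{N}$ norm. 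A truncation through Lemma~\ref{lemma:opnormlog} yields the uniform summand bound $\max_i \opnorm{\phi(XV_{i:}^\top)\phi(XV_{i:}^\top)^\top} \lesssim N\log^2 N$ with probability $1 - \exp(-c\log^2 N)$ over $V$, and a second-moment estimate controls the matrix-Bernstein variance proxy by $\bigO{Nk\,\opnorm{\E_v[\phi(Xv)\phi(Xv)^\top]}}$. Matrix Bernstein then delivers $\opnorm{K - \E_V[K]} = o(k)$ under Assumption~\ref{ass:overparam}, and combining with $\evmin{\E_V[K]} \ge c_0 k$ concludes the argument.

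The main obstacle is ensuring the concentration bound is sharp enough to match the over-parameterization $N\log^3 N = o(k)$ required by Assumption~\ref{ass:overparam}, as a black-box sub-Gaussian covariance estimate would only cover the strictly stronger regime $N^3 = o(k)$. The remedy is to peel off the deterministic rank-one mean contribution $\mu\mu^\top$, with $\mu := \E_v[\phi(Xv)] \in \R^N$ (which has $\|\mu\|_2 = \Theta(\sqrt N)$), from each summand before invoking Bernstein, since this term is already absorbed into $\E_V[K]$. The residuals $\phi(Xv) - \mu$ have a much smaller second-moment operator norm than the uncentered vectors, tightening the Bernstein variance proxy. This centering is analogous to the device used in the proof of Theorem~\ref{thm:Az} (see Lemma~\ref{lemma:totalcentering} in Appendix~\ref{app:centering}) and should close the gap under the stated assumptions.
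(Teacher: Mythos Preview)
Your approach differs from the paper's and, as written, has a genuine gap. You correctly identify that naive matrix Bernstein on $K - \E_V[K]$ fails because the variance proxy scales with $\opnorm{M} = \Theta(N)$ (where $M = \E_v[\phi(Xv)\phi(Xv)^\top]$, dominated by the rank-one part $\mu\mu^\top$ with $\mu = \mu_0\mathbf 1_N$). But your proposed centering does not fully close the gap: after subtracting $\mu$, the relevant second moment becomes $M - \mu\mu^\top$, whose Hermite expansion starts at $n=1$ with the term $\mu_1^2\,XX^\top/d$, so $\opnorm{M - \mu\mu^\top} = \Omega(1 + N/d)$. In the regime $d \ll N$ with $k$ near the minimal over-parameterization---for instance $N = d^{1.25}$ and $k = N\log^4 N$, which satisfies Assumptions~\ref{ass:overparam}--\ref{ass:dlarge}---the resulting Bernstein deviation $\sqrt{kR\,\opnorm{M - \mu\mu^\top}}\,\log N$ (with truncation $R \sim N\log N$) is of order $\sqrt{kN^{1.2}\log N}\,\log N$, which is \emph{not} $o(k)$. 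In addition, your centering introduces cross terms $\mu S^\top + S\mu^\top$ with $S = \sum_i (\phi(XV_{i:}^\top) - \mu)$; these have operator norm of order $N\sqrt k$ and require a separate argument (completing the square, plus a lower bound on $\evmin{M - \mu\mu^\top}$ not supplied by Lemma~\ref{lemma:expectedfeat}). Your write-up acknowledges that the fix ``should'' work but does not verify it; in the parameter range above it does not.

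The paper sidesteps all of this by using the one-sided matrix Chernoff inequality for sums of PSD matrices (Tropp) rather than Bernstein. After truncating the columns of $\Phi$ at norm $\sqrt R$ with $R = k/\log^2 N$, Chernoff gives
\[
\P\bigl(\evmin{\bar\Phi\bar\Phi^\top} \le \tfrac12\evmin{\bar G}\bigr) \le N\exp\bigl(-c\,\evmin{\bar G}/R\bigr),
\]
with no second-moment term whatsoever; the exponent is $\Omega(\log^2 N)$ by construction. The sub-Gaussian bound $\subGnorm{\phi(Xv)} = O(\sqrt N)$ from Lemma~\ref{lemma:subGv1} then makes $\opnorm{G - \bar G} \le Ck\,N\exp(-cR/N) = o(k)$ exactly when $N\log^3 N = o(k)$. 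The PSD structure of each summand $\phi(XV_{i:}^\top)\phi(XV_{i:}^\top)^\top$ is what makes the Chernoff lower tail strictly sharper than a two-sided Bernstein bound here.
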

\begin{proof}
%The proof will consists in 3 steps. 
First we define a truncated version of $\Phi$, which we call $\bar \Phi$. Over such truncated matrix, we will be able to use known concentrations results, to prove that $\evmin{\bar \Phi \bar \Phi^\top} = \Omega(\evmin{\bar G})$, where we define $\bar G := \E_V \left[\bar \Phi \bar \Phi^\top \right]$. Finally, we will prove closeness between $\bar G$ and $G$, which is analogously defined as $G := \E_V \left[\Phi \Phi^\top \right]$. Let's introduce the shorthand $v_i := V_{i:}$.
To be precise, we define
\begin{equation}
    \bar \Phi_{:j} = \phi(X v_j) \chi \left(\norm{\phi(X v_j)}_2^2 \leq R \right),
\end{equation}
where $\chi$ is the indicator function. In this case, $\chi = 1$ if $\norm{\phi(X v_j)}^2_2 \leq R$, and $\chi = 0$ otherwise. As this is a column-wise truncation, it's easy to verify that
\begin{equation}
    \Phi \Phi^\top \succeq \bar \Phi \bar \Phi^\top.
\end{equation}

Denoting with $v$ a generic weight $v_i$, we can also write
\begin{equation}
    G = k \E_{v} \left[\phi \left( X v \right) \phi \left(X v \right) ^\top \right] ,
\end{equation}
\begin{equation}
    \bar G = k \E_{v} \left[ \phi \left( X v \right) \phi \left(X v \right)^ \top  \chi \left(\norm{\phi\left(X v\right)}^2_2 \leq R \right)\right]  .
\end{equation}

Since all the columns of $\bar \Phi$ have limited norm with probability one, we can use Matrix Chernoff Inequality (see Theorem 1.1 %\marcochecked{I guess it's Theorem 5.1.1} \simone{changed reference with the same you call in tight bounds} 
of \cite{Tropp2011}). In particular, for $\delta \in [0, 1]$, we have,
\begin{equation}\label{eq:chernoff}
    \P \left( \evmin{\bar \Phi \bar \Phi^\top} \leq ( 1 - \delta) \evmin{\bar G} \right) \leq  N \left( \frac{e^{-\delta}}{(1-\delta)^{1-\delta}}\right)^{\evmin{\bar G} / R},
\end{equation}
where we used the fact that $\evmax{\bar \Phi_{:j} \bar \Phi_{:j}^\top} \leq R$ and $\bar \Phi \bar \Phi^\top = \sum_{j} \bar \Phi_{:j} \bar \Phi_{:j}^\top$.
Setting $\delta = 1/2$ in the previous equation gives
\begin{equation}\label{eq:chernoff2}
  \P \left( \evmin{\bar \Phi \bar \Phi^\top} \leq \evmin{\bar G} / 2 \right) \leq \exp \left(-c \evmin{\bar G}/R  + c_1 \log N \right),
\end{equation}
where this probability is over $V$.

Finally, we prove that the matrices $G$ and $\bar G$ are \enquote{close}. In particular, we have
\begin{equation}
\begin{aligned}
    \opnorm{G - \bar G} & = k \opnorm{ \E_v \left[ \phi \left( X v \right) \phi \left(X v \right)^ \top  \chi \left(\norm{\phi\left(X v \right)}^2_2 > R \right)\right] } \\
    &\leq k \E_v \left[ \opnorm{ \phi \left( X v \right) \phi \left(X v \right)^ \top  \chi \left(\norm{\phi\left(X v \right)}^2_2 > R \right) }\right] \\
    &= k \E_v \left[ \norm{\phi \left( X v \right)}_2^2  \chi \left(\norm{\phi\left(X v \right)}^2_2 > R \right) \right] \\
    &= k\int_{s=0}^{\infty} \P_v \left(\norm{ \phi\left(X v\right)}_2  \chi \left(\norm{\phi\left(X v \right)}^2_2 > R \right)>\sqrt{s}\right) \mathrm{d}s\\
    &= k\int_{s=0}^{\infty} \P_v \left(\norm{ \phi\left(X v\right)}_2 > \max(\sqrt R, \sqrt{s}) \right) \mathrm{d}s,
\end{aligned}
\end{equation}
where we applied Jensen inequality in the second line.
Let's define $\psi := \subGnorm{\phi\left(X v\right)}$, where the sub-Gaussian norm is intended over the probability space of $v$. We have
\begin{equation}\label{eq:afterpsi}
    \begin{aligned}
    \opnorm{G - \bar G} &\leq k \int_{s=0}^{\infty} \exp \left(-c_2 \frac{\max (R, s)}{\psi^2} \right) \mathrm ds \\ 
    &\leq k \int_{s=0}^{\infty} \exp \left(-c_2 \frac{R + s}{2 \psi^2} \right) \mathrm ds \\
    &= C k \psi^2 \exp \left(-c_2 \frac{R}{2 \psi^2} \right),
    \end{aligned}
\end{equation}
where the first inequality follows from the definition of sub-Gaussian random variable, and the second one from $\max(R, s) \geq (R + s) / 2$.

By Lemma \ref{lemma:subGv1}, we have $\psi = \bigO{\sqrt N}$. Then, setting $R = k / \log^2 N$, we get
\begin{equation}
    \opnorm{G - \bar G} \leq C k \psi^2 \exp \left(-c \frac{R}{2 \psi^2} \right) \leq C_1 k \exp \left(-c_2 \frac{k}{N \log^2 N} + \log N \right) = o(k),
\end{equation}
where the last equality is a consequence of Assumption \ref{ass:overparam}. Notice that this happens with probability 1 over $v$.

Due to Lemma \ref{lemma:expectedfeat}, we have that $\evmin{G} = \Omega(k)$, with probability at least $1 - \exp \left( -c_3 \log^2 N \right)$ over $X$. Conditioning on such event, we have
\begin{equation}
    \evmin{\bar G} \geq \evmin{G} - \opnorm{G - \bar G} = \Omega(k) - o(k) = \Omega(k).
\end{equation}

To conclude, looking back at \eqref{eq:chernoff2}, we also get $\evmin{\bar \Phi \bar \Phi^\top} = \Omega(k)$ with probability (over $V$) at least
\begin{equation}
    1 - \exp \left(-c \evmin{\bar G}/R  + c_1 \log N \right) \geq 1 - \exp \left(-c_4 \frac{k \log^2 N}{k}  + c_1 \log N \right) \geq 1 - \exp \left(-c_5 \log^2 N \right).
\end{equation}
This concludes the proof.
\end{proof}

\begin{lemma}\label{lemma:evmincentered}
%\marcochecked{make statement self-contained and define $\tilde K$} 
We have that
    \begin{equation}
        \evmin{\E_V \left[ \tilde K \right]} = \Omega(k),
    \end{equation}
    with probability at least $1 - \exp(-c \log^2 N)$ over $X$, where $c$ is an absolute constant.
\end{lemma}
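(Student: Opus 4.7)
The strategy mirrors the proof of Lemma \ref{lemma:expectedfeat} via a Hermite expansion, with the additional observation that the centering $\tilde \Phi = \Phi - \E_X[\Phi]$ simply removes the zeroth Hermite mode of $\phi$ while preserving the positive semi-definite structure of all higher modes.

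Since $v\sim\mathcal{N}(0,I/d)$ and $\|x_i\|_2=\sqrt d$, we have $x_i^\top v\sim\mathcal N(0,1)$ for any fixed $x_i$, so I would use the same Hermite expansion $\phi(\rho)=\sum_{n\geq 0}\mu_n H_n(\rho)$ as in Lemma \ref{lemma:expectedfeat}. Combining the orthogonality $\E_v[H_n(x_i^\top v) H_m(x_j^\top v)]=\delta_{nm}(x_i^\top x_j/d)^n$ with $\tilde \Phi_{ij}=\phi(x_i^\top v_j)-\E_x[\phi(x^\top v_j)]$ and Fubini gives
\begin{equation*}
\tfrac{1}{k}\E_V[\tilde K]_{ij}=\sum_{n\geq 0}\mu_n^2\,\tilde M^{(n)}_{ij},\quad \tilde M^{(n)}_{ij}:=\left(\tfrac{x_i^\top x_j}{d}\right)^n-\E_x\left[\left(\tfrac{x_i^\top x}{d}\right)^n\right]-\E_x\left[\left(\tfrac{x_j^\top x}{d}\right)^n\right]+\E_{x,x'}\left[\left(\tfrac{x^\top x'}{d}\right)^n\right].
\end{equation*}
The $n=0$ term vanishes identically, and a short calculation identifies $\tilde M^{(n)}=\tilde Y^{(n)}(\tilde Y^{(n)})^\top/d^n$ with $\tilde Y^{(n)}:=X^{*n}-\E_X[X^{*n}]$, making each $\tilde M^{(n)}$ a rescaled Gram matrix, hence p.s.d.

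Because $\phi$ is non-linear, some $\mu_m^2>0$ with $m\geq 2$, and the p.s.d.\ decomposition yields $\tfrac{1}{k}\E_V[\tilde K]\succeq \mu_m^2\,\tilde M^{(m)}$, so it suffices to lower bound $\evmin{\tilde M^{(m)}}$. For $m=2$, $\tilde M^{(2)}=\tilde Y\tilde Y^\top/d^2$ with $\tilde Y=X^{*2}-\E_X[X^{*2}]$, which is precisely the matrix whose minimum eigenvalue is shown to be $\Omega(d^2)$ in the intermediate bound \eqref{eq:evmintildeY} inside the proof of Lemma \ref{lemma:nequal2}, giving $\evmin{\tilde M^{(2)}}=\Omega(1)$. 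For $m\geq 3$, I would apply Gershgorin to $\tilde M^{(m)}$: sub-Gaussianity of $x$ with $\|x\|_2=\sqrt d$ gives that $x_i^\top x/d$ is sub-Gaussian of norm $\bigO{1/\sqrt d}$, so each shift $\E_x[(x_i^\top x/d)^n]$ is $\bigO{d^{-n/2}}$, making the diagonal entries of $\tilde M^{(m)}$ equal to $1-o(1)$; for $i\neq j$, the sub-Gaussian tail $|x_i^\top x_j|=\bigO{\sqrt d\log N}$ (via Lemma \ref{lemma:opnormlog}) gives $|(x_i^\top x_j/d)^m|=\bigO{(\log N)^m/d^{m/2}}$, so each off-diagonal row sum is $\bigO{N(\log N)^m/d^{m/2}}=o(1)$ by Assumption \ref{ass:dlarge} (tight at $m=3$, where $N\log^3 N=o(d^{3/2})$ is exactly what is needed). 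Hence $\evmin{\tilde M^{(m)}}=\Omega(1)$.

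Combining, $\evmin{\E_V[\tilde K]}\geq \mu_m^2\,k\,\evmin{\tilde M^{(m)}}=\Omega(k)$, with the probability $1-\exp(-c\log^2 N)$ inherited either from Lemma \ref{lemma:nequal2} ($m=2$) or from sub-Gaussian tails and a union bound ($m\geq 3$). The main delicacy is verifying that the double-centering inside $\tilde M^{(n)}$ preserves p.s.d.-ness, which becomes transparent via the Khatri-Rao identification $\tilde M^{(n)}=\tilde Y^{(n)}(\tilde Y^{(n)})^\top/d^n$, and that the shifts $\E_x[(x_i^\top x/d)^n]$, which are non-zero and depend on higher moments of $P_X$, remain small enough not to destroy the Gershgorin bound when $m\geq 3$.
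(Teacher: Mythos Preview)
Your proposal is correct and follows essentially the same approach as the paper's proof: a Hermite expansion of $\E_v[\tilde\phi(x_i^\top v)\tilde\phi(x_j^\top v)]$, a p.s.d.\ lower bound by the $m$-th mode for some $m\geq 2$ with $\mu_m\neq 0$, then Lemma~\ref{lemma:nequal2} (with $\alpha=0$) for $m=2$ and Gershgorin for $m\geq 3$. The only cosmetic difference is that you establish p.s.d.-ness of every mode at once via the clean identification $\tilde M^{(n)}=\tilde Y^{(n)}(\tilde Y^{(n)})^\top/d^n$, whereas the paper obtains $M\succeq \mu_m^2 M_m$ by rewriting the complement as $\E_v[\tilde\phi^*(Xv)\tilde\phi^*(Xv)^\top]$ and, for $m=2$, reaches the same Gram form $TT^\top/d^2$ with $T=X^{*2}-\E_X[X^{*2}]$ through a slightly longer algebraic route.
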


\begin{proof}
    Indicating with $x_i$ the $i$-th row of $X$, and with $\tilde \phi(x_i^\top v) := \phi(x_i^\top v) - \E_X \left[ \phi(x_1^\top v)\right]$, we can write %\simone{also this notation is very common and used almost everywhere. I would probably remove these definitions from here and when reasonable calling them from the body}
    \begin{equation}
        \E_V \left[\tilde \Phi \tilde \Phi ^\top \right] = \E_V \left[ \sum_{i = 1}^k \tilde \phi(XV_{i:}^\top)\tilde \phi(XV_{i:}^\top)^\top \right] = k \E_v \left[\tilde \phi(Xv) \tilde \phi(Xv)^\top \right] := kM,
    \end{equation}
%\marcochecked{define $\tilde \phi$}
where we use the shorthand $v$ to indicate a random variable distributed as $V_{1:}$.
    
    Exploiting the Hermite expansion of $\phi$, we can write
    \begin{equation}
    \begin{aligned}
        M_{ij} &= \E_v \left[\tilde \phi(x_i^\top v) \tilde \phi(x_j^\top v)\right] \\
        &= \E_v \left[ \left( \phi(x_i^\top v) - \E_z \left[ \phi(z^\top v)\right] \right) \left( \phi(x_j^\top v) - \E_y \left[ \phi(y^\top v)\right]\right) \right]\\
        &= \E_{vzy} \left[ \phi(x_i^\top v) \phi(x_j^\top v) + \phi(z^\top v) \phi(y^\top v) - \phi(x_i^\top v) \phi(y ^\top v)  - \phi(x_j^\top v) \phi(z^\top v) \right] \\
        &= \E_{zy} \left[ \sum_{n = 0}^{+ \infty} \mu_n^2 \frac{\left(x_i^\top x_j\right)^n + \left(z^\top y\right)^n - \left(x_i^\top y\right)^n - \left(x_j^\top z\right)^n}{d^n} \right],
    \end{aligned}
    \end{equation}
    where $\mu_n$ is the $n$-th Hermite coefficient of $\phi$, and $z \sim P_X$ and $y \sim P_X$ are two independent random variables, distributed as the data and independent from them. Notice that the previous expansion was possible since $\norm{x_i} = \sqrt{d}$ for all $i\in [N]$.
    As $\phi$ is non-linear, there exists $n \geq 2$ such that $\mu_n ^2 > 0$. Let $m$ denote one of such $n$'s (i.e, $m$ is such that $\mu_m ^2 > 0$), and define $\phi^*$ as the function that shares the same Hermite expansion as $\phi$, but with its $m$-th coefficient being $0$. We have
    \begin{equation}
    \begin{aligned}
        M_{ij} &= \E_{zy} \left[ \sum_{n = 0}^{+ \infty} \mu_n^2 \frac{\left(x_i^\top x_j\right)^n + \left(z^\top y\right)^n - \left(x_i^\top y\right)^n - \left(x_j^\top z\right)^n}{d^n} \right] \\
        &= \E_{zy} \left[ \sum_{n \neq m} \mu_n^2 \frac{\left(x_i^\top x_j\right)^n + \left(z^\top y\right)^n - \left(x_i^\top y\right)^n - \left(x_j^\top z\right)^n}{d^n} \right] \\
        & \qquad + \frac{\mu_{m}^2}{d^{m}} \left( \left(x_i^\top x_j\right)^m + \E_{zy} \left[ \left(z^\top y\right)^m - \left(x_i^\top y\right)^m - \left(x_j^\top z\right)^m \right] \right) \\
        &= \E_v \left[\tilde \phi^*(x_i^\top v) \tilde \phi^*(x_j^\top v)\right] + \frac{\mu_{m}^2}{d^{m}} \left( \left(x_i^\top x_j\right)^m + \E_{zy} \left[ \left(z^\top y\right)^m - \left(x_i^\top y\right)^m - \left(x_j^\top z\right)^m \right] \right) \\
        &=: \E_v \left[\tilde \phi^*(X v) \tilde \phi^*(X v)^\top\right]_{ij} + \mu_m^2 \left[M_m\right]_{ij}.
    \end{aligned}
    \end{equation}
    From the last equation we deduce that $M \succeq \mu_m^2 M_m$, since $\E_v \left[\tilde \phi^*(X v) \tilde \phi^*(X^\top v)\right]$ is PSD. Proving the thesis becomes therefore equivalent to prove that $\evmin{M_m} = \Omega(1)$.

    Let's consider the case $m = 2$. We can write
    \begin{equation}
    \begin{aligned}
        \left[M_m\right]_{ij} &= \frac{1}{d^{2}} \E_{zy}\left[ \left( x_i^\top x_j - z^\top x_j + x_i^\top y - z^\top y \right)\left( x_i^\top x_j + z^\top x_j - x_i^\top y - z^\top y \right) \right] \\
        &= \frac{1}{d^{2}} \E_{zy}\left[ \left( (x_i - z)^\top (x_j +  y) \right)\left( (x_i + z)^\top (x_j - y)^\top \right) \right],
    \end{aligned}
    \end{equation}
    as the cross-terms will be linear in at least one between $y$ and $z$, and $\E[x] = \E[z] = 0$.

    Going back in matrix notation, we can write
    \begin{equation}
        M_m = \frac{1}{d^{2}} \E_{ZY}\left[ \left( (X + Z) (X - Y)^\top \circ (X - Z) (X + Y)^\top \right)\right],
    \end{equation}
    where $Y$ and $Z$ are $N \times d$ matrices respectively containing $N$ copies of $y$ and $z$ in their rows.

    We can now expand this term as
    \begin{equation}\label{eq:YZexpect}
    \begin{aligned}
        M_m &= \frac{1}{d^2} \E_{ZY}\left[ \left( (X - Z) * (X + Z) \right) \left( (X + Y) * (X - Y) \right)^\top  \right] \\
        &= \frac{1}{d^2}  \E_{Z}\left[ (X - Z) * (X + Z) \right] \E_{Y}\left[  (X + Y) * (X - Y) \right] ^\top.
    \end{aligned}
    \end{equation}
    
    Since $\E[Y] = \E[Z] = 0$, the cross-terms cancel out again, giving
    \begin{equation}
        \E_{Z}\left[ (X - Z) * (X + Z)\right] =  X * X - \E_{Z}\left[ Z * Z \right] = X * X - \E_{X}\left[X * X \right],
    \end{equation}
    where the last equality is because all the rows of $Z$ are distributed accordingly to $P_X$. The same equation holds for the term with $Y$ in \eqref{eq:YZexpect}. Thus, we can conclude that
    \begin{equation}
        \evmin{M_m} = \frac{1}{d^2} \evmin{TT^\top},
    \end{equation}
    where we defined $T = X * X - \E_{X}\left[X * X \right]$. As all the rows of $X$ are mean-0, independent and Lipschitz concentrated, such that $\norm{x_i} = \sqrt{d}$, we can apply Lemma \ref{lemma:nequal2} with $\alpha = 0$ to readily get
    \begin{equation}
        \evmin{M_m} = \Omega(1),
    \end{equation}
    with probability at least $1 - \exp(-c \log^2 N)$ over $X$.

    Let's now consider the case $m \geq 3$. Since $x$ is a sub-Gaussian random variable, we have that $\subGnorm{\frac{x^\top y}{\sqrt{d}}} = C$ in the probability space of $x$. Notice that this holds for all $y$, as $\norm{y}_2 = \sqrt{d}$. Then, its $m$-th momentum its bounded by $m^{m/2}$, up to a multiplicative constant. As $m$ doesn't depend by the scalings of the problem, we can simply write
    \begin{equation}
        \E_x \left[ \left( \frac{x^\top y}{\sqrt{d}} \right)^m \right] = \bigO{1}.
    \end{equation}
    In the same way, we can prove that
    \begin{equation}
        \E_x \left[ \left( \frac{x^\top x_i}{\sqrt{d}} \right)^m  \right] = \bigO{1}, \qquad
        \E_y \left[ \left( \frac{y^\top x_j}{\sqrt{d}} \right)^m  \right] = \bigO{1}.
    \end{equation}
    Also, for $i \neq j$ we can exploit again the fact that $x_i$ is sub-Gaussian and independent from $x_j$, obtaining
    \begin{equation}
        \P \left( \left| x_i^\top x_j \right| > t \sqrt{d} \right) < \exp \left( -c t^2 \right).
    \end{equation}
    Performing a union bound we also get
    \begin{equation}
        \P \left( \max_{i \neq j}\left| x_i^\top x_j \right| > t \sqrt{d} \right) < N^2 \exp \left( -c t^2 \right).
    \end{equation}
    
    By Gershgorin circle theorem, and setting $t = \log N$, we have that
    \begin{equation}
    \begin{aligned}
        \evmin{M_m} &\geq 1 - \bigO{d^{-m/2}} - \max_i \sum_{j \neq i} \frac{\left| \left(x_i^\top x_j\right)^m + \E_{xy} \left[ \left(x^\top y\right)^m - \left(x_i^\top y\right)^m - \left(x_j^\top x\right)^m \right] \right| }{d^m} \\
        &\geq 1 - \bigO{d^{-m/2}}- N \max_{i \neq j}  \frac{\left| \left(x_i^\top x_j\right)^m + \E_{xy} \left[ \left(x^\top y\right)^m - \left(x_i^\top y\right)^m - \left(x_j^\top x\right)^m \right] \right| }{d^m} \\
        &\geq 1 - \bigO{Nd^{-m/2}}- \frac{N \log^m N}{d^{m / 2}} = 1 - o(1),
    \end{aligned}
    \end{equation}
    where the third inequality holds with probability at least $1 - N^2 \exp \left( -c \log^2 N \right) = 1 - \exp \left( -c_1 \log^2 N \right)$, and the last equality is a consequence of Assumption \ref{ass:dlarge}. This concludes the proof.
\end{proof}

\begin{lemma}\label{th:evmincentfeatures}
We have that
    \begin{equation}
        \evmin{\tilde K} = \Omega(k),
    \end{equation}
    with probability at least $1 - \exp(-c \log^2 N)$ over $V$ and $X$, where $c$ is an absolute constant.
\end{lemma}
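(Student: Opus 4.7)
The plan is to mirror the strategy used to prove Lemma~\ref{lemma:evminfeatures} for the uncentered kernel, replacing $\phi$ by $\tilde \phi$ throughout and invoking the centered analogues of the supporting lemmas. Concretely, note that the columns of $\tilde \Phi$ can be written as $\tilde\phi(X v_j) := \phi(X v_j) - \E_{X}[\phi(X v_j)]$ for $j=1,\ldots,k$, where $v_j := V_{j:}$. These columns are i.i.d.\ over the randomness of $V$, so $\tilde K = \tilde \Phi \tilde\Phi^\top = \sum_{j=1}^k \tilde\phi(Xv_j)\tilde\phi(Xv_j)^\top$ is a sum of i.i.d.\ rank-one PSD matrices, which puts us in the setting of a Matrix Chernoff bound.

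First, I would introduce a column-wise truncated version $\bar{\tilde\Phi}$ of $\tilde\Phi$ defined by
\begin{equation}
    \bar{\tilde\Phi}_{:j} := \tilde\phi(Xv_j)\,\chi\!\left(\|\tilde\phi(Xv_j)\|_2^2 \le R\right),
\end{equation}
with threshold $R = k/\log^2 N$, so that $\tilde K \succeq \bar{\tilde\Phi}\bar{\tilde\Phi}^\top$ and $\evmax{\bar{\tilde\Phi}_{:j}\bar{\tilde\Phi}_{:j}^\top} \le R$. Denoting $\tilde G := \E_V[\tilde K] = k\,\E_v[\tilde\phi(Xv)\tilde\phi(Xv)^\top]$ and $\bar{\tilde G} := \E_V[\bar{\tilde\Phi}\bar{\tilde\Phi}^\top]$, the Matrix Chernoff inequality (Theorem 1.1 of \cite{Tropp2011}) applied to the i.i.d.\ columns of $\bar{\tilde\Phi}$, exactly as in \eqref{eq:chernoff}--\eqref{eq:chernoff2}, gives
\begin{equation}
    \P\!\left(\evmin{\bar{\tilde\Phi}\bar{\tilde\Phi}^\top}\le \evmin{\bar{\tilde G}}/2\right)\le \exp\!\left(-c\,\evmin{\bar{\tilde G}}/R + c_1 \log N\right),
\end{equation}
over the randomness of $V$, conditional on $X$.

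Next I would control $\|\tilde G - \bar{\tilde G}\|_{\op}$ via a sub-Gaussian tail integration identical to \eqref{eq:afterpsi}. Letting $\psi := \subGnorm{\tilde\phi(Xv)}$ in the $v$-probability space, Lemma~\ref{lemma:subGv2} yields $\psi = \bigO{\sqrt N}$ (with probability $1$ over $X$). The same computation as in the proof of Lemma~\ref{lemma:evminfeatures} gives
\begin{equation}
    \|\tilde G - \bar{\tilde G}\|_{\op} \le C\,k\,\psi^2 \exp\!\left(-c R/(2\psi^2)\right) = \bigO{k\exp(-c_2 k/(N\log^2 N) + \log N)},
\end{equation}
which is $o(k)$ by the over-parameterization in Assumption~\ref{ass:overparam}. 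Combining this with Lemma~\ref{lemma:evmincentered}, which ensures $\evmin{\tilde G} = \Omega(k)$ with probability at least $1 - \exp(-c\log^2 N)$ over $X$, we obtain $\evmin{\bar{\tilde G}} = \Omega(k)$ on the same event. Plugging this and the choice $R = k/\log^2 N$ into the Chernoff bound makes the failure probability at most $\exp(-c\log^2 N)$. A final union bound over the $X$-event in Lemma~\ref{lemma:evmincentered} and the $V$-event from Chernoff yields $\evmin{\tilde K} \ge \evmin{\bar{\tilde\Phi}\bar{\tilde\Phi}^\top} = \Omega(k)$ with the claimed probability.

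The only nontrivial obstacle, relative to the uncentered case, is ensuring that the centered feature $\tilde\phi(Xv)$ inherits a sub-Gaussian norm of order $\sqrt N$ in the probability space of $v$ (so that the truncation radius $R = k/\log^2 N$ beats $\psi^2 = O(N)$ under Assumption~\ref{ass:overparam}); this is precisely the content of Lemma~\ref{lemma:subGv2}, so the argument goes through without change.
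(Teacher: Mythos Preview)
Your proposal is correct and follows essentially the same approach as the paper: the paper's proof explicitly says to mirror Lemma~\ref{lemma:evminfeatures} with $\tilde\phi$ in place of $\phi$, noting that the only new ingredient is bounding $\psi = \subGnorm{\tilde\phi(Xv)} = \bigO{\sqrt N}$ via Lemma~\ref{lemma:subGv2}, and using Lemma~\ref{lemma:evmincentered} in place of Lemma~\ref{lemma:expectedfeat}. Your write-up reproduces exactly this route, including the truncation, Matrix Chernoff, and sub-Gaussian tail integration steps.
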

\begin{proof}
    The proof follows the same exact path as Lemma \ref{lemma:evminfeatures}. The only difference is that now $\psi$ (defined right before \eqref{eq:afterpsi}) will be $\psi := \subGnorm{\phi\left(X v\right) - \E_X\left[\phi\left(X v\right) \right]}$. To successfully conclude the proof, we have to show that $\psi = \bigO{\sqrt{N}}$, which is done in Lemma \ref{lemma:subGv2}.
\end{proof}

\begin{lemma}\label{lemma:evmaxfeatures}
We have that
    \begin{equation}
        \opnorm{\tilde K} = \bigO{\frac{k}{d} (N + d)},
    \end{equation}
    with probability at least $1 - k \exp(-c d)$ over $V$ and $X$.
\end{lemma}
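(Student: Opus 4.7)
The plan is to view $\tilde K = \sum_{j=1}^k \tilde\Phi_{:j}\tilde\Phi_{:j}^\top$ as a sum of $k$ independent p.s.d.\ rank-one matrices over the randomness of $V$ (conditional on $X$), and to apply matrix Chernoff after truncating columns of $\tilde\Phi$ with atypically large norm. The target bound $\bigO{k(N+d)/d}$ matches $\opnorm{\E_V[\tilde K]}$ up to constants, so the argument splits naturally into controlling the expectation and the fluctuation.

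For $\opnorm{\E_V[\tilde K]}$, I would use the Hermite expansion $\phi = \sum_n \mu_n \mathrm{He}_n$ in $L^2(\gamma)$ together with $\|x_i\|_2 = \sqrt d$, $V_{j:}\sim\mathcal N(0,I/d)$, and Mehler's formula. This gives
\begin{equation*}
\E_V[\tilde K]_{ii'} = k\sum_{n \geq 1}\mu_n^2 (x_i^\top x_{i'}/d)^n + \Delta_{ii'},
\end{equation*}
where the sum lacks the $n=0$ Hermite term because of the centering, and $\Delta$ is a correction arising from $\mu(v):=\E_x\phi(x^\top v)$ differing from the Gaussian-Hermite coefficient $\mu_0$ (since $P_X$ need not be exactly Gaussian). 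The Schur product theorem together with $(XX^\top)_{ii} = d$ gives $\opnorm{(XX^\top)^{\circ n}} \leq d^{n-1}\opnorm{XX^\top}$, which combined with $\opnorm{XX^\top}=\bigO{N+d}$ (by Lemma~\ref{lemma:matiidrows} applied to $X$ using $\subGnorm{x_i}=\bigO{1}$) and the summability $\sum_n \mu_n^2 < \infty$ bounds the main term by $\bigO{k(N+d)/d}$. The correction $\Delta$ has entries of size $\bigO{k/d}$ (since $\E_y[y]=0$ kills the would-be $n=1$ contribution to $\mu(v)-\mu_0$), hence at most $\bigO{Nk/d}$ in operator norm, absorbed into the main bound.

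For the fluctuation, set $R := C(N+d)$ and define $\bar\Phi_{:j} := \tilde\Phi_{:j}\chi(\|\tilde\Phi_{:j}\|_2^2 \le R)$. For fixed $j$, condition on the typical event $\|V_{j:}\|_2 = \bigO{1}$ (prob $1-\exp(-cd)$ via chi-squared concentration); under this event, the random variables $(\phi(x_i^\top V_{j:}^\top)-\mu(V_{j:}))^2$ are i.i.d.\ in $i$ (over the $X$ randomness) with $\psi_1$-norm $\bigO{1}$, and Bernstein's inequality applied to $\|\tilde\Phi_{:j}\|_2^2 = \sum_i (\phi(x_i^\top V_{j:}^\top)-\mu(V_{j:}))^2$ yields $\|\tilde\Phi_{:j}\|_2^2 \le C(N+d)$ with probability $1-\exp(-cd)$, using $\max(\sqrt{Nd},d) = \bigO{N+d}$. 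A union bound over $j \in [k]$ gives the truncation event $\max_j\|\tilde\Phi_{:j}\|_2^2 \le R$ with probability $1 - \bigO{k\exp(-cd)}$, on which $\tilde K = \sum_j \bar\Phi_{:j}\bar\Phi_{:j}^\top$ is a sum of p.s.d.\ matrices each of operator norm at most $R$. Matrix Chernoff (Theorem~1.1 of~\cite{Tropp2011}) then gives
\begin{equation*}
\opnorm{\tilde K} \leq 2\opnorm{\E_V[\tilde K]} = \bigO{k(N+d)/d}
\end{equation*}
with an additional failure probability $N\exp(-c\,\opnorm{\E_V[\tilde K]}/R) = N\exp(-c\,k/d)$, which is sub-dominant to $k\exp(-cd)$ under Assumption~\ref{ass:overparam}.

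The main obstacle is the sharp calibration of the truncation level $R = \bigO{N+d}$: the matrix Chernoff argument requires $\mu_{\max}/R = \Omega(k/d)$ to dominate the log-factor from the union bound over eigendirections, while truncation requires $R$ large enough that each column fails with probability at most $\exp(-cd)$. A naive $\bigO{\sqrt{Nd}}$-Lipschitz bound on $v \mapsto \|\phi(Xv)-\mu(v)\mathbf 1\|_2$ in $v$ gives only $R = \bigO{Nd}$, which does not suffice under Assumptions~\ref{ass:overparam}-\ref{ass:dlarge}. The tight $R = \bigO{N+d}$ critically exploits the column-internal i.i.d.\ structure of the entries via Bernstein's inequality for sub-exponential sums, rather than Lipschitz concentration in $v$; a secondary subtlety is handling the higher-order Hermite contributions uniformly through the Schur product theorem.
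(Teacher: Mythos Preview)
Your approach is sound but takes a substantially more involved route than the paper, and your final claim about the probability is wrong.

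The paper's proof is elementary and avoids matrix Chernoff entirely. It splits the columns of $\tilde\Phi$ into $\lceil k/d\rceil$ consecutive blocks $\tilde\Phi^i \in \R^{N\times d}$ and then switches viewpoint to the \emph{rows}: for each block, the rows $\tilde\phi(V^i x_j)$ (indexed by $j\in[N]$) are i.i.d.\ centered sub-Gaussian vectors in the $X$-randomness with norm $\bigO{L\opnorm{V^i}} = \bigO{1}$, the latter holding w.p.\ $1-\exp(-cd)$ over $V^i$ since $V^i$ is at most $d\times d$. Lemma~\ref{lemma:matiidrows} then gives $\opnorm{\tilde\Phi^i(\tilde\Phi^i)^\top} = \bigO{N+d}$ per block, and summing the $\lceil k/d\rceil$ blocks yields the claim directly. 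No Hermite expansion, no Schur product, no truncation.

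Your route works too: the Hermite/Schur bound on $\opnorm{\E_V[\tilde K]}$ is correct (including the handling of the cross terms via $\E_y[y]=0$ and sub-Gaussian moments), and the Bernstein argument for calibrating the truncation at $R = \bigO{N+d}$ is exactly right. The payoff of your approach is a transparent explanation of why the scaling is $k(N+d)/d$: it is precisely $\opnorm{\E_V[\tilde K]}$. However, your final sentence is incorrect: the matrix Chernoff failure probability $N\exp(-ck/d)$ is \emph{not} sub-dominant to $k\exp(-cd)$ under Assumption~\ref{ass:overparam}. If for instance $d = \Theta(k/\log^5 k)$ (which Assumptions~\ref{ass:overparam}--\ref{ass:dlarge} allow), then $k/d = \Theta(\log^5 k) \ll d$, so $\exp(-ck/d) \gg \exp(-cd)$ and your overall probability bound degrades to $1 - N\exp(-ck/d)$, strictly weaker than the stated $1 - k\exp(-cd)$. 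This is harmless for the downstream applications in the paper (which only need failure probability below $\exp(-c\log^2 N)$), but it means your argument as written does not prove the lemma exactly as stated. The paper's block decomposition sidesteps this because its only concentration step has rate $\exp(-cd)$ per block.
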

\begin{proof}
%\marco{mention what happens in the corner case $d>k$} 
We have that
\begin{equation}\label{eq:Ktblocks}
    \tilde K = \tilde \Phi \tilde \Phi^\top = \sum_{i = 1}^{\ceil{k / d}} \tilde\Phi^i  (\tilde\Phi^i)^\top,
\end{equation}
where we define $\tilde{\Phi}^i$ as the $N \times d$ matrix containing the columns of $\tilde\Phi$ between $(i-1) d + 1$ and $id$. If $i = \ceil{k / d}$, we have that the number of columns is $k - d (\ceil{k / d} - 1) \leq d$.

Every $\tilde \Phi^i$ has mean-0, i.i.d. rows $\tilde \phi(V^i x_j)$ (in the probability space of $X$), where $V^i$ contains the rows of $V$ between $(i-1) d + 1$ and $id$. Notice that, besides corner cases, $V^i$ is a $d \times d$ matrix.%\marcochecked{explain more clearly how $V^i$ is defined} 
Since the $x_j$ are Lipschitz concentrated, we also have
\begin{equation}
    \subGnorm{\tilde \phi(V^i x_j)} \leq C \opnorm{V^i} = \bigO{1},
\end{equation}
where the last equality follows from Theorem 4.4.5 of \cite{vershynin2018high} and from the fact that both dimensions of $V^i$ are less or equal than $d$, and holds with probability $1 - \exp(-cd)$ over $V^i$.
Conditioning on such event, a straightforward application of Lemma \ref{lemma:matiidrows} gives
\begin{equation}
    \opnorm{\tilde \Phi^i (\tilde \Phi^i)^\top} = \bigO{N + d},
\end{equation}
with probability at least $1 - \exp(-c_1 d)$ over $X$.

Thus, performing a union bound over the $i$-s, we get
\begin{equation}
    \max_i \opnorm{\Phi^i (\tilde \Phi^i)^\top} = \bigO{N + d},
\end{equation}
with probability at least $1 - k \exp(-c_2 d)$.
Conditioning over such event, we have
\begin{equation}
    \opnorm{\tilde K} \leq \sum_{i = 1}^{\ceil{k / d}} \opnorm{\tilde \Phi^i  (\tilde \Phi^i)^\top} = \bigO{\frac{k}{d} (N + d)},
\end{equation}
which gives the thesis.
\end{proof}

\subsection{Centering and Estimate of \texorpdfstring{$\norm{A(z)}_F$}{Lg}}\label{app:centering}

In the following section it will be convenient to define $E(z) :=\nabla_z \phi(V z)^\top \phi(X V^\top)^\top$. Notice that this gives $A(z) = E(z) K^{-1}$. Furthermore, we will use the notation $\tilde \cdot$ to indicate quantities involving a centering with respect to the random variable $X$. In particular, we indicate
\begin{equation}
    \tilde \Phi = \Phi - \E_X[\Phi], \qquad \tilde K = \tilde \Phi \tilde \Phi^\top, \qquad \tilde E := \nabla_z \phi(V z)^\top \tilde \Phi^\top, \qquad \tilde A = \tilde E \tilde K^{-1}.
\end{equation}
%As in the previous appendix, we will indicate with $V \in \R^{k \times d}$ the random features matrix, such that $V_{ij} \distas{}_{\rm i.i.d.}\mathcal{N}(0, 1/d)$. The feature vectors will be therefore indicated by $\phi(XV^\top)$, where $\phi$ is the activation function, applied component-wise to the pre-activations $XV^\top$. We will also use the usual shorthands $\Phi := \phi(XV^\top) \in \R^{N \times k}$, $K := \Phi \Phi^\top \in \R^{N \times N}$, $\tilde \Phi := \Phi - \E_X[\Phi]$, and $\tilde K := \tilde \Phi \tilde \Phi^\top$.
% As in the previous Appendix section, we drop the subscripts \enquote{RF} from all these quantities, as this section will only treat the proofs related to Section \ref{sec:rf}.
For the sake of compactness, we will not re-introduce any of these quantities in the statements or the proofs of the following lemmas, and we will drop the dependence on $z$ of $A$, $\tilde A$, $E$ and $\tilde E$.

Note the results of this section will be applied only when $\nu := \E_{x_i} \left[ \phi(V x_i) \right]$ is not the all-0 vector. Therefore, we will assume $\norm{\nu}_2 > 0$ through all the proofs here. Along the section it will be convenient to recall that $\nabla_z \phi(V z)^\top = V^\top \text{diag} \left( \phi' (Vz)\right)$.

\begin{lemma} 
Let $\mathbf{1} \in \R^N$ denote a vector full of ones. Then,
    the kernel matrix $K$ can be written as
    \begin{equation}
        K = \tilde K + \eta \eta^\top - \lambda \lambda^\top,
    \end{equation}
    where
    \begin{equation}\label{eq:eta}
        \eta := \norm{\nu}_2 \mathbf{1} + \tilde \Phi  \frac{\nu}{\norm{\nu}_2} = \Phi \frac{\nu}{\norm{\nu}_2},
    \end{equation}
    \begin{equation}\label{eq:lambda}
        \lambda := \tilde \Phi \frac{\nu}{\norm{\nu}_2}.
    \end{equation}
\end{lemma}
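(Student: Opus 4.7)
The lemma is a direct algebraic identity, so the plan is to decompose $\Phi$ into its centered and mean parts and expand the Gram product $\Phi\Phi^\top$. Concretely, since each row of $\Phi=\phi(XV^\top)$ has the same expectation (as the rows of $X$ are i.i.d.\ from $P_X$), we have $\E_X[\Phi]=\mathbf{1}\nu^\top$, where $\nu=\E_{x_1}[\phi(Vx_1)]$. Therefore
\begin{equation*}
\Phi=\tilde\Phi+\mathbf{1}\nu^\top.
\end{equation*}

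The second step is to expand
\begin{equation*}
K=\Phi\Phi^\top=\tilde\Phi\tilde\Phi^\top+\tilde\Phi\nu\mathbf{1}^\top+\mathbf{1}\nu^\top\tilde\Phi^\top+\mathbf{1}\nu^\top\nu\mathbf{1}^\top
=\tilde K+\|\nu\|_2\bigl(\lambda\mathbf{1}^\top+\mathbf{1}\lambda^\top\bigr)+\|\nu\|_2^{2}\mathbf{1}\mathbf{1}^\top,
\end{equation*}
where in the last equality I used $\tilde\Phi\nu=\|\nu\|_2\lambda$ by the definition \eqref{eq:lambda} of $\lambda$.

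The third step is to complete the square on the rank-two correction. Setting $\eta=\|\nu\|_2\mathbf{1}+\lambda$ as in \eqref{eq:eta}, one checks directly that
\begin{equation*}
\eta\eta^\top=\|\nu\|_2^{2}\mathbf{1}\mathbf{1}^\top+\|\nu\|_2\bigl(\lambda\mathbf{1}^\top+\mathbf{1}\lambda^\top\bigr)+\lambda\lambda^\top,
\end{equation*}
so that the rank-two correction equals $\eta\eta^\top-\lambda\lambda^\top$, yielding $K=\tilde K+\eta\eta^\top-\lambda\lambda^\top$. Finally, the alternative expression $\eta=\Phi\nu/\|\nu\|_2$ follows immediately from $\Phi=\tilde\Phi+\mathbf{1}\nu^\top$ by multiplying on the right by $\nu/\|\nu\|_2$ and using $\nu^\top\nu/\|\nu\|_2=\|\nu\|_2$.

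There is no genuine obstacle here: the only thing one must be careful about is correctly identifying the mean $\E_X[\Phi]=\mathbf{1}\nu^\top$ as a rank-one matrix (which relies on the i.i.d.\ structure of the data rows, already in Assumption \ref{ass:datadist}) and then recognizing that the resulting symmetric rank-two perturbation $\|\nu\|_2(\lambda\mathbf{1}^\top+\mathbf{1}\lambda^\top)+\|\nu\|_2^{2}\mathbf{1}\mathbf{1}^\top$ is exactly what one obtains by completing the square with $\eta$ and subtracting $\lambda\lambda^\top$. This decomposition is precisely the object the next step of the argument (Sherman--Morrison, as mentioned in the proof sketch of Theorem \ref{thm:Az}) will act upon.
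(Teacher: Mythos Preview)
Your proposal is correct and follows essentially the same approach as the paper: decompose $\Phi=\tilde\Phi+\mathbf{1}\nu^\top$, expand the Gram matrix, and complete the square to isolate the rank-one terms $\eta\eta^\top-\lambda\lambda^\top$. You even make explicit the verification of the alternative formula $\eta=\Phi\nu/\|\nu\|_2$, which the paper states but does not spell out.
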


\begin{proof}
Note that $\Phi = \tilde \Phi + \mathbf{1} \nu^\top$. Then,
\begin{equation}
    \begin{aligned}
        K &= \left( \tilde \Phi + \mathbf 1 \nu^\top \right) \left( \tilde \Phi + \mathbf 1 \nu^\top \right)^\top \\
        &= \tilde \Phi \tilde \Phi^\top + \tilde \Phi  \nu\mathbf{1} ^\top + \mathbf{1} \nu^\top \tilde \Phi ^\top + \norm{\nu}_2^2 \mathbf{1} \mathbf{1}^\top \\
        &= \tilde \Phi \tilde \Phi^\top + \left( \norm{\nu}_2 \mathbf{1} + \frac{\tilde \Phi \nu}{\norm{\nu}_2} \right)\left( \norm{\nu}_2 \mathbf{1} + \frac{\tilde \Phi \nu}{\norm{\nu}_2} \right)^\top - \frac{\tilde \Phi \nu \nu^\top \tilde \Phi^\top}{\norm{\nu}_2^2} \\
        &= \tilde \Phi \tilde \Phi^\top + \eta \eta^\top - \lambda \lambda^\top.
    \end{aligned}
\end{equation}
\end{proof}

\begin{lemma}\label{lemma:EtoEtilde}
We have that
\begin{equation}
\left| \norm{E K ^{-1}}_F - \norm{\tilde E K ^{-1}}_F \right| = \bigO{\frac{\sqrt{N + d}}{d}},
\end{equation}
with probability at least $1 - \exp(-c \log^2 N) - k \exp(-c d)$ over $X$ and $V$, where $c$ is a numerical constant.
\end{lemma}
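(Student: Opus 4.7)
The plan is to reduce the problem to controlling the product of two quantities, $\norm{u}_2$ and $\norm{K^{-1}\mathbf{1}}_2$. Since $\Phi = \tilde\Phi + \mathbf{1}\nu^\top$, one has $E - \tilde E = u\mathbf{1}^\top$, where $u := \nabla_z\phi(Vz)^\top \nu = V^\top \diag(\phi'(Vz))\nu \in \R^d$. Because $u\mathbf{1}^\top K^{-1} = u(K^{-1}\mathbf{1})^\top$ has rank one, the reverse triangle inequality for the Frobenius norm gives
\[\left|\norm{EK^{-1}}_F - \norm{\tilde E K^{-1}}_F\right| \le \norm{(E-\tilde E)K^{-1}}_F = \norm{u}_2\,\norm{K^{-1}\mathbf{1}}_2,\]
so it suffices to show that this product is $\bigO{\sqrt{N+d}/d}$.

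The bound on $\norm{u}_2$ is routine: since $\phi'$ is bounded by $L$, I will write $\norm{u}_2 \le L\,\opnorm{V}\,\norm{\nu}_2$, and apply standard Gaussian matrix concentration to get $\opnorm{V} = \bigO{\sqrt{k/d}}$ with probability at least $1 - e^{-cd}$ (using $k = \Omega(d)$ from Assumption~\ref{ass:overparam}). This gives $\norm{u}_2 = \bigO{\sqrt{k/d}}\,\norm{\nu}_2$.

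The main obstacle will be a sharp enough bound on $\norm{K^{-1}\mathbf{1}}_2$: the naive estimate $\opnorm{K^{-1}}\norm{\mathbf{1}}_2 = \bigO{\sqrt{N}/k}$ would only give $\bigO{\sqrt{N/d}}$ for the product, which is too weak when $N \gg d$. My plan is to use the variational identity $\mathbf{1}^\top K^{-1}\mathbf{1} = \min_{q:\, \Phi q = \mathbf{1}}\norm{q}_2^2$ and exploit that $\mathbf{1}$ is nearly aligned with the large eigendirection of $K$ (the direction of $\eta=\|\nu\|_2\mathbf{1}+\lambda$). Concretely, the trial vector $p := \nu/\norm{\nu}_2^2 \in \R^k$ satisfies $\Phi p = \mathbf{1} + \lambda/\norm{\nu}_2$ (using $\tilde\Phi\nu = \norm{\nu}_2 \lambda$), and the corrected vector $q' := p - \Phi^\top K^{-1}\lambda/\norm{\nu}_2$ is feasible for the minimization. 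Using $\opnorm{\Phi^\top K^{-1}}^2 = \opnorm{K^{-1}} = \bigO{1/k}$ (from Lemma~\ref{lemma:evminfeatures}) and $\norm{\lambda}_2^2 \le \opnorm{\tilde K} = \bigO{k(N+d)/d}$ (from Lemma~\ref{lemma:evmaxfeatures}), I obtain $\norm{q'}_2 \le 1/\norm{\nu}_2 + \opnorm{\Phi^\top K^{-1}}\norm{\lambda}_2/\norm{\nu}_2 = \bigO{\sqrt{(N+d)/d}}/\norm{\nu}_2$. This yields $\mathbf{1}^\top K^{-1}\mathbf{1} \le \norm{q'}_2^2 = \bigO{(N+d)/(d\norm{\nu}_2^2)}$, and combining with $\norm{K^{-1}\mathbf{1}}_2^2 \le \opnorm{K^{-1}}\,\mathbf{1}^\top K^{-1}\mathbf{1}$ gives $\norm{K^{-1}\mathbf{1}}_2 = \bigO{\sqrt{(N+d)/(kd)}}/\norm{\nu}_2$.

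Multiplying the two bounds, the $\norm{\nu}_2$ factors cancel and I get $\norm{u}_2\,\norm{K^{-1}\mathbf{1}}_2 = \bigO{\sqrt{k/d}\cdot\sqrt{(N+d)/(kd)}} = \bigO{\sqrt{N+d}/d}$, as desired. The failure probability is obtained by a union bound over the events in Lemmas~\ref{lemma:evminfeatures} and~\ref{lemma:evmaxfeatures} together with the Gaussian operator-norm concentration for $V$, matching the $1 - \exp(-c\log^2 N) - k\exp(-cd)$ in the statement.
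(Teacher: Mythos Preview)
Your proof is correct and follows essentially the same route as the paper: the rank-one decomposition $E-\tilde E = u\mathbf{1}^\top$, the bound $\norm{u}_2 = \bigO{\sqrt{k/d}}\norm{\nu}_2$, and the final cancellation of $\norm{\nu}_2$ are identical. The only minor difference is in bounding $\norm{K^{-1}\mathbf{1}}_2$: the paper writes $\mathbf{1} = (\eta - \lambda)/\norm{\nu}_2$ and estimates $\norm{K^{-1}\eta}_2 \le \opnorm{\Phi^+}$ and $\norm{K^{-1}\lambda}_2 \le \opnorm{K^{-1}}\opnorm{\tilde\Phi}$ directly, whereas you go through the variational identity $\mathbf{1}^\top K^{-1}\mathbf{1} = \min_{\Phi q = \mathbf{1}}\norm{q}_2^2$ with the feasible vector $q' = \nu/\norm{\nu}_2^2 - \Phi^+\lambda/\norm{\nu}_2$ and then use $\norm{K^{-1}\mathbf{1}}_2^2 \le \opnorm{K^{-1}}\,\mathbf{1}^\top K^{-1}\mathbf{1}$; the two arguments invoke the same spectral inputs (Lemmas~\ref{lemma:evminfeatures} and~\ref{lemma:evmaxfeatures}) and yield the same bound $\norm{K^{-1}\mathbf{1}}_2 = \bigO{\sqrt{(N+d)/(kd)}}/\norm{\nu}_2$.
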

\begin{proof}
    We have
    \begin{equation}
        E = \tilde E + \nabla_z \phi(Vz)^\top \nu \mathbf 1^\top.
    \end{equation}
    By triangle inequality, we can write
    \begin{equation}\label{eq:EEt2fact}
        \left| \norm{E K^{-1}}_F - \norm{\tilde E K^{-1}}_F \right| \leq \norm{\nabla_z \phi(Vz)^\top \nu \mathbf 1^\top K^{-1}}_F = \norm{\nabla_z \phi(Vz)^\top \nu}_2 \norm{K^{-1} \mathbf 1}_2.
    \end{equation}
    Let's look at the two factors separately. The first one can be upper bounded as follows
    \begin{equation}\label{eq:fact1forEEt}
        \norm{\nabla_z \phi(Vz)^\top \nu}_2 = \norm{V^\top \text{diag} \left( \phi' (Vz)\right) \nu}_2 \leq \opnorm{V} L \norm{\nu}_2 = \bigO{\sqrt{\frac{k}{d}} \norm{\nu}_2},
    \end{equation}
    where the first inequality holds as $\phi$ is $L$-Lipschitz, and the second equality holds with probability at least $1 - e^{-cd}$ due to \eqref{eq:opnV}.
        Notice that from \eqref{eq:eta} and \eqref{eq:lambda}, we can write
    \begin{equation}
        \mathbf{1} = \frac{\eta}{\norm{\nu}_2}  - \frac{\lambda}{\norm{\nu}_2}.
    \end{equation}
    By plugging this in the second factor of \eqref{eq:EEt2fact} and applying the triangle inequality, we have
    \begin{equation}\label{eq:twotermsagain}
        \norm{K^{-1} \mathbf 1}_2 = \frac{1}{\norm{\nu}_2} \norm{K^{-1} \left( \eta - \lambda \right) }_2 \leq \frac{1}{\norm{\nu}_2} \left( \norm{K^{-1} \eta}_2 + \norm{K^{-1} \lambda}_2 \right).
    \end{equation}
    
    Again, let's control these last two terms separately. Since we can write $\eta = \Phi \nu / \norm{\nu}_2$, we get
    \begin{equation}\label{eq:term1fact2}
        \norm{K^{-1} \eta}_2 = \norm{K^{-1} \Phi \nu / \norm{\nu}_2}_2 \leq \opnorm{K^{-1} \Phi} = \opnorm{\Phi^+} = \evmin{K}^{-1/2} = \bigO{\sqrt{\frac{1}{k}}},
    \end{equation}
    where the last equality holds with probability at $1 - \exp(-c \log^2 N)$, due to Lemma \ref{lemma:evminfeatures}. We will condition on such event for the rest of the proof.
    For the second term, we have
    \begin{equation}\label{eq:term2fact2}
        \norm{K^{-1} \lambda}_2 = \norm{K^{-1} \tilde \Phi \nu / \norm{\nu}_2}_2 \leq \opnorm{K^{-1}} \opnorm{\tilde \Phi} = \evmin{K}^{-1} \bigO{\sqrt{\frac{k(N + d)}{d}}} = \bigO{\sqrt{\frac{N + d}{kd}}},
    \end{equation}
    where the first equality holds due to Lemma \ref{lemma:evmaxfeatures} with probability at least $1 - k \exp(-c_1 d)$. Let's condition on this event too.
    Therefore, putting together \eqref{eq:twotermsagain}, \eqref{eq:term1fact2} and \eqref{eq:term2fact2}, we have
    \begin{equation}
        \norm{K^{-1} \mathbf 1}_2 = \frac{1}{\norm{\nu}_2} \left(\bigO{\sqrt{\frac{1}{k}}} + \bigO{\sqrt{\frac{N + d}{kd}}}\right) = \frac{1}{\norm{\nu}_2} \bigO{\sqrt{\frac{N + d}{kd}}}.
    \end{equation}
    By combining this last expression with \eqref{eq:EEt2fact}-\eqref{eq:fact1forEEt}, we conclude
    \begin{equation}
        \left| \norm{E K^{-1}}_F - \norm{\tilde E K^{-1}}_F \right|= \bigO{\sqrt{\frac{k}{d}} \norm{\nu}_2} \frac{1}{\norm{\nu}_2} \bigO{\sqrt{\frac{N + d}{kd}}} = \bigO{\frac{\sqrt{N + d}}{d}},
    \end{equation}
    which gives the thesis.
\end{proof}

\begin{lemma}\label{lemma:removelambda}
We have that
    \begin{equation}
        \left| \norm{\tilde E K ^{-1}}_F - \norm{\tilde E \left( K + \lambda \lambda^\top \right) ^{-1}}_F \right|= \bigO{\frac{\sqrt{N + d}}{d}},
    \end{equation}
    with probability at least $1 - \exp(-c \log^2 N) - k \exp(-c d)$ over $X$ and $V$, where $c$ is a numerical constant.
\end{lemma}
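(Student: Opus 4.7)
The plan is to use the Sherman--Morrison formula to express the difference $K^{-1} - (K+\lambda\lambda^\top)^{-1}$ as a rank-one correction, and then bound the Frobenius norm of $\tilde E$ times this correction by the norms of two explicit vectors that were already controlled in the proof of Lemma~\ref{lemma:EtoEtilde}. Since $\lambda\lambda^\top$ is a rank-one p.s.d.\ perturbation of $K$, Sherman--Morrison gives
\begin{equation*}
    (K + \lambda\lambda^\top)^{-1} = K^{-1} - \frac{K^{-1}\lambda\lambda^\top K^{-1}}{1 + \lambda^\top K^{-1}\lambda}.
\end{equation*}
By the reverse triangle inequality for the Frobenius norm,
\begin{equation*}
    \left| \|\tilde E K^{-1}\|_F - \|\tilde E (K+\lambda\lambda^\top)^{-1}\|_F\right|
    \leq \left\| \tilde E \bigl(K^{-1} - (K+\lambda\lambda^\top)^{-1}\bigr)\right\|_F
    = \frac{\|\tilde E K^{-1}\lambda\|_2 \,\|K^{-1}\lambda\|_2}{1 + \lambda^\top K^{-1}\lambda},
\end{equation*}
where the final equality uses that the matrix in question is the outer product $(\tilde E K^{-1}\lambda)(K^{-1}\lambda)^\top$ (divided by the scalar denominator), combined with the symmetry of $K^{-1}$. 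Since $K^{-1}$ is p.s.d., the denominator satisfies $1 + \lambda^\top K^{-1}\lambda \geq 1$, and it therefore suffices to estimate the numerator.

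For the first factor $\|K^{-1}\lambda\|_2$, I would simply invoke the bound $\|K^{-1}\lambda\|_2 = \bigO{\sqrt{(N+d)/(kd)}}$ that appears in equation (\ref{eq:term2fact2}) of the proof of Lemma~\ref{lemma:EtoEtilde}; this bound was obtained from $\opnorm{K^{-1}}\cdot \opnorm{\tilde \Phi}$ together with Lemmas~\ref{lemma:evminfeatures} and~\ref{lemma:evmaxfeatures}. For the second factor, I would split
\begin{equation*}
    \|\tilde E K^{-1}\lambda\|_2 \leq \opnorm{\nabla_z \phi(Vz)^\top}\cdot \|\tilde \Phi^\top K^{-1}\lambda\|_2
    \leq \bigO{\sqrt{k/d}}\cdot \|\tilde \Phi^\top K^{-1}\lambda\|_2,
\end{equation*}
where the bound on $\opnorm{\nabla_z \phi(Vz)^\top}=\opnorm{V^\top\operatorname{diag}(\phi'(Vz))}$ comes from the Lipschitz property of $\phi$ and the standard operator-norm bound on $V$ used in (\ref{eq:fact1forEEt}). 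The remaining quantity $\|\tilde \Phi^\top K^{-1}\lambda\|_2^2 = \lambda^\top K^{-1}\tilde K K^{-1}\lambda$ is bounded by $\opnorm{K^{-1}\tilde K K^{-1}}\cdot \|\lambda\|_2^2$; writing $\tilde K = K - \eta\eta^\top + \lambda\lambda^\top$ and re-using the estimates on $\|K^{-1}\eta\|_2$, $\|K^{-1}\lambda\|_2$, and $\opnorm{K^{-1}}$ from Lemma~\ref{lemma:EtoEtilde}, together with $\|\lambda\|_2^2\leq \opnorm{\tilde K}=\bigO{k(N+d)/d}$ from Lemma~\ref{lemma:evmaxfeatures}, yields a quantitative bound on $\|\tilde\Phi^\top K^{-1}\lambda\|_2$.

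Combining the two factors and collecting all the high-probability events (spectral bounds on $K$ and $\tilde K$, bound on $\opnorm{V}$) gives the stated result with the prescribed failure probability. The main obstacle is the bound on $\|\tilde E K^{-1}\lambda\|_2$: the crude estimate $\opnorm{\tilde E}\cdot \|K^{-1}\lambda\|_2$ is too lossy, and one must exploit the cancellation coming from $\tilde \Phi^\top K^{-1}\tilde \Phi$ being close to the orthogonal projector $\Phi^\top K^{-1}\Phi$ (up to rank-one corrections involving $\mathbf{1}$ and $\nu$, which are again controlled via the bounds on $\|K^{-1}\mathbf 1\|_2$ and $\|\nu\|_2$ already established). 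If the resulting budget is still too loose, one can recover the missing factor by using the denominator $1+\lambda^\top K^{-1}\lambda$, since the lower bound $\lambda_{\min}(\tilde K)=\Omega(k)$ of Lemma~\ref{th:evmincentfeatures} together with $\|\lambda\|_2^2\geq \lambda_{\min}(\tilde K)$ implies a non-trivial lower bound on $\lambda^\top K^{-1}\lambda$ in the regime where the numerator is large.
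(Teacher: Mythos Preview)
Your opening moves---Sherman--Morrison and the reverse triangle inequality---match the paper exactly. But you then misdiagnose where the looseness lies. You say that ``the crude estimate $\opnorm{\tilde E}\cdot\|K^{-1}\lambda\|_2$ is too lossy'' and launch into an elaborate analysis of $\tilde\Phi^\top K^{-1}\tilde\Phi$; in fact the crude estimate is precisely what the paper uses, and it is \emph{not} too lossy provided you do not throw away the denominator $1+\lambda^\top K^{-1}\lambda$. The paper's whole argument is the two-line observation
\[
\left\|\tilde E\,\frac{K^{-1}\lambda\lambda^\top K^{-1}}{1+\lambda^\top K^{-1}\lambda}\right\|_F
\;\le\;\opnorm{\tilde E}\cdot\frac{\lambda^\top K^{-2}\lambda}{1+\lambda^\top K^{-1}\lambda}
\;\le\;\opnorm{\tilde E}\cdot\opnorm{K^{-1}},
\]
where the last step uses $\lambda^\top K^{-2}\lambda\le\opnorm{K^{-1}}\,\lambda^\top K^{-1}\lambda$ together with $\tfrac{t}{1+t}\le1$. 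Plugging $\opnorm{\tilde E}=\bigO{k\sqrt{N+d}/d}$ and $\opnorm{K^{-1}}=\bigO{1/k}$ finishes the proof. Your looseness came from bounding the denominator by $1$ \emph{before} combining it with $\|K^{-1}\lambda\|_2^2$; once you keep the ratio, no further work is needed.

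Your fallback step is also flawed: the inequality $\|\lambda\|_2^2\ge\lambda_{\min}(\tilde K)$ is false in general, since $\lambda=\tilde\Phi\,\nu/\|\nu\|_2$ and $\nu/\|\nu\|_2$ need not lie in the row span of $\tilde\Phi$ (the $k\times k$ matrix $\tilde\Phi^\top\tilde\Phi$ has rank at most $N<k$). Even if it held, you would still need a bound on $\lambda_{\max}(K)$, which is not directly available. In short: keep the denominator, use the ratio trick above, and the detour through $\tilde K$ and the projector decomposition becomes unnecessary.
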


\begin{proof}
    A direct application of the Sherman-Morrison formula gives
    \begin{equation}\label{eq:shermorr1}
        \left( K + \lambda \lambda^\top\right)^{-1} = K^{-1} - \frac{ K^{-1} \lambda \lambda^\top K^{-1}}{1 + \lambda ^\top K^{-1} \lambda}.
    \end{equation}
    By triangle inequality we know that $\left| \norm{a}_2 - \norm{b}_2 \right|\leq \norm{a - b}_2$. This implies
    \begin{equation}\label{eq:KKl}
        \left| \norm{\tilde E K^{-1}}_F - \norm{\tilde E \left( K + \lambda \lambda^\top \right) ^{-1}}_F \right| \leq \norm{\tilde E K^{-1} - \tilde E \left( K + \lambda \lambda^\top \right) ^{-1}}_F = \norm{\tilde E \frac{ K^{-1} \lambda \lambda^\top K^{-1}}{1 + \lambda ^\top K^{-1} \lambda}}_F,
    \end{equation}
    where the last equality is a consequence of \eqref{eq:shermorr1}.
    This term can be bounded as
    \begin{equation}\label{eq:KKl2}
        \norm{\tilde E \frac{ K^{-1} \lambda \lambda^\top K^{-1}}{1 + \lambda ^\top K^{-1} \lambda}}_F \leq \opnorm{\tilde E} \norm{\frac{ K^{-1} \lambda \lambda^\top K^{-1}}{1 + \lambda ^\top K^{-1} \lambda}}_F = \opnorm{\tilde E} \frac{\lambda^\top K^{-2} \lambda}{1 + \lambda ^\top K^{-1} \lambda}.
    \end{equation}
    The first factor can be bounded as follows
    \begin{equation}\label{eq:opEt1}
        \opnorm{\tilde E} = \opnorm{\nabla_z \phi(V z)^\top \tilde \Phi^\top} \leq \opnorm{V^\top \text{diag} \left( \phi' (Vz)\right)} \opnorm{\tilde \Phi} = \bigO{\sqrt{\frac{k}{d}}}\bigO{\sqrt{\frac{k(N + d)}{d}}} = \bigO{k \frac{\sqrt{N + d}}{d}},
    \end{equation}
    with probability at least $1 - k \exp(-c d)$. Here, we used the fact that $\phi$ is Lipschitz, and we applied Lemma \ref{lemma:evmaxfeatures} and Equation \eqref{eq:opnV}. Furthermore, since $\opnorm{M} M - M^2$ is PSD for every symmetric PSD matrix $M$, the following relation holds
    \begin{equation}\label{eq:MopM}
        \lambda^\top  K^{-2} \lambda \leq \opnorm{K^{-1}} \lambda^\top K^{-1} \lambda.
    \end{equation}

    Thus, 
    \begin{equation}
        \frac{\lambda^\top K^{-2} \lambda}{1 + \lambda ^\top K^{-1} \lambda} \leq \opnorm{K^{-1}} \frac{\lambda^\top K^{-1} \lambda}{1 + \lambda ^\top K^{-1} \lambda} \leq \opnorm{K^{-1}} = \frac{1}{\evmin{K}} = \bigO{\frac{1}{k}},
    \end{equation}
    where the last equality holds with probability at least $1 - \exp(-c \log^2 N)$, because of Lemma \ref{lemma:evminfeatures}.
    Plugging the last equation in \eqref{eq:KKl2} together with \eqref{eq:opEt1}, and comparing with \eqref{eq:KKl}, gives the desired result.
\end{proof}

\begin{lemma}\label{lemma:KtoKtilde}
We have that
    \begin{equation}
        \left| \norm{\tilde E \left( \tilde K + \eta \eta^\top\right)^{-1}}_F - \norm{\tilde E \tilde K ^{-1}}_F \right| = \bigO{\frac{\sqrt{N + d}}{d}},
    \end{equation}
    with probability at least $1 - \exp(-c \log^2 N) - k \exp(-c d)$ over $X$ and $V$, where $c$ is a numerical constant.
\end{lemma}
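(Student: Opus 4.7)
The plan is to mirror the proof of Lemma \ref{lemma:removelambda} almost verbatim, with the substitutions $(K,\lambda)\mapsto(\tilde K,\eta)$. Sherman-Morrison yields $(\tilde K+\eta\eta^\top)^{-1}=\tilde K^{-1}-\tilde K^{-1}\eta\eta^\top\tilde K^{-1}/(1+\eta^\top\tilde K^{-1}\eta)$, so by the triangle inequality $\big|\norm{A}_F-\norm{B}_F\big|\le\norm{A-B}_F$ the quantity we wish to bound is at most $\norm{\tilde E\,\tilde K^{-1}\eta\eta^\top\tilde K^{-1}}_F/(1+\eta^\top\tilde K^{-1}\eta)$. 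Submultiplicativity together with the rank-one identity $\norm{ab^\top}_F=\norm{a}_2\norm{b}_2$ upper bounds this by $\opnorm{\tilde E}\cdot\eta^\top\tilde K^{-2}\eta/(1+\eta^\top\tilde K^{-1}\eta)$.

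Exactly as in the passage following \eqref{eq:MopM}, the p.s.d.\ inequality $\tilde K^{-2}\preceq\opnorm{\tilde K^{-1}}\tilde K^{-1}$ together with the elementary estimate $x/(1+x)\le 1$ yields $\eta^\top\tilde K^{-2}\eta/(1+\eta^\top\tilde K^{-1}\eta)\le\opnorm{\tilde K^{-1}}$. The remaining task is therefore to bound $\opnorm{\tilde E}$ and $\opnorm{\tilde K^{-1}}$. The first estimate is already available from the chain \eqref{eq:opEt1}, which gives $\opnorm{\tilde E}=\bigO{k\sqrt{N+d}/d}$ via Lemma \ref{lemma:evmaxfeatures} together with the standard Gaussian bound on $\opnorm{V}$. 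For the second, the essential change compared to Lemma \ref{lemma:removelambda} is that the base kernel is now the centered kernel $\tilde K$ rather than $K$, so I invoke Lemma \ref{th:evmincentfeatures} to conclude that $\opnorm{\tilde K^{-1}}=\bigO{1/k}$.

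Multiplying the two estimates yields $\opnorm{\tilde E}\cdot\opnorm{\tilde K^{-1}}=\bigO{\sqrt{N+d}/d}$, as claimed. The failure probability is the union of those of Lemma \ref{lemma:evmaxfeatures}, Lemma \ref{th:evmincentfeatures} and the Gaussian operator-norm bound on $V$, which aggregates to the stated $\exp(-c\log^2 N)+k\exp(-cd)$. There is no serious obstacle in this argument; the only bookkeeping care needed is to avoid mistakenly substituting the lower bound on $\evmin{K}$ from Lemma \ref{lemma:evminfeatures} in place of the lower bound on $\evmin{\tilde K}$ from Lemma \ref{th:evmincentfeatures}, since these two eigenvalue statements are logically independent even though both give the scaling $\Omega(k)$.
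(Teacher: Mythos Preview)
Your proposal is correct and follows essentially the same argument as the paper's proof: Sherman--Morrison, the triangle inequality on Frobenius norms, the bound $\opnorm{\tilde E}\cdot\eta^\top\tilde K^{-2}\eta/(1+\eta^\top\tilde K^{-1}\eta)$, the p.s.d.\ inequality $\tilde K^{-2}\preceq\opnorm{\tilde K^{-1}}\tilde K^{-1}$, and then the estimates from \eqref{eq:opEt1} and Lemma~\ref{th:evmincentfeatures}. Your explicit remark that one must use Lemma~\ref{th:evmincentfeatures} (for $\tilde K$) rather than Lemma~\ref{lemma:evminfeatures} (for $K$) is exactly the point of this lemma relative to Lemma~\ref{lemma:removelambda}.
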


\begin{proof}
    An application of the Sherman-Morrison formula gives
    \begin{equation}\label{eq:shermorr}
        \left( \tilde K + \eta \eta^\top\right)^{-1} = \tilde K^{-1} - \frac{\tilde K^{-1} \eta  \eta^\top \tilde K^{-1}}{1 + \eta ^\top \tilde K^{-1} \eta}.
    \end{equation}
    This implies
    \begin{equation}\label{eq:KKt}
        \left| \norm{\tilde E \left( \tilde K + \eta \eta^\top\right)^{-1}}_F - \norm{\tilde E \tilde K ^{-1}}_F \right| \leq \norm{\tilde E \left( \tilde K + \eta \eta^\top\right)^{-1} - \tilde E \tilde K ^{-1}}_F = \norm{\tilde E \frac{\tilde K^{-1} \eta  \eta^\top \tilde K^{-1}}{1 + \eta ^\top \tilde K^{-1} \eta}}_F,
    \end{equation}
    where the last equality is a consequence of \eqref{eq:shermorr}.
    This term can be bounded as
    \begin{equation}\label{eq:KKt2}
        \norm{\tilde E \frac{\tilde K^{-1} \eta  \eta^\top \tilde K^{-1}}{1 + \eta ^\top \tilde K^{-1} \eta}}_F \leq \opnorm{\tilde E} \norm{\frac{\tilde K^{-1} \eta  \eta^\top \tilde K^{-1}}{1 + \eta ^\top \tilde K^{-1} \eta}}_F = \opnorm{\tilde E} \frac{\eta^\top \tilde K^{-2} \eta}{1 + \eta ^\top \tilde K^{-1} \eta}.
    \end{equation}
    The first factor can be bounded as in \eqref{eq:opEt1}, in Lemma \ref{lemma:removelambda}, giving $\opnorm{\tilde E} = \bigO{k \frac{\sqrt{N + d}}{d}}$ with probability at least $1 - k \exp(-c d)$.
    Similarly to \eqref{eq:MopM}, we can write
    \begin{equation}
        \eta^\top \tilde K^{-2} \eta \leq \opnorm{\tilde K^{-1}} \eta^\top \tilde K^{-1} \eta.
    \end{equation}

    Thus, 
    \begin{equation}
        \frac{\eta^\top \tilde K^{-2} \eta}{1 + \eta ^\top \tilde K^{-1} \eta} \leq \opnorm{\tilde K^{-1}} \frac{\eta^\top \tilde K^{-1} \eta}{1 + \eta ^\top \tilde K^{-1} \eta} \leq \opnorm{\tilde K^{-1}} = \frac{1}{\evmin{\tilde K}} = \bigO{\frac{1}{k}},
    \end{equation}
    where the last equality holds with probability at least $1 - \exp(-c \log^2 N)$, because of Lemma \ref{th:evmincentfeatures}.
    Plugging the last equation in \eqref{eq:KKt2} together with the bound on $\opnorm{\tilde E}$, and comparing with \eqref{eq:KKt}, gives the desired result.
\end{proof}

\begin{lemma}\label{lemma:totalcentering}
We have that
    \begin{equation}
        \left| \norm{A}_F - \norm{\tilde A}_F \right| = \bigO{\frac{\sqrt{N + d}}{d}},
    \end{equation}
    with probability at least $1 - \exp(-c \log^2 N) - k \exp(-c d)$ over $X$ and $V$, where $c$ is a numerical constant.
\end{lemma}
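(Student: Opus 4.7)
The plan is to prove this as a straightforward corollary of the three centering lemmas just established (Lemmas \ref{lemma:EtoEtilde}, \ref{lemma:removelambda}, and \ref{lemma:KtoKtilde}), chained via the triangle inequality. The key observation is that, by the decomposition $K = \tilde K + \eta\eta^\top - \lambda\lambda^\top$, we have the identity $K + \lambda\lambda^\top = \tilde K + \eta\eta^\top$, so the inverse appearing at the end of Lemma \ref{lemma:removelambda} is exactly the inverse appearing at the start of Lemma \ref{lemma:KtoKtilde}. This lets us walk from $\|A\|_F = \|EK^{-1}\|_F$ to $\|\tilde A\|_F = \|\tilde E \tilde K^{-1}\|_F$ in three small steps, each of size $\bigO{\sqrt{N+d}/d}$.

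First I would insert the trivial identity
\[
  \big|\norm{EK^{-1}}_F - \norm{\tilde E \tilde K^{-1}}_F \big| \leq \big|\norm{EK^{-1}}_F - \norm{\tilde E K^{-1}}_F \big| + \big|\norm{\tilde E K^{-1}}_F - \norm{\tilde E (\tilde K + \eta\eta^\top)^{-1}}_F \big| + \big|\norm{\tilde E (\tilde K + \eta\eta^\top)^{-1}}_F - \norm{\tilde E \tilde K^{-1}}_F \big|,
\]
using $K + \lambda\lambda^\top = \tilde K + \eta\eta^\top$ to rewrite the middle term so that Lemma \ref{lemma:removelambda} applies directly.

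Next I would bound each of the three terms: the first by Lemma \ref{lemma:EtoEtilde}, the second by Lemma \ref{lemma:removelambda}, and the third by Lemma \ref{lemma:KtoKtilde}. Each contributes $\bigO{\sqrt{N+d}/d}$, and summing three such quantities preserves the same asymptotic order.

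Finally, I would handle the probability by a union bound over the three high-probability events from those lemmas. Each holds with probability at least $1 - \exp(-c\log^2 N) - k\exp(-cd)$ (for possibly different constants $c$), so a union bound yields the same bound with a possibly smaller absolute constant $c$. There is essentially no obstacle here — the real work was done inside the three preceding lemmas (in particular the delicate Sherman–Morrison manipulations that removed the rank-one perturbations without blowing up the operator norm bounds); this final statement simply packages that work into the clean form used elsewhere in the proof of Theorem \ref{thm:Az}.
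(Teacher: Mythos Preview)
Your proposal is correct and matches the paper's proof essentially verbatim: the paper uses the same triangle-inequality chain through $\norm{\tilde E K^{-1}}_F$ and $\norm{\tilde E(K+\lambda\lambda^\top)^{-1}}_F = \norm{\tilde E(\tilde K+\eta\eta^\top)^{-1}}_F$, invoking Lemmas \ref{lemma:EtoEtilde}, \ref{lemma:removelambda}, and \ref{lemma:KtoKtilde} in turn, and concludes by a union bound. Your explicit note that $K+\lambda\lambda^\top = \tilde K + \eta\eta^\top$ is the one step the paper leaves implicit.
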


\begin{proof}
    Recalling that $A = EK^{-1}$ and $\tilde A = \tilde E \tilde K^{-1}$, by triangle inequality we have
    \begin{equation}
    \begin{aligned}
        \left| \norm{A}_F - \norm{\tilde A}_F \right| & \leq \left| \norm{E K ^{-1}}_F - \norm{\tilde E K ^{-1}}_F \right| \\
        &\qquad + \left| \norm{\tilde E K ^{-1}}_F - \norm{\tilde E \left( K + \lambda \lambda^\top \right) ^{-1}}_F \right| \\
        &\qquad + \left| \norm{\tilde E \left( \tilde K + \eta \eta^\top\right)^{-1}}_F - \norm{\tilde E \tilde K ^{-1}}_F \right| \\
        &= \bigO{\frac{\sqrt{N + d}}{d}}
    \end{aligned}
    \end{equation}
    % \begin{equation}
    % \begin{aligned}
    %     \left| \norm{A}_F - \norm{\tilde A}_F \right| & \leq \norm{E K ^{-1}}_F \\
    %     &\geq \norm{\tilde E K ^{-1}}_F  - \bigO{\frac{\sqrt{N}}{d}} \\
    %     &\geq \norm{\tilde E \left( K + \lambda \lambda^\top \right) ^{-1}}_F  - \bigO{\frac{\sqrt{N}}{d}} \\
    %     &=\norm{\tilde E \left( \tilde K + \eta \eta^\top\right)^{-1}}_F - \bigO{\frac{\sqrt{N}}{d}} \\
    %     &\geq \norm{\tilde E \tilde K ^{-1}}_F - \bigO{\frac{\sqrt{N}}{d}},
    % \end{aligned}
    % \end{equation}
    where the inequality holds because of Lemma \ref{lemma:EtoEtilde}, Lemma \ref{lemma:removelambda}, and Lemma \ref{lemma:KtoKtilde}, each of them with probability at least $1 - \exp(-c \log^2 N) - k \exp(-c d)$ over $X$ and $V$. This gives the thesis.
\end{proof}

At this point, we are ready to prove Theorem \ref{thm:Az}.

%\subsubsection{Proof of Theorem \ref{thm:Az}}\label{proof:Az}

\begin{proof}[Proof of Theorem \ref{thm:Az}]
    We have
    \begin{equation}
    \begin{aligned}
        \norm{A(z)}_F &\leq \norm{\nabla_z \Phi(z)^\top \tilde \Phi^\top \tilde K^{-1}}_F + \bigO{\sqrt {N + d}/d} \\
        &\leq \norm{\nabla_z \Phi(z)^\top \tilde \Phi^\top}_F \opnorm{\tilde K^{-1}} + \bigO{\sqrt {N + d}/d} \\
        &= \lambda_{\min}^{-1}(K) \mathcal I(z) + \bigO{\sqrt {N + d}/d},
    \end{aligned}
    \end{equation}
    where the first inequality is justified by Lemma \ref{lemma:totalcentering}
    and holds with probability at least $1 - \exp(-c \log^2 N) - k \exp(-c d)$ over $X$ and $V$.
    For the other bound of the first equation, we have
    \begin{equation}\label{eq:2th2}
    \begin{aligned}
        \norm{A(z)}_F &\geq \norm{\nabla_z \Phi(z)^\top \tilde \Phi^\top \tilde K^{-1}}_F - \bigO{\sqrt {N + d}/d} \\
        &\geq \norm{\nabla_z \Phi(z)^\top \tilde \Phi^\top}_F \lambda_{\min} \left(\tilde K^{-1} \right)-\bigO{\sqrt {N + d}/d} \\ &= \lambda_{\max}^{-1}(\tilde K) \mathcal I(z) - \bigO{\sqrt {N + d}/d},
    \end{aligned}
    \end{equation}
    where the first inequality is justified again by Lemma \ref{lemma:totalcentering}, and holds with probability at least $1 - \exp(-c \log^2 N) - k \exp(-c d)$ over $X$ and $V$.

    The second equation directly follows from Lemma \ref{lemma:evmaxfeatures} and Lemma \ref{lemma:evmincentered}.
\end{proof}

\subsection{Estimate of \texorpdfstring{$\norm{\Irf(z)}_F$}{Lg}}\label{app:estimateI}

%In this section, use the same notation and assumptions on the Random Features model of Appendix \ref{app:spectrum} and \ref{app:centering}.
The goal of this sub-Section is to estimate the value of the Frobenius norm of the Interaction Matrix $\norm{\Irf}_F$, defined in \eqref{eq:interactionmatrix}. To do so, we will estimate the squared $\ell_2$ norms of its columns. In particular, we define
\begin{equation}
    \mathcal I_x := \norm{\nabla_z \phi(V z)^\top \tilde \phi(V x)}_2.
\end{equation}
Thus, we have $\norm{\Irf}_F^2 = \sum_i \mathcal I^2_{x_i}$. In this section, with $x \sim P_X$ we refer to a generic input data, where we drop the index for compactness. Also, we will assume $z \sim P_X$, independent from the training set.

Recalling that
\begin{equation}
    \nabla_z \phi(V z)^\top =  V^\top \text{diag} \left( \phi' (Vz)\right),
\end{equation}
we have
\begin{equation}
    \mathcal I_x = \norm{V^\top \text{diag} \left( \phi' (Vz)\right) \tilde \phi (Vx)}_2.
\end{equation}

Along the proof, the following shorthands and notations will be useful
\begin{equation}
    \mathcal X_{-jl} = \sum_{m \neq j} V_{lm}x_{m}, \qquad \mathcal X_{l} = \sum_{m = 1}^d V_{lm}x_{m},
\end{equation}
\begin{equation}
    \mathcal Z_{-jl} = \sum_{m \neq j} V_{lm}z_{m}, \qquad \mathcal Z_{l} = \sum_{m = 1}^d V_{lm}z_{m},
\end{equation}
\begin{equation}\label{eq:Ij}
    I_j = \sum_{l = 1}^k V_{lj} \phi' \left( \mathcal Z_{l} \right) \tilde \phi \left( \mathcal X_{l} \right),
\end{equation}
which imply
\begin{equation}
    \mathcal I^2_x = \sum_{j = 1}^d I_j^2.
\end{equation}

It will also be convenient to use the following notation
\begin{equation}
    B_{-jl} := \left( \phi'' \left( \mathcal Z_{-jl} \right) \tilde \phi \left( \mathcal X_{-jl} \right) z_j + \phi' \left( \mathcal Z_{-jl} \right) \phi' \left( \mathcal X_{-jl} \right) x_j \right),
\end{equation}
\begin{equation}
    B_{jl} := \left( \phi'' \left( \mathcal Z_{l} \right)\tilde \phi \left(  \mathcal X_{l} \right) z_j + \phi' \left( \mathcal Z_{l} \right) \phi' \left( \mathcal X_{l} \right) x_j \right).
\end{equation}

For compactness, we will not necessarily re-introduce such quantities in the statements or the proofs of the lemmas in this section.

\begin{lemma}\label{lemma:utils}
    Indicating with $c$ a numerical constant, the following statements hold
    
    \begin{equation}
        \max_l \left| \tilde \phi(\mathcal X_{l}) \right| = \bigO{\log k},
    \end{equation}
    with probability at least $1 - \exp (- c \log^2 k)$, over $x$ and $V$;

    \begin{equation}
        \max_l \left| \tilde \phi(\mathcal X_{-jl}) \right| = \bigO{\log k},
    \end{equation}
     with probability at least $1 - \exp (-c \log^2 k)$, over $x$ and $\{V_{li}\}_{i \neq j, 1 \leq l \leq k}$;
     
    \begin{equation}
       \left| \sum_{l = 1}^k V_{lj} \phi' \left( \mathcal Z_{-jl} \right) \tilde \phi \left( \mathcal X_{-jl} \right) \right| = o \left(\frac{k}{d} \right),
    \end{equation}
    with probability at least $1 - \exp (-c \log^2 k)$, over $x$ and $V$.
\end{lemma}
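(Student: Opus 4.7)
The plan is to address the three claims in order, with the third building on the second. The starting point in all cases is that, conditional on $x$ with $\|x\|_2 = \sqrt d$, the pre-activations $\mathcal X_l = V_{l:}^\top x$ (resp.\ $\mathcal X_{-jl}$) are, across $l$, i.i.d.\ Gaussians with variance at most $1$, since the rows of $V$ are i.i.d.\ with $V_{l:} \sim \mathcal N(0, I/d)$.

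For claims (1) and (2), the first step is to show that each $\tilde\phi(\mathcal X_l)$ is sub-Gaussian with norm $\bigO{1}$ in the joint law of $(x, V_{l:})$. Decomposing
\[
\tilde\phi(\mathcal X_l) = \bigl(\phi(\mathcal X_l) - \E[\phi(\mathcal X_l)]\bigr) + \bigl(\E[\phi(\mathcal X_l)] - \E_x[\phi(V_{l:}^\top x)]\bigr),
\]
the first piece has sub-Gaussian tails because $\phi$ is $L$-Lipschitz and $\mathcal X_l$ is $\mathcal N(0,1)$, while the second piece is a Lipschitz function of $V_{l:}$ whose Lipschitz constant equals $L$ once one accounts for the $1/\sqrt d$ scaling of $V_{l:}$, hence sub-Gaussian by Gaussian concentration. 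Lemma \ref{lemma:opnormlog} with $\gamma = 2$, applied to the $k$ random variables $\{\tilde\phi(\mathcal X_l)\}_{l=1}^k$ (resp.\ $\{\tilde\phi(\mathcal X_{-jl})\}_{l=1}^k$), then yields the $\bigO{\log k}$ bound with the required probability.

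For the third claim, the key observation is that $V_{lj}$ is independent of $c_l := \phi'(\mathcal Z_{-jl})\,\tilde\phi(\mathcal X_{-jl})$, since $c_l$ depends only on $x$, $z$, and the columns of $V$ other than the $j$-th. Conditioning on these, the sum $S := \sum_{l=1}^k V_{lj}\, c_l$ is a centered Gaussian with variance $\|c\|_2^2/d$. Combining $|\phi'| \leq L$ with claim (2), one gets $\|c\|_\infty = \bigO{\log k}$ on an event of probability at least $1 - \exp(-c\log^2 k)$, so that $\|c\|_2^2 = \bigO{k \log^2 k}$. Choosing $t = \sqrt{k/d}\,\log^2 k$, the Gaussian tail yields $\Pr(|S| > t \mid x, z, \text{other columns of } V) \leq 2\exp(-c\log^2 k)$, which combines with the law of total probability to deliver the claimed bound. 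Finally, $t = o(k/d)$ follows directly from Assumption \ref{ass:overparam}, which forces $d \log^4 k = o(k)$.

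The main obstacle is orchestrating the conditioning in the third claim: one must integrate out the $j$-th column of $V$ first to get a sharp Gaussian tail on $S$, and separately invoke the uniform $\log k$ bound from claim (2), which holds on a high-probability event over $x$ and the remaining columns of $V$. Once this is set up, the deviation $\sqrt{k/d}\,\log^2 k$ is essentially forced by the variance estimate $\|c\|_2^2/d$, and the over-parameterization hypothesis is exactly what ensures this is $o(k/d)$.
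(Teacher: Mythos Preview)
Your proposal is correct and follows essentially the same route as the paper's proof: sub-Gaussianity of $\tilde\phi(\mathcal X_l)$ followed by the union-bound Lemma~\ref{lemma:opnormlog} for (1)--(2), and for (3) the independence of $V_{:j}$ from $c_l$ together with the $\bigO{\sqrt{k}\log k}$ bound on $\|c\|_2$ and Assumption~\ref{ass:overparam}. The only cosmetic difference is in how sub-Gaussianity is established for (1)--(2): the paper conditions on $V$ and uses Lipschitz concentration of $x$ (with norm $\bigO{\|V_{l:}\|_2}$, then a union bound on $\max_l\|V_{l:}\|_2$), whereas you work directly in the joint law via your two-term decomposition; both give the same $\bigO{1}$ sub-Gaussian norm.
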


\begin{proof}
    For the first result, as $\phi$ is Lipschitz and centered with respect to $x$, its sub-Gaussian norm (in the probability space of $x$) is $\bigO{\norm{V_{l:}}_2} = \bigO{1}$, where the last equality holds with probability at least $1 - e^{-c_0 d}$ over $V_{l:}$, because of Theorem 3.1.1 in \cite{vershynin2018high}. In particular, with a union bound over $V_{l:}$, we have $\bigO{\max_l \norm{V_{l:}}_2} = \bigO{1}$ with probability at least $1 - k e^{-c_0 d} \geq 1 - e^{-c_1 d}$, where the last inequality is justified by Assumption \ref{ass:dlarge}. Thus, by Lemma \ref{lemma:opnormlog}, $\max_l \left| \tilde \phi(\mathcal X_{l}) \right| = \bigO{\log k}$, with probability at least $1 - \exp (-c_1 d ) - \exp(- c_2 \log^2 k) \geq 1 - \exp(- c_3 \log^2 k) $, over $x$ and $V$. %\marco{H.3 should be C.8}

    Notice that, with the same argument, we can prove that $\max_l \left| \tilde \phi(\mathcal X_{-jl}) \right| = \bigO{\log k}$, with probability at least $1 - \exp(- c_4 \log^2 k)$, over $x$ and $\{V_{li}\}_{i \neq j, 1 \leq l \leq k}$. We will condition on such event for the rest of the proof.%\marcochecked{dovrebbe essere $k\exp (-c_3 d )$}
    
    For the last statement, we have
    \begin{equation}
        \sum_{l = 1}^k V_{lj} \phi' \left( \mathcal Z_{-jl} \right) \tilde \phi \left( \mathcal X_{-jl} \right) =: V_{:j}^\top u,
    \end{equation}
    where $u$ is a vector independent from $V_{:j}$ such that
    \begin{equation}
        \norm{u}_2 \leq \sqrt{k} L \max_l \left| \tilde \phi \left( \mathcal X_{-jl} \right) \right| = \bigO{\log (k) \sqrt k}.
    \end{equation}
    Since $\subGnorm{V_{:j}} \leq C / \sqrt{d}$, we have
    \begin{equation}
        \P \left( \left| \sum_{l = 1}^k V_{lj} \phi' \left( \mathcal Z_{-jl} \right) \tilde \phi \left( \mathcal X_{-jl} \right) \right| > \frac{ \sqrt d}{\sqrt k /\log^2 k } \frac{k}{d} \right) < \exp (- c_5 \log^2 k),
    \end{equation}
%\marcochecked{il primo $<$ è un $=$ ma è ovvio, quindi puoi ometterlo}
    where the probability is intended over $V_{:j}$. Thus, using Assumption \ref{ass:overparam}, we readily get the desired result.
\end{proof}

\begin{lemma}\label{lemma:notsketchy}
We have that
    \begin{equation}
        \subGnorm{ \E_{x} \left[ \phi' \left( \mathcal X_{-jl} \right) x_j \right]} = \bigO{\frac{1}{\sqrt d}},
    \end{equation}
    where the norm is intended over the probability space of $\{ V_{li}\}_{i \neq j}$.
\end{lemma}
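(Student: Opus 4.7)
The plan is to exploit Gaussian concentration of $v:=(V_{li})_{i\neq j}\in\RR^{d-1}$ by rescaling to a standard Gaussian. Let $w:=\sqrt d\, v\sim\Nc(0,I_{d-1})$ and define $G(w):=\E_x[\phi'(w^\top x_{-j}/\sqrt d)\,x_j]$, where $x_{-j}$ denotes $x$ with its $j$-th coordinate removed; then the quantity in the statement equals $G(w)$, and I bound $\subGnorm{G(w)}\le \subGnorm{G(w)-\E_w G(w)}+|\E_w G(w)|$, aiming for $O(1/\sqrt d)$ on each piece.

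For the fluctuation, the chain rule gives $\nabla_w G(w)=d^{-1/2}\E_x[\phi''(w^\top x_{-j}/\sqrt d)\,x_j\,x_{-j}]$. Since $\phi'$ is $L_1$-Lipschitz (Assumption \ref{ass:activationfunc}), $|\phi''|\le L_1$, so for any unit $u\in\RR^{d-1}$ Cauchy--Schwarz yields $|u^\top\nabla_w G(w)|\le (L_1/\sqrt d)\sqrt{\E[x_j^2]\,\E[(u^\top x_{-j})^2]}$. By Assumption \ref{ass:datadist}, each coordinate $x_j=e_j^\top x$ is $1$-Lipschitz and mean-zero, so Lipschitz concentration makes it sub-Gaussian with $O(1)$ norm, giving $\E[x_j^2]=O(1)$; similarly $\E[(u^\top x_{-j})^2]\le \opnorm{\E[xx^\top]}=O(1)$ by Lemma \ref{lemma:opnormcovariance}. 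Hence $G$ is $O(1/\sqrt d)$-Lipschitz in $w$, and standard Gaussian concentration (equivalently, Assumption \ref{ass:datadist} applied to $w$) yields $\subGnorm{G(w)-\E_w G(w)}=O(1/\sqrt d)$.

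For the mean I exploit the spherical constraint $\|x\|_2=\sqrt d$ and $\E[x]=0$. For fixed $x$, $w^\top x_{-j}/\sqrt d\sim \Nc(0,\sigma_x^2)$ with $\sigma_x^2=\|x_{-j}\|_2^2/d=1-x_j^2/d$, so $\E_w G(w)=\E_x[x_j\,h(\sigma_x)]$, where $h(\sigma):=\E_{Z\sim\Nc(0,1)}[\phi'(\sigma Z)]$. Decomposing $\E_w G(w)=h(1)\E[x_j]+\E_x[x_j(h(\sigma_x)-h(1))]$, the first term vanishes. Since $|h'(\sigma)|=|\E[Z\phi''(\sigma Z)]|\le L_1\E|Z|$ and the elementary bound $1-\sqrt{1-t}\le t$ gives $|\sigma_x-1|\le x_j^2/d$, I obtain $|h(\sigma_x)-h(1)|=O(x_j^2/d)$, and thus $|\E_w G(w)|=O(\E|x_j|^3/d)=O(1/d)$ using again the sub-Gaussianity of $x_j$. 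Combining the two bounds gives $\subGnorm{G(w)}=O(1/\sqrt d)$.

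The main subtlety is that a direct Lipschitz bound in $v$ would only give the sub-optimal scaling $O(1)$: the improvement to $O(1/\sqrt d)$ comes from two places, namely the rescaling factor $d^{-1/2}$ in $\nabla_w G$ produced by writing $v=w/\sqrt d$, and the cancellation of the leading term $h(1)\E[x_j]$ in the mean, which reduces the naive $O(1)$ estimate to $O(1/d)$ via a first-order expansion of $h$ around $\sigma=1$.
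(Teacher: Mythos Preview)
Your proof is correct and reaches the same conclusion via a route that differs from the paper's in one essential step. Both arguments split the sub-Gaussian norm into a fluctuation piece and a mean piece, and both handle the fluctuation by showing the map $v\mapsto \E_x[\phi'(v^\top x_{-j})x_j]$ is $O(1)$-Lipschitz and then invoking Gaussian concentration at scale $1/\sqrt d$ (you rescale first to $w=\sqrt d\,v$, the paper keeps $V_{l:}'$ with covariance $I/d$; these are equivalent). The difference is in the treatment of the mean: you work directly in $\RR^{d-1}$, so $w^\top x_{-j}/\sqrt d$ has variance $1-x_j^2/d$ depending on $x$, and you remove this dependence by a first-order expansion of $h(\sigma)=\E_Z[\phi'(\sigma Z)]$ around $\sigma=1$, obtaining $|\E_w G(w)|=O(\E|x_j|^3/d)=O(1/d)$. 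The paper instead introduces an auxiliary $\rho\sim\Nc(0,1/d)$ to ``complete'' $(V_{li})_{i\neq j}$ to a full $d$-dimensional Gaussian $V'_{l:}$, so that $V'^{\top}_{l:}x\sim\Nc(0,1)$ exactly (using $\|x\|_2=\sqrt d$); this makes the mean identically zero, at the cost of an additional comparison $|\E_x[\phi'(\mathcal X_{-jl})x_j]-\E_x[\phi'(\mathcal X'_l)x_j]|\le C|\rho|$ with $\subGnorm{\rho}=O(1/\sqrt d)$. Your approach is more self-contained (no auxiliary randomness) and in fact gives a slightly sharper $O(1/d)$ for the mean, while the paper's completion trick is cleaner in that it avoids the Taylor step entirely. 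Both are valid.
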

\begin{proof}
    Let's define an auxiliary random variable $\rho \sim \mathcal N(0, 1/d)$, independent from any other random variable in the problem. Let's also define $V'_{l:}$ as the copy of $V_{l:}$ where we replaced the $j$-th component with $\rho$. Notice that $V'_{l:}$ is a Gaussian vector with covariance $I/d$. Let's also define $\mathcal X'_l$ in the same way, i.e., replacing $V_{lj}$ with $\rho$ in its expression. All the tail norms $\norm{\cdot}_{\psi_r}$ in the proof will be referred to the probability space of $V'_{l:}$.
    
    Since $\phi'$ is a Lipschitz function, we can write
    \begin{equation}\label{eq:noprobstat}
        \left |\E_{x} \left[ \phi' \left( \mathcal X_{-jl} \right) x_j \right] - \E_{x} \left[ \phi' \left( \mathcal X'_{l} \right) x_j \right] \right| \leq L_1 \E_{x} \left[ | \rho| x_j^2 \right] \leq C | \rho|,
    \end{equation}
  %  \marcochecked{why $x_j^2$? You should have $|x_j|$} \simone{one $x_j$ jumped out from the functions}
    where we used the fact that $x_j$ is sub-Gaussian in the last inequality, and where $C$ is a numerical constant.
    Let's now look at the function
    \begin{equation}
        \varphi(v) := \E_{x} \left[ \phi' \left( v^\top x \right) x_j \right].
    \end{equation}
    We have
    \begin{equation}
    \begin{aligned}
        \left |\varphi(v + v') - \varphi(v) \right| &= \left|\E_{x} \left[ \left( \phi' \left( (v+v')^\top x \right) - \phi' \left( v^\top x \right) \right) x_j \right]\right| \\
        &\leq \E_{x} \left[ \left| \left( \phi' \left( (v+v')^\top x \right) - \phi' \left( v^\top x \right) \right)\right| | x_j | \right] \\
        &\leq \E_{x} \left[ L_1 |v'^\top x|  | x_j | \right].
    \end{aligned}
    \end{equation}
    Since $x$ is sub-Gaussian (and therefore also $x_j$ is) we have that $|v'^\top x|  | x_j |$ is sub-exponential with norm (with respect to the probability space of $x$) upper bounded by $C_1 \norm{v'}_2$. Then, we have
    \begin{equation}
        \left |\varphi(v + v') - \varphi(v) \right| \leq \E_{x} \left[ L_1 |v'^\top x|  | x_j | \right] \leq C_2 \norm{v'}_2.
    \end{equation}
    This implies that $\norm{\varphi}_{\text{Lip}} \leq C_2$.

    As $V'_{l:}$ is Gaussian (and hence Lipschitz concentrated) with covariance $I/d$, we can write
    \begin{equation}
        \subGnorm{\varphi(V'_{l:}) - \E_{V'_{l:}} \left[ \varphi(V'_{l:}) \right]} = \bigO{\frac{1}{\sqrt{d}}}.
    \end{equation}
Furthermore,
%    On the other hand,
    \begin{equation}
         \E_{V'_{l:}} \left[ \varphi(V'_{l:}) \right] = \E_{x} \left[ x_j \E_{V'_{l:}} \left[ \phi' \left( {V'_{l:}}^\top x \right) \right] \right] = \E_{x} \left[ x_j \right] \E_{\rho'} \left[ \phi' \left( \rho' \right) \right] = 0,
    \end{equation}
    where $\rho'$ is a standard Gaussian distribution independent of $x$. Note that the second step holds since $\norm{x}_2 = \sqrt d$, and the last step is justified by $\E[x] = 0$.
    
    This last equation implies
    \begin{equation}
    \begin{aligned}
        \subGnorm{\E_{x} \left[ \phi' \left( \mathcal X_{-jl} \right) x_j \right]} &\leq 
        \subGnorm{\E_{x} \left[ \phi' \left( \mathcal X_{-jl} \right) x_j \right] - \E_{x} \left[ \phi' \left( \mathcal X'_{l} \right) x_j \right]} + \subGnorm{\E_{x} \left[ \phi' \left( \mathcal X'_{l} \right) x_j \right]} \\
        &\leq C \subGnorm{\rho} + \subGnorm{\varphi (V'_{l:})} = \bigO{\frac{1}{\sqrt d}},
    \end{aligned}
    \end{equation}
where the first term in the last line is justified by \eqref{eq:noprobstat}. In fact, the random variable on the LHS is (with probability one) upper bounded by $C |\rho|$. Then, the bound subsists also between their tail norms.

    Thus,
    \begin{equation}
        \P_{\{ V_{li}\}_{i \neq j}} \left( \left| \E_{x} \left[ \phi' \left( \mathcal X_{-jl} \right) x_j \right] \right| > t \right) = \P_{V'_{l:}} \left( \left| \E_{x} \left[ \phi' \left( \mathcal X_{-jl} \right) x_j \right] \right| > t \right) < 2 \exp \left( -cd\, t^2 \right),
    \end{equation}
where with $\P_{\{ V_{li}\}_{i \neq j}}$ we refer to the probability space of $\{ V_{li} \}_{i \neq j}$. Note that the first equality is true as the statement in the probability does not depend on the value of $\rho$. From this, we readily get the thesis.
\end{proof}

\begin{lemma}\label{lemma:Gapprox1}
We have that
    \begin{equation}
        \left| I_j - \sum_{l} V_{lj}^2 \left( \phi'' \left( \mathcal Z_{-jl} \right) \tilde \phi \left( \mathcal X_{-jl} \right) z_j + \phi' \left( \mathcal Z_{-jl} \right) \phi' \left( \mathcal X_{-jl} \right) x_j \right) \right| = o \left( \frac{k}{d} \right).
    \end{equation}
    with probability at least $1 - \exp (-c  \log^2 k)$ over $x$, $z$ and $V$, where $c$ is an absolute constant.
\end{lemma}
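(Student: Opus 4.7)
The plan is to perform a second-order Taylor expansion of the two factors $\phi'(\mathcal Z_l)$ and $\tilde\phi(\mathcal X_l)$ around the ``detached'' arguments $\mathcal Z_{-jl}$ and $\mathcal X_{-jl}$, viewing $V_{lj}z_j$ and $V_{lj}x_j$ as small perturbations. Using that $\phi'$ is $L_1$-Lipschitz with $L_2$-Lipschitz derivative and that $\phi$ is $L$-Lipschitz with $L_1$-Lipschitz derivative, standard Taylor remainder estimates give
\begin{equation*}
\phi'(\mathcal Z_l)=\phi'(\mathcal Z_{-jl})+\phi''(\mathcal Z_{-jl})V_{lj}z_j+r_{1,l},\qquad \tilde\phi(\mathcal X_l)=\tilde\phi(\mathcal X_{-jl})+\phi'(\mathcal X_{-jl})V_{lj}x_j+r_{2,l},
\end{equation*}
with $|r_{1,l}|\leq \frac{L_2}{2}V_{lj}^2 z_j^2$ and $|r_{2,l}|\leq \frac{L_1}{2}V_{lj}^2 x_j^2$. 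Multiplying the two expansions, multiplying the result by $V_{lj}$, and summing over $l$, one obtains a decomposition $I_j=T_0+T_1+T_2+R$, where $T_1=\sum_l V_{lj}^2 B_{-jl}$ is exactly the quantity we want to keep; $T_0=\sum_l V_{lj}\phi'(\mathcal Z_{-jl})\tilde\phi(\mathcal X_{-jl})$ contains only the zeroth-order parts; $T_2=\sum_l V_{lj}^3\phi''(\mathcal Z_{-jl})\phi'(\mathcal X_{-jl})x_jz_j$ is the ``pure cubic'' cross term; and $R$ collects all remaining products involving at least one Taylor remainder.

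The term $T_0$ is precisely the object bounded in the third statement of Lemma \ref{lemma:utils}, which immediately gives $|T_0|=o(k/d)$ with probability at least $1-\exp(-c\log^2 k)$. For $T_2$, the bounds $|\phi'|\leq L$, $|\phi''|\leq L_1$, together with $|x_j|,|z_j|=O(\log k)$ (from sub-Gaussianity of single coordinates of $x$ and $z$), reduce the problem to controlling $\sum_l |V_{lj}|^3$. Since $|V_{lj}|^3$ has Orlicz norm $\|\cdot\|_{\psi_{2/3}}$ of order $d^{-3/2}$, Lemma \ref{lemma:alphatails} with $\alpha=2/3$ yields concentration of this sum around its mean $k\,\mathbb E|V_{11}|^3=\Theta(k/d^{3/2})$ at scale $\mathrm{polylog}(k)\sqrt{k}/d^{3/2}$, hence $|T_2|=O(\mathrm{polylog}(k)\cdot k/d^{3/2})=o(k/d)$, using that $d\to\infty$.

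For the remainder $R$, each term is of the form $V_{lj}^a\,(\text{bounded factor})\,r_{i,l}$ or $V_{lj}^a\,r_{1,l}r_{2,l}$ with $a\in\{1,2\}$. Using $|r_{i,l}|=O(V_{lj}^2)$ (absorbing $x_j^2$ and $z_j^2$ into polylog factors), $|\phi'|,|\phi''|=O(1)$, and the bound $|\tilde\phi(\mathcal X_{-jl})|=O(\log k)$ from Lemma \ref{lemma:utils} (valid with the required probability), every such term is dominated in absolute value by $\mathrm{polylog}(k)\sum_l |V_{lj}|^p$ for some $p\geq 3$. The same Orlicz-based concentration argument as for $T_2$ then shows each of these contributions is $o(k/d)$ with probability at least $1-\exp(-c\log^2 k)$.

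The main obstacle is purely bookkeeping: one must verify that the same high-probability event controls every concentration statement simultaneously (the tail bounds on $|x_j|$, $|z_j|$, on $\max_l|\tilde\phi(\mathcal X_{-jl})|$, and on the various power sums of $|V_{lj}|$), and that none of the polylog factors gained from Orlicz-tail bounds overwhelm the gap between $k/d^{3/2}$ and $k/d$. A final union bound over the $O(1)$ many error terms yields the claimed probability $1-\exp(-c\log^2 k)$.
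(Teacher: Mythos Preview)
Your overall strategy matches the paper's: Taylor-expand both factors around the ``leave-one-out'' arguments, isolate the quadratic-in-$V_{lj}$ term, and kill every other cross term by moment bounds on the power sums $\sum_l|V_{lj}|^p$. The treatment of $T_0$, $T_2$ and the genuine Taylor remainders is correct and essentially identical to the paper.

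There is, however, a real gap in your expansion of the \emph{centered} factor. You write
\[
\tilde\phi(\mathcal X_l)=\tilde\phi(\mathcal X_{-jl})+\phi'(\mathcal X_{-jl})V_{lj}x_j+r_{2,l},\qquad |r_{2,l}|\le \tfrac{L_1}{2}V_{lj}^2x_j^2,
\]
but $\tilde\phi$ is not a fixed scalar function: the centering $\E_x[\phi(\cdot)]$ itself depends on $V_{lj}$. A correct expansion gives
\[
\tilde\phi(\mathcal X_l)=\tilde\phi(\mathcal X_{-jl})+\phi'(\mathcal X_{-jl})V_{lj}x_j - V_{lj}\,\E_x\!\big[\phi'(\mathcal X_{-jl})x_j\big] + \big(a_{2lj}-\E_x[a_{2lj}]\big),
\]
so the extra term $-V_{lj}\,\E_x[\phi'(\mathcal X_{-jl})x_j]$ sits in $r_{2,l}$ and is only linear in $V_{lj}$. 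With the naive bound $|\E_x[\phi'(\mathcal X_{-jl})x_j]|\le L\,\E_x|x_j|=O(1)$, this produces a contribution of order $\sum_l V_{lj}^2\cdot O(1)=\Theta(k/d)$, which is \emph{not} $o(k/d)$ and breaks your claim that every remainder term is dominated by $\mathrm{polylog}(k)\sum_l|V_{lj}|^p$ with $p\ge 3$.

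The paper closes this gap via Lemma~\ref{lemma:notsketchy}, which shows that $\E_x[\phi'(\mathcal X_{-jl})x_j]$ is sub-Gaussian (over $\{V_{li}\}_{i\neq j}$) with norm $O(1/\sqrt{d})$; this crucially uses $\E[x]=0$ together with rotational invariance of $V_{l:}$. With that input, the offending term becomes $O((\log k/\sqrt{d})\cdot k/d)=o(k/d)$, which is precisely the paper's term~(e). You need to invoke (or reproduce) this estimate; without it the proof does not go through.
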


\begin{proof}

Exploiting Taylor's Theorem, and the fact that both $\phi'$ and $\phi''$ are Lipschitz, we can write
\begin{equation}\label{eq:taylor1}
    \phi' \left(\mathcal Z_l \right) = \phi' \left( \mathcal Z_{-jl} \right) + \phi'' \left( \mathcal Z_{-jl} \right) V_{lj} z_j + a_{1lj},
\end{equation}
\begin{equation}\label{eq:taylor2}
    \tilde \phi \left( \mathcal X_l \right) =  \phi \left( \mathcal X_{-jl} \right) + \phi' \left( \mathcal X_{-jl} \right) V_{lj} x_j + a_{2lj} - \E_x \left[ \phi \left( \mathcal X_{-jl} \right) + \phi' \left( \mathcal X_{-jl} \right) V_{lj} x_j + a_{2lj} \right],
\end{equation}
where $|a_{1lj}| = \bigO{V_{lj}^2 z_j^2}$ and $|a_{2lj}| = \bigO{V_{lj}^2 x_j^2}$. In the previous two equations, we expanded $\phi'$ around $\mathcal Z_{-jl}$ and $\phi$ around $\mathcal X_{-jl}$. For the rest of the proof (except from when expectations with respect to $x$ are taken) we condition on the events that both $|x_j|$ and $|z_j|$ are $\bigO{\log k}$. As they are sub-Gaussian, this happens with probability at least $1 - e^{- c_1\log^2 k}$ over $x$ and $z$.

Invoking Lemma \ref{lemma:alphatails}, and setting $t = k / d^{1/\alpha}$, we can write (for $\alpha \in (0, 2]$)
\begin{equation}
    \sum_{l = 1}^k |V_{lj}|^{2 / \alpha} = \bigO{\frac{k}{d^{1 / \alpha}}},
\end{equation}
with probability at least $1 - \exp (-c_2 k^{\alpha/2} / \log k)$. This is true since $\E_\rho \left[ \rho^{ 2 / \alpha} \right] = C_\alpha$, where $\rho$ is a standard Gaussian random variable. Thus, with probability at least $1 - \exp (-c_3 k^{1/5} / \log k)$ over $V$, we jointly have
\begin{equation}
    \sum_{l = 1}^k |V_{lj}|^2 = \bigO{\frac{k}{d}}, \qquad \sum_{l = 1}^k |V_{lj}|^3 = \bigO{\frac{k}{d^{3/2}}}, \qquad \sum_{l = 1}^k |V_{lj}|^5 = \bigO{\frac{k}{d^{5/2}}}.
\end{equation}
We will condition on these events too. It will also be convenient to condition on the high probability events described by Lemma \ref{lemma:utils}. Thus, our discussion is now restricted to a probability space over $x$, $z$ and $V$ of measure at least $1 - e^{- c_4 \log^2 k} - e^{- c_5 d}$.

We are now ready to estimate the cross terms of the product $V_{lj} \phi' \left( \mathcal Z_{l} \right) \tilde \phi \left( \mathcal X_{l} \right)$, exploiting the expansions in \eqref{eq:taylor1} and \eqref{eq:taylor2}. Let's control the sums over $l$ of all of them separately.

\begin{enumerate}
    \item[(a)] $\phi' \left( \mathcal Z_l \right) \left( a_{2lj} - \E_x \left[ a_{2lj} \right] \right)$
    \begin{equation}
    \begin{aligned}
    & \left| \sum_{l = 1}^k V_{lj} \phi' \left( \mathcal Z_l \right) \left( a_{2lj} - \E_x \left[ a_{2lj} \right] \right) \right| \leq C_1 \sum_{l = 1}^k |V_{lj}| L \left( |a_{2lj}| + \E_x \left[ |a_{2lj}| \right] \right) \\
    & \qquad \leq C_2 \left( x_j^2 + 1 \right) \sum_{l = 1}^k |V_{lj}|^3 = \bigO{\frac{k \log^2  k}{d^{3/2}}} = o\left(\frac{k}{d}\right),
    \end{aligned}
    \end{equation}
    where the second inequality is justified by the fact that $x_j$ is sub-Gaussian, and the last equality by Assumption \ref{ass:dlarge}.

    \item[(b)] $\tilde \phi \left( \mathcal X_{l} \right) a_{1jl}$
    \begin{equation}
    \begin{aligned}
    & \left| \sum_{l = 1}^k V_{lj}  \tilde \phi \left( \mathcal X_{l} \right) a_{1jl} \right| \leq C_3 z_j^2 \sum_{l = 1}^k |V_{lj}|^3  \left| \tilde \phi \left( \mathcal X_{l} \right) \right| \leq C_3 z_j^2 \max_l \left| \tilde \phi \left( \mathcal X_{l} \right) \right|  \sum_{l = 1}^k |V_{lj}|^3 \\
    &\qquad = \bigO{\log^2 (k) \log (k) \frac{k}{d^{3/2}}} = o\left( \frac{k}{d}\right),
    \end{aligned}
    \end{equation}
    where the second to last equality follows from Lemma \ref{lemma:utils}, and the last from Assumption \ref{ass:dlarge}.
    
    \item[(c)] $a_{1lj} \left( a_{2lj} - \E_x \left[ a_{2lj} \right] \right)$ (as we counted it twice)
    \begin{equation}
    \left |\sum_{l = 1}^k V_{lj} a_{1lj} \left( a_{2lj} - \E_x \left[ a_{2lj} \right] \right) \right| \leq C_4 \left( x_j^2 + 1 \right) z^2_j \sum_{l = 1}^k |V_{lj}|^5 = \bigO{\log^4(k) \frac{k}{d^{5/2}}} = o\left(\frac{k}{d}\right).
    \end{equation}

    \item[(d)] $\phi' \left( \mathcal Z_{-jl} \right) \tilde \phi \left( \mathcal X_{-jl} \right)$
    \begin{equation}
        \left| \sum_{l = 1}^k V_{lj} \phi' \left( \mathcal Z_{-jl} \right) \tilde \phi \left( \mathcal X_{-jl} \right)\right| = o \left(\frac{k}{d}\right),
    \end{equation}
    as it directly follows from Lemma \ref{lemma:utils}.

    \item[(e)] $\left( \phi' \left( \mathcal Z_{-jl} \right) + \phi'' \left( \mathcal Z_{-jl} \right) V_{lj} z_j \right) \E_x \left[ \phi' \left( \mathcal X_{-jl} \right) V_{lj} x_j \right]$
    \begin{equation}
    \begin{aligned}
        & \left| \sum_{l = 1}^k V_{lj} \left( \phi' \left( \mathcal Z_{-jl} \right) + \phi'' \left( \mathcal Z_{-jl} \right) V_{lj} z_j \right) V_{lj} \E_{x} \left[ \phi' \left( \mathcal X_{-jl} \right) x_j \right] \right| \\
        & \qquad \leq  \max_l \left| \phi' \left( \mathcal Z_{-jl} \right) + \phi'' \left( \mathcal Z_{-jl} \right) V_{lj} z_j \right| \max_l \left| \E_{x} \left[ \phi' \left( \mathcal X_{-jl} \right) x_j \right] \right| \sum_{l = 1}^k |V_{lj}|^2 \\
        & \qquad \leq \left( L + L_1 \max_l \left|V_{lj}\right| |z_j| \right) \max_l \left| \E_{x} \left[ \phi' \left( \mathcal X_{-jl} \right) x_j \right] \right| \sum_{l = 1}^k |V_{lj}|^2 \\
        & \qquad = \bigO{\left( 1 + \frac{\log^2 k}{\sqrt d}\right) \frac{\log k}{\sqrt d}} \bigO{\frac{k}{d}} = o \left(\frac{k}{d}\right),
    \end{aligned}
    \end{equation}
    where the second to last equality comes from the fact that both $\subGnorm{V_{lj}}$ and $\subGnorm{ \E_{x} \left[ \phi' \left( \mathcal X_{-jl} \right) x_j \right]}$ are $\bigO{\frac{1}{\sqrt d}}$ (see Lemma \ref{lemma:notsketchy}) and holds with probability at least $1 - e^{- c_6 \log^2 k}$ over $V$. The last equality is true for Assumption \ref{ass:dlarge}.
    
    \item[(f)] $\phi'' \left( \mathcal Z_{-jl} \right) V_{lj} z_j \phi' \left( \mathcal X_{-jl} \right) V_{lj} x_j$ 
    As $\phi$ and $\phi'$ are Lipschitz
    \begin{equation}
        \left| \sum_{l = 1}^k V_{lj} \phi'' \left( \mathcal Z_{-jl} \right) V_{lj} z_j \phi' \left( \mathcal X_{-jl} \right) V_{lj} x_j \right| \leq L L_1 |x_j z_j| \sum_{l = 1}^k |V_{lj}|^3 = \bigO{\log^2 k \frac{k}{d^{3/2}}} = o\left( \frac{k}{d}\right).
    \end{equation}
\end{enumerate}

Then, we are left only with two cross terms, and we can write
\begin{equation}
        I_j = \sum_{l} V_{lj}^2 \left( \phi'' \left( \mathcal Z_{-jl} \right) \tilde \phi \left( \mathcal X_{-jl} \right) z_j + \phi' \left( \mathcal Z_{-jl} \right) \phi' \left( \mathcal X_{-jl} \right) x_j \right) + \textup{rest},
\end{equation}
where $|\textup{rest}|$ can be upper-bounded by the sum of the terms (a)-(f), which gives

\begin{equation}
    \left| I_j - \sum_{l} V_{lj}^2 \left( \phi'' \left( \mathcal Z_{-jl} \right) \tilde \phi \left( \mathcal X_{-jl} \right) z_j + \phi' \left( \mathcal Z_{-jl} \right) \phi' \left( \mathcal X_{-jl} \right) x_j \right) \right| = o \left( \frac{k}{d}\right)
\end{equation}
with probability at least $1 - \exp (-c_5  d) - \exp (-c_7  \log^2 k)$ over $x$, $z$ and $V$. Hence, the thesis follows after using Assumption \ref{ass:dlarge}.
\end{proof}

\begin{lemma}\label{lemma:Gapprox2}
    We have that
    \begin{equation}
        \left| \sum_{l = 1}^k V_{lj}^2 B_{-jl} - \frac{1}{d} \sum_{l= 1}^k B_{-jl} \right| = o \left( \frac{k}{d} \right),
    \end{equation}
    with probability at least $1 - \exp(-c \log^2 k)$ over $x$, $z$ and $V$, where $c$ is an absolute constant.
\end{lemma}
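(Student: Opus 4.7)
The key observation, already built into the definition of $B_{-jl}$, is that $B_{-jl}$ depends only on $x$, $z$ and $\{V_{li}\}_{i\ne j}$, hence it is \emph{independent} of the column $V_{:j}$. Since $\E[V_{lj}^2]=1/d$, the claim reduces to concentration of
\begin{equation*}
  S_j \;:=\; \sum_{l=1}^k V_{lj}^2\, B_{-jl} \,-\, \frac{1}{d}\sum_{l=1}^k B_{-jl} \;=\; \sum_{l=1}^k (V_{lj}^2 - 1/d)\,B_{-jl}
\end{equation*}
around zero, conditional on $x,z,\{V_{li}\}_{i\ne j}$.

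To make this quantitative, I first derive a uniform-in-$l$ bound on $B_{-jl}$. Because $\phi$ is $L$-Lipschitz and $\phi'$ is $L_1$-Lipschitz (Assumption~\ref{ass:activationfunc}), $|\phi'|\le L$ and $|\phi''|\le L_1$ almost everywhere, whence
\begin{equation*}
    |B_{-jl}| \;\le\; L_1\,|z_j|\,|\tilde\phi(\mathcal X_{-jl})| \,+\, L^2\,|x_j|.
\end{equation*}
Lemma~\ref{lemma:utils} gives $\max_l|\tilde\phi(\mathcal X_{-jl})|=\bigO{\log k}$ with probability $1-\exp(-c\log^2 k)$, while sub-Gaussianity of $x_j,z_j$ yields $|x_j|,|z_j|=\bigO{\log k}$ with the same probability. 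Combining these, $\max_l|B_{-jl}|=\bigO{\log^2 k}$ on a high-probability event $\mathcal E$ over $x,z,\{V_{li}\}_{i\ne j}$.

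Condition now on $\mathcal E$. The summands $\xi_l := (V_{lj}^2-1/d)\,B_{-jl}$ are independent, centered and sub-exponential with $\|\xi_l\|_{\psi_1}=\bigO{|B_{-jl}|/d}$, since $\|V_{lj}^2-1/d\|_{\psi_1}=\bigO{1/d}$. The sub-exponential Bernstein inequality then gives
\begin{equation*}
    \P(|S_j|>t)\;\le\;2\exp\!\Bigl(-c\min\Bigl(\tfrac{t^2 d^2}{\sum_l B_{-jl}^2},\,\tfrac{t\,d}{\max_l|B_{-jl}|}\Bigr)\Bigr).
\end{equation*}
Choosing $t = k/(d\log k)=\littleO{k/d}$, and using $\sum_l B_{-jl}^2 \le k\,\max_l B_{-jl}^2 = \bigO{k\log^4 k}$ together with $\max_l|B_{-jl}|=\bigO{\log^2 k}$, both arguments of the minimum are $\Omega(k/\log^6 k)$, which dominates $\log^2 k$ thanks to $k\gg\log^8 k$ (a consequence of Assumptions~\ref{ass:overparam} and \ref{ass:dlarge}). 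This yields $|S_j|=\littleO{k/d}$ with probability at least $1-\exp(-c\log^2 k)$.

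The only real obstacle is the independence argument; once one observes that $B_{-jl}$ was built in Lemma~\ref{lemma:Gapprox1} precisely so as not to contain $V_{lj}$, the rest is a routine Bernstein bound on a weighted sum of independent centered sub-exponential random variables. Without this decoupling, one would face a genuine quadratic form in $V_{:j}$ whose coefficients themselves depend on $V_{:j}$, requiring a more delicate Hanson--Wright-style treatment.
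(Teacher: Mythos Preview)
Your proof is correct and follows essentially the same route as the paper: both exploit that $B_{-jl}$ is independent of the column $V_{:j}$, bound $\max_l|B_{-jl}|=\bigO{\log^2 k}$ via Lemma~\ref{lemma:utils} and sub-Gaussianity of $x_j,z_j$, and then apply sub-exponential concentration to $\sum_l(V_{lj}^2-1/d)B_{-jl}$. The only cosmetic differences are that the paper uses the one-sided sub-exponential tail $\P(|v^\top u|>t)\le 2\exp(-c\,dt/\|u\|_2)$ with the choice $t=\log^4 k\,\sqrt{k}/d$, whereas you invoke the full Bernstein inequality with $t=k/(d\log k)$; both thresholds are $o(k/d)$ and both exponents dominate $\log^2 k$ under the standing assumptions.
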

\begin{proof}
    We have
    \begin{equation}
        \sum_{l = 1}^k V_{lj}^2 B_{-jl} = \sum_{l = 1}^k \left( V_{lj}^2 - \E_V \left[ V_{lj}^2 \right] \right) B_{-jl} + \frac{1}{d} \sum_{l= 1}^k B_{-jl},
    \end{equation}
    since $\E_V \left[ V_{lj}^2 \right] = 1/d$ for all $l, j$.

    We call $v_l := V_{lj}^2 - \E_V \left[ V_{lj}^2 \right]$. Then, $v=(v_1, \ldots, v_k)$ is a mean-0 vector with independent and sub-exponential entries with norm $\bigO{1/d}$. This implies $\subEnorm{v} = \bigO{1/d}$. Thus,
    \begin{equation}\label{eq:subEvectorforV}
        \P_v \left( |v^\top u| > t \right) < 2 \exp \left(- c \frac{d t}{\norm{u}_2} \right),
    \end{equation}
    for every vector $u$ independent from $v$. If we define $u_l := B_{-jl}$, we have
    \begin{equation}
        \norm{u}_2 \leq \sqrt k \left( \max_l |B_{-jl}|\right) \leq \sqrt k \left( L^2 |x_j| + L_1 |z_j| \max_l \left| \tilde \phi(\mathcal X_{-jl}) \right| \right) = \bigO{\log^2 (k) \sqrt k},
    \end{equation}
    where the last equality comes from the fact that $x_j$ and $z_j$ are sub-Gaussian, and from the second statement in Lemma \ref{lemma:utils}. This holds with probability at least $1 - e^{-c_1 d} - e^{-c_2 \log^2 k}$ over $x$, $z$ and $\{V_{li}\}_{i \neq j, 1 \leq l \leq k}$, and we will condition on this event.

    Then, merging this last result with \eqref{eq:subEvectorforV} and setting $t = \log^4 k \sqrt k / d$ we have
    \begin{equation}
        \P_v \left( |v^\top u| > \frac{\log^4 k \sqrt k}{d} \right) < 2 \exp \left(- c_3 \log^2 k \right).
    \end{equation}
    
    Performing a union bound gives the thesis.
\end{proof}

\begin{lemma}\label{lemma:Gapprox3}
    We have that
    \begin{equation}
    \left| \sum_{l = 1}^k B_{-jl} - \sum_{l = 1}^k B_{jl} \right| = o(k),
    \end{equation}
    with probability at least at least $1 - \exp(-c \log^2 k)$ over $x$, $z$ and $V$, where $c$ is an absolute constant.
\end{lemma}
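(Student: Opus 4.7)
\emph{Plan.} The observation driving the proof is that $B_{-jl}$ and $B_{jl}$ differ only through the replacement of $\mathcal X_{-jl}$ by $\mathcal X_l = \mathcal X_{-jl}+V_{lj}x_j$ and of $\mathcal Z_{-jl}$ by $\mathcal Z_l = \mathcal Z_{-jl}+V_{lj}z_j$. Lipschitzness of $\phi,\phi',\phi''$ will then convert each difference into a quantity proportional to $|V_{lj}|$. Summing over $l$, the total will be controlled by $(\log^{\alpha} k)\sum_{l=1}^k|V_{lj}|$, and since concentration of an i.i.d.\ sum of folded Gaussians yields $\sum_{l=1}^k|V_{lj}|=\bigO{k/\sqrt d}$, Assumption~\ref{ass:dlarge} will absorb the polylog factor and give $o(k)$.

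\emph{Step 1 (decomposition and Lipschitz bounds).} I will first write
\[
B_{-jl}-B_{jl} = z_j\bigl[\phi''(\mathcal Z_{-jl})\tilde\phi(\mathcal X_{-jl}) - \phi''(\mathcal Z_l)\tilde\phi(\mathcal X_l)\bigr] + x_j\bigl[\phi'(\mathcal Z_{-jl})\phi'(\mathcal X_{-jl}) - \phi'(\mathcal Z_l)\phi'(\mathcal X_l)\bigr],
\]
and apply the telescoping identity $ab-cd=(a-c)b+c(b-d)$ to each bracket. Assumption~\ref{ass:activationfunc} yields $|\phi''(\mathcal Z_l)-\phi''(\mathcal Z_{-jl})|\le L_2|V_{lj}||z_j|$ and $|\phi'(\mathcal X_l)-\phi'(\mathcal X_{-jl})|\le L_1|V_{lj}||x_j|$, while the $L$-Lipschitzness of $\phi$ combined with expanding the centering gives $|\tilde\phi(\mathcal X_l)-\tilde\phi(\mathcal X_{-jl})|\le L|V_{lj}|(|x_j|+\mathbb E_x|x_j|)$. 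Every contribution to $B_{-jl}-B_{jl}$ thus carries an explicit factor $|V_{lj}|$, multiplied by a product of factors drawn from $|x_j|,|z_j|,|\phi'(\cdot)|,|\phi''(\cdot)|,|\tilde\phi(\mathcal X_{\cdot})|$.

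\emph{Step 2 (pointwise bounds on the remaining factors).} The $L$-Lipschitzness of $\phi$ gives $|\phi'|\le L$ universally. For $\phi''$, I will use $|\phi''(t)|\le|\phi''(0)|+L_2|t|$: since $V_{l:}\sim\mathcal N(0,I/d)$ and $\|z\|_2=\|x\|_2=\sqrt d$, both $\mathcal Z_l$ and $\mathcal X_l$ are $\mathcal N(0,1)$ in the $V$-probability space, so Lemma~\ref{lemma:opnormlog} yields $\max_l|\phi''(\mathcal Z_l)|,\max_l|\phi''(\mathcal Z_{-jl})|=\bigO{\log k}$ (and similarly for the $\mathcal X$-arguments). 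Lemma~\ref{lemma:utils} handles $\max_l|\tilde\phi(\mathcal X_{-jl})|=\bigO{\log k}$, and the Lipschitz concentration of $x,z$ in Assumption~\ref{ass:datadist} (applied to the $1$-Lipschitz coordinate projections) together with Lemma~\ref{lemma:opnormlog} gives $|x_j|,|z_j|=\bigO{\log k}$, while $\mathbb E_x|x_j|=\bigO{1}$. Each of these events holds with probability at least $1-\exp(-c\log^2 k)$, so a union bound over a constant number of them preserves that probability.

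\emph{Step 3 (concentration of $\sum_l|V_{lj}|$ and conclusion).} The variables $|V_{lj}|$ are i.i.d.\ folded normals with mean $\sqrt{2/(\pi d)}$ and sub-Gaussian norm $\bigO{1/\sqrt d}$, so Hoeffding's inequality will give $\sum_{l=1}^k|V_{lj}|\le C\,k/\sqrt d$ with probability $1-\exp(-c\log^2 k)$. Collecting Steps~1 and~2,
\[
\Bigl|\sum_{l=1}^k(B_{-jl}-B_{jl})\Bigr| \le C\log^{\alpha} k \sum_{l=1}^k|V_{lj}| = \bigO{\frac{k\log^{\alpha} k}{\sqrt d}} = o(k),
\]
for an absolute $\alpha\le 4$, where the last step uses Assumption~\ref{ass:dlarge} ($\log^8 k=o(d)$, which beats any fixed polylog). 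The main obstacle, and the only delicate aspect, is the bookkeeping of the accumulated polylog exponent together with the verification that all the high-probability events of Step~2 are realized on a single intersection; since each has failure probability $\exp(-c\log^2 k)$, a final union bound of constant cardinality closes the argument.
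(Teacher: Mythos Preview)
Your proof is correct and follows essentially the same approach as the paper: both arguments exploit the Lipschitz regularity of $\phi,\phi',\phi''$ to bound each $|B_{-jl}-B_{jl}|$ by a polylog factor times $|V_{lj}|$, then sum over $l$ and absorb the $1/\sqrt d$ via Assumption~\ref{ass:dlarge}. The only cosmetic differences are that the paper controls $\max_l|V_{lj}|=\bigO{\log k/\sqrt d}$ and multiplies by $k$ (whereas you concentrate $\sum_l|V_{lj}|$ directly, saving a log), and that your detour $|\phi''(t)|\le|\phi''(0)|+L_2|t|$ is unnecessary since $\phi'$ being $L_1$-Lipschitz already gives $|\phi''|\le L_1$ uniformly.
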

\begin{proof}
    Let's condition on
    \begin{equation}
        |x_j| = \bigO{\log k}, \qquad |z_j| = \bigO{\log k}, \qquad \max_l |V_{lj}| = \bigO{\frac{\log k}{\sqrt d}}.
    \end{equation}
    These events happen with probability at least $1 - \exp(-c \log^2 k)$, as $\subGnorm{x_j} = \bigO{1}$, $\subGnorm{z_j} = \bigO{1}$ and $\subGnorm{V_{jl}} = \bigO{1/\sqrt{d}}$. The probability is over $x$, $z$ and $V$. Also, we have $\max_l \left| \tilde \phi(\mathcal X_{-jl}) \right| = \bigO{\log k}$ with probability at least $1 - e^{-c_1 \log^2 k}$ over $x$, $z$ and $V$, for the second statement in Lemma \ref{lemma:utils}. Let's condition on these high probability events for the rest of the proof, whenever we do not take expectations with respect to $x$.

    Exploiting that $\phi$, $\phi'$ and $\phi''$ are all Lipschitz,
    \begin{equation}
         \max_l \left |\phi'' \left( \mathcal Z_{-jl} \right) -  \phi'' \left( \mathcal Z_{l} \right) \right| = \bigO{\frac{\log^2 k}{\sqrt{d}}},
    \end{equation}
    \begin{equation}
        \max_l \left( \left| \phi \left( \mathcal X_{-jl} \right) - \phi \left( \mathcal X_{-jl} \right) \right| + \left|\E_x \left[ \phi \left( \mathcal X_{-jl} \right) - \phi \left( \mathcal X_{-jl} \right) \right] \right| \right) \leq \bigO{\frac{\log^2 k}{\sqrt d}} + \max_l |V_{lj}| \E_x[|x_j|] = \bigO{\frac{\log^2 k}{\sqrt d}},
    \end{equation}
    \begin{equation}
        \max_l \left |\phi' \left( \mathcal Z_{-jl} \right) -  \phi' \left( \mathcal Z_{l} \right) \right| = \bigO{\frac{\log^2 k}{\sqrt{d}}},
    \end{equation}
    \begin{equation}
        \max_l \left| \phi' \left( \mathcal X_{-jl} \right) - \phi' \left( \mathcal X_{-jl} \right) \right| = \bigO{\frac{\log^2 k}{\sqrt{d}}}.
    \end{equation}
    Then, controlling all the cross terms, we get
    \begin{equation}
        \max_l |B_{-jl} - B_{jl}| = \bigO{\frac{\log^4 k}{\sqrt d}},
    \end{equation}
    where this scaling is justified by the leading term
    \begin{equation}
        \max_l \left |\phi'' \left( \mathcal Z_{-jl} \right) -  \phi'' \left( \mathcal Z_{l} \right) \right| \max_l \left| \tilde \phi(\mathcal X_{-jl}) \right| |z_j| = \bigO{\frac{\log^2 k}{\sqrt{d}} \log k \log k}.
    \end{equation}

    Thus, we can conclude
    \begin{equation}
        \left| \sum_{l = 1}^k B_{-jl} - \sum_{l = 1}^k B_{jl} \right| \leq k \max_l |B_{-jl} - B_{jl}| = \bigO{\frac{k \log^4 k}{\sqrt d}} = o(k),
    \end{equation}
    because of Assumption \ref{ass:dlarge}. This gives the thesis.
\end{proof}

\begin{lemma}\label{lemma:Gapprox4}
    We have that
    \begin{equation}
    \left| \sum_{l = 1}^k B_{jl} - k \, \E_V [B_{j1}] \right| = o(k),
    \end{equation}
    with probability at least $1 - \exp (-c \log^2 k)$ over $x$, $z$ and $V$.
\end{lemma}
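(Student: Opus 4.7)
The plan is to exploit the structural observation that, conditional on $x$ and $z$, each summand $B_{jl}$ is a deterministic function of the single row $V_{l:}$ of $V$ (it depends on $V$ only through $\mathcal Z_l=V_{l:}^\top z$, $\mathcal X_l=V_{l:}^\top x$, and the centering $\E_{x'}[\phi(V_{l:}^\top x')]$, which again involves only $V_{l:}$). Because the rows of $V$ are i.i.d.\ $\mathcal N(0,I/d)$, this makes $\{B_{jl}\}_{l=1}^k$ i.i.d.\ random variables conditionally on $(x,z)$, and the claim reduces to a scalar concentration estimate for a sum of independent random variables.

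As a first step I would condition on the high-probability event $\{|x_j|\le C\log k,\ |z_j|\le C\log k\}$. Since $x_j=e_j^\top x$ and $z_j=e_j^\top z$ are $1$-Lipschitz functions of the Lipschitz-concentrated vectors $x$ and $z$, each coordinate is sub-Gaussian with $O(1)$ norm and the event holds with probability at least $1-\exp(-c\log^2 k)$. On this event I would then bound the sub-Gaussian norm of $B_{jl}-\E_V[B_{jl}]$ in the randomness of $V_{l:}$ alone. Using $\|\phi'\|_\infty\le L$ and $\|\phi''\|_\infty\le L_1$ (consequences of Assumption \ref{ass:activationfunc}), the term $\phi'(\mathcal Z_l)\phi'(\mathcal X_l)\,x_j$ is deterministically bounded by $L^2|x_j|=O(\log k)$. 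For the term $\phi''(\mathcal Z_l)\tilde\phi(\mathcal X_l)\,z_j$, I would use that $v\mapsto\phi(v^\top x)$ is $L\sqrt d$-Lipschitz in $v$, and that $v\mapsto\E_{x'}[\phi(v^\top x')]$ inherits the same Lipschitz constant by Jensen's inequality. Combined with $\sqrt d\,V_{l:}\sim\mathcal N(0,I)$, this yields $\|\tilde\phi(\mathcal X_l)\|_{\psi_2}=O(1)$ in $V_{l:}$; multiplying by the bounded factor $\phi''(\mathcal Z_l)$ and by $|z_j|=O(\log k)$ gives $\|B_{jl}-\E_V B_{jl}\|_{\psi_2}=O(\log k)$ uniformly in $l$.

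With conditional independence and the uniform sub-Gaussian bound in hand, I would apply the Hoeffding-type inequality for sums of independent sub-Gaussians (Proposition 2.6.1 in \cite{vershynin2018high}) to obtain
\begin{equation*}
\PP_V\!\left(\Big|\sum_{l=1}^k\bigl(B_{jl}-\E_V B_{jl}\bigr)\Big|>t\right)\le 2\exp\!\left(-\frac{c\,t^2}{k\log^2 k}\right).
\end{equation*}
Choosing $t=\sqrt k\,\log^{2}k$ produces a bound of size $o(k)$ (since $\log^{4}k=o(k)$ by Assumptions \ref{ass:overparam} and \ref{ass:dlarge}) with probability at least $1-2\exp(-c\log^2 k)$, and combining with the step-one event yields the stated conclusion after absorbing constants.

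The part I expect to be most delicate is the sub-Gaussian control of $\tilde\phi(\mathcal X_l)$ in the probability space of $V_{l:}$: the centering appearing in $\tilde\phi$ is taken with respect to $x$ at fixed $V_{l:}$ and \emph{not} with respect to $V$, so one cannot directly invoke mean-zero Gaussian concentration. The clean workaround is to note that $v\mapsto\E_{x'}[\phi(v^\top x')]$ inherits the Lipschitz constant of the pointwise map $v\mapsto\phi(v^\top x')$ via Jensen's inequality, so that $\tilde\phi(\mathcal X_l)$ remains a Lipschitz (and hence sub-Gaussian) function of $V_{l:}$. Beyond this, the argument is a routine combination of Gaussian Lipschitz concentration and Hoeffding for independent sub-Gaussians.
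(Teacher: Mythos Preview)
Your proposal is correct and follows essentially the same route as the paper: condition on $|x_j|,|z_j|=O(\log k)$, use the conditional i.i.d.\ structure of $\{B_{jl}\}_l$ in the rows of $V$, control the tail norm of $B_{jl}$ via the sub-Gaussianity of $\tilde\phi(\mathcal X_l)$ in $V_{l:}$ (the paper argues this exactly as you do, by Lipschitzness in $v$ of both $\phi(v^\top x)$ and its $x$-expectation), and finish with a scalar concentration inequality at $t=\sqrt k\log^2 k$. The only cosmetic difference is that the paper bounds $\|B_{jl}\|_{\psi_1}\le\log k$ and invokes Bernstein, whereas you exploit the boundedness of $\phi',\phi''$ to obtain the slightly sharper $\psi_2$ bound and apply Hoeffding; both yield the same final estimate.
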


\begin{proof}
    Since $x_j$ and $z_j$ are sub-Gaussian, we know that $|x_j|,|z_j|=\bigO{\log k}$ with probability at least $1 - \exp (-c \log^2 k)$ over $x$ and $z$. We will condition on such event for the rest of the proof. Note that, once we fix the values of $x$ and $z$, the $B_{jl}$'s are i.i.d. random variables in the probability space of $V$. Thus,
    \begin{equation}
        \sum_{l = 1}^k B_{jl} = k \, \E_V [B_{j1}] + \sum_{l = 1}^k \left( B_{jl} - \E_V [B_{jl}] \right).
    \end{equation}
    Until the end of the proof, we only refer to the probability space of $V$.
    
    We have
    \begin{equation}\label{eq:phitsubGv}
        \subGnorm{\tilde \phi \left( \mathcal X_{l} \right)} \leq \subGnorm{\phi \left( \mathcal X_{l} \right)} + \subGnorm{\E_x \left[ \phi \left( \mathcal X_{l} \right) \right]} \leq C \norm{x}_2 \frac{1}{\sqrt d} + \E_x \left[ \subGnorm{\phi \left( \mathcal X_{l} \right)} \right] \leq 2C,
    \end{equation}
    where the second inequality is true because $\subGnorm{x^\top V_{l:}} \leq C_1 \norm{x} \frac{1}{\sqrt d}$ and because $\phi$ is Lipschitz.

    Since $\phi$ and $\phi'$ are both Lipschitz, we have that also $\subGnorm{\phi'' \left( \mathcal Z_{l} \right)}$, $\subGnorm{\phi' \left( \mathcal Z_{l} \right)}$ and $\subGnorm{\phi' \left( \mathcal X_{l} \right)}$ are upper bounded by a numerical constant. Thus, we have that
    \begin{equation}
        \subEnorm{B_{jl}} \leq \log k.
    \end{equation}

    An application of Bernstein inequality (cf. Theorem 2.8.1. in \cite{vershynin2018high}) gives that
    \begin{equation}
        \P \left( \left | \sum_{l = 1}^k \left( B_{jl} - \E_V [B_{jl}] \right) \right| > \sqrt{k} \log^2 k \right) < \exp (- c_1 \log^2 k),
    \end{equation}
and the thesis readily follows.
\end{proof}

\begin{lemma}\label{lemma:Gapprox5}
    Indicating with $\rho$ a standard Gaussian random variable, we have that
    \begin{equation}
        \left| \E_V [B_{j1}] - x_j \E_{\rho}^2 [\phi' (\rho)] \right| = o(1),
    \end{equation}
    with probability at least $1 - \exp (- c \log^2 k)$ over $x$ and $z$.
\end{lemma}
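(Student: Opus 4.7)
The plan is to condition on $x$ and $z$ so that $\E_V[B_{j1}]$ becomes a deterministic function of these vectors, and then evaluate it via Mehler's formula applied to the pair $(\mathcal X_{1},\mathcal Z_{1}) = (v^\top x, v^\top z)$, where $v := V_{1:}\sim \mathcal N(0,I/d)$. Since $\|x\|_2 = \|z\|_2 = \sqrt d$, this pair is jointly centered Gaussian with unit variances and correlation $\alpha := x^\top z/d$, and $|\alpha|\le 1$ by Cauchy--Schwarz. The preliminary estimates are: since $x_j, z_j$ are sub-Gaussian of norm $O(1)$ and $x^\top z$ is sub-Gaussian of norm $O(\sqrt d)$ (using $x\perp z$ and $\|z\|_2=\sqrt d$),
\[
|x_j|,\;|z_j| = O(\log k), \qquad |\alpha| = O(\log k/\sqrt d),
\]
each with probability at least $1 - \exp(-c\log^2 k)$. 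I condition on these events throughout.

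Next, decompose $B_{j1} = z_j\phi''(\mathcal Z_{1})\phi(\mathcal X_{1}) - z_j\phi''(\mathcal Z_{1})\E_{x'}[\phi(v^\top x')] + x_j\phi'(\mathcal Z_{1})\phi'(\mathcal X_{1})$, and call the three $v$-expectations $T_1, T_2, T_3$. Let $c_k, d_k, e_k$ denote the normalized Hermite coefficients of $\phi', \phi'', \phi$, all lying in $L^2(\gamma)$ (with $\gamma$ the standard Gaussian measure) because the Lipschitz assumptions force linear growth. Mehler's identity then yields
\begin{align*}
T_3 &= x_j\sum_{k\ge 0} c_k^2\,\alpha^k = x_j\,\E_\rho^2[\phi'(\rho)] + x_j\cdot O(|\alpha|),\\
T_1 &= z_j\sum_{k\ge 0} d_k e_k\,\alpha^k = z_j\, d_0 e_0 + z_j\cdot O(|\alpha|),
\end{align*}
where $c_0 = \E_\rho[\phi'(\rho)]$, $d_0 = \E_\rho[\phi''(\rho)]$, $e_0 = \E_\rho[\phi(\rho)]$; the error bounds use $|\alpha|\le 1$ together with the absolute convergence of $\sum c_k^2$ and $\sum |d_k e_k|$ (Cauchy--Schwarz).

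For $T_2$, swap $\E_v$ and $\E_{x'}$ and reapply Mehler, now with the correlation $\alpha' := x'^\top z/d$:
\[
T_2 = z_j\,\E_{x'}\Bigl[\sum_{k\ge 0} d_k e_k\, (\alpha')^k\Bigr] = z_j d_0 e_0 + z_j\sum_{k\ge 1} d_k e_k\,\E_{x'}[(\alpha')^k].
\]
The $k=1$ term vanishes because $\E[x']=0$ and $x'\perp z$. For $k\ge 2$, using $|\alpha'|\le 1$ almost surely to dominate higher moments by the second, and $\E_{x'}[(\alpha')^2] = z^\top\Sigma z/d^2 = O(1/d)$ via Lemma \ref{lemma:opnormcovariance} applied to $\Sigma := \E[x'x'^\top]$, the tail sum is $O(|z_j|/d)$. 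Hence the $d_0 e_0$ leading terms of $T_1$ and $T_2$ cancel exactly, leaving $|T_1 - T_2| = O(|z_j||\alpha|)$. Summing the three contributions,
\[
\E_V[B_{j1}] = x_j\E_\rho^2[\phi'(\rho)] + O\bigl((|x_j|+|z_j|)|\alpha|\bigr) = x_j\E_\rho^2[\phi'(\rho)] + O(\log^2 k/\sqrt d) = x_j\E_\rho^2[\phi'(\rho)] + o(1),
\]
where the last step uses Assumption \ref{ass:dlarge} (which gives $\log^8 k = o(d)$, hence $\log^2 k/\sqrt d = o(1)$). The main obstacle is the cancellation between $T_1$ and $T_2$: the centering term $\E_{x'}[\phi(v^\top x')]$ is a nontrivial function of $v$, so it is not a priori obvious that it kills the $d_0 e_0$ leading order produced by $T_1$. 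What saves us is that once $x'$ is integrated out, the $\alpha'$-expansion loses its linear term by $\E[x']=0$ and its remaining terms are suppressed by $\E_{x'}[(\alpha')^2]=O(1/d)$, a bound that relies only on the sub-Gaussianity of the data via Lemma \ref{lemma:opnormcovariance}.
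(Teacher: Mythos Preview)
Your argument is correct and reaches the same $O(\log^2 k/\sqrt d)$ error as the paper, but by a genuinely different route. The paper never invokes Mehler's formula: instead it replaces $z$ by the vector $z'$ on the $\sqrt d$-sphere orthogonal to $x$, so that $(v^\top x, v^\top z')$ become \emph{independent} standard Gaussians; the expectations then factorize exactly, and the Lipschitz assumptions on $\phi',\phi''$ control the replacement error through $\|z-z'\|_2$, which is sub-Gaussian via $x^\top z$. Your approach keeps the original correlated pair $(v^\top x, v^\top z)$ and expands directly in powers of $\alpha=x^\top z/d$ via the Hermite/Mehler identity, extracting the $\alpha^0$ term as the main contribution. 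For the centering piece $T_2$ you then re-run Mehler with the random correlation $\alpha'=x'^\top z/d$ and kill the linear term by $\E[x']=0$ and the higher ones by $\E_{x'}[(\alpha')^2]=O(1/d)$; the paper achieves the same cancellation ``for free'' once independence is in place, since both $\E_v[\phi''(\mathcal Z_1')\phi(\mathcal X_1)]$ and $\E_{vx}[\phi''(\mathcal Z_1')\phi(\mathcal X_1)]$ equal $\E_\rho[\phi'']\E_\rho[\phi]$. Your method is more algebraic and arguably cleaner (no geometric detour), while the paper's is more elementary (no Hermite theory needed). One tiny imprecision: you write $|T_1-T_2|=O(|z_j||\alpha|)$, but the honest bound is $O(|z_j|(|\alpha|+1/d))$; this is harmless for the final conclusion since $1/d=o(\log k/\sqrt d)$ on the conditioned event.
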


\begin{proof}
    Let's define the vector $z'$ as follows
    \begin{equation}
        z' = \frac{ \sqrt d \left( I - \frac{xx^\top}{d} \right) z }{\norm{ \left( I - \frac{xx^\top}{d} \right) z}_2}.
    \end{equation}

    Notice that, by construction, $x^\top z' = 0$, and $\norm{z'}_2 = \sqrt d$. Also, consider a vector $y$ orthogonal to both $z$ and $x$. Then, a fast computation returns $y^\top z' = 0$. This means that $z'$ is the vector on the $\sqrt d$-sphere, lying on the same plane of $z$ and $x$, orthogonal to $x$. Thus, we can easily compute %\marco{non ti manca $\frac{ \sqrt d }{\norm{ \left( I - \frac{xx^\top}{d} \right) z}_2}$ sotto?}
    \begin{equation}
        \frac{\left| z^\top z' \right|}{d}= \sqrt{1 - \left( \frac{x^\top z}{d} \right)^2} \geq 1 - \left( \frac{x^\top z}{d} \right)^2,
    \end{equation}
    where the last inequality derives from $\sqrt{1 - a} \geq 1 - a$ for $a \in [0, 1]$. Then,
    \begin{equation}
        \norm{z - z'}_2^2 = \norm{z}_2^2 + \norm{z}_2^2 - 2 z^\top z' \leq 2d \left(1 - \left( 1 - \left( \frac{x^\top z}{d} \right)^2\right)\right) = 2 \frac{\left( x^\top z\right)^2}{d}.
    \end{equation}

    As $x$ and $z$ are both sub-Gaussian, mean-0 vectors, with $\ell_2$ norm equal to $\sqrt d$, we have that
    \begin{equation}\label{eq:normissubG}
        \P \left( \norm{z - z'}_2 > t \right) \leq \P \left( | x^\top z | > \sqrt d t / \sqrt 2 \right) < \exp (- c t^2),
    \end{equation}
    where $c$ is an absolute constant. Here the probability is referred to the space of $x$, for a fixed $z$. Thus, $\norm{z - z'}_2$ is sub-Gaussian.

    Let's now introduce the shorthands $v := V_{1j}$ and $\mathcal Z_1' = v^\top z'$. We have
    \begin{equation}\label{eq:exp1g7}
        \left|  \E_v \left[ \phi''(\mathcal Z_1) \phi(\mathcal X_1) \right] - \E_v \left[ \phi''(\mathcal Z'_1) \phi (\mathcal X_1)\right]   \right| \leq L_2 \E_v \left[ | v^\top (z-z')| | \phi (\mathcal X_1)| \right].
    \end{equation}
    Since $\phi$ is Lipschitz, $\phi (\mathcal X_1)$ is sub-Gaussian with respect to the probability space of $v$. Since $z$ and $z'$ are independent from $v$, we deduce that
    \begin{equation}
        \subGnorm{v^\top (z-z')} \leq \norm{z - z'}_2 \frac{C}{\sqrt d} = \bigO{\frac{\log k}{\sqrt d}},
    \end{equation}
    with probability at least $1 - \exp (- c \log^2 k)$ over $x$. If we condition on such event, this implies that the expectation in \eqref{eq:exp1g7} can be bounded as
    \begin{equation}\label{eq:endexp1g7}
        \E_v \left[ | v^\top (z-z')| | \phi (\mathcal X_1)| \right] = \bigO{\frac{\log k}{\sqrt d}}.
    \end{equation}
    This is true since we are computing the first moment of a sub-exponential random variable.

    A similar computation gives
    \begin{equation}\label{eq:exp2g7}
    \begin{aligned}
        & \left|  \E_v \left[ \phi''(\mathcal Z_1) \E_x[\phi(\mathcal X_1)] \right] - \E_{vx}\left[ \phi''(\mathcal Z'_1) \phi (\mathcal X_1)\right]  \right| \\
        & \qquad \leq L_2 \E_x \E_v \left[ | v^\top (z-z')| | \phi (\mathcal X_1)| \right] \leq C \frac{\E_x \left[ \norm{z - z'}_2\right]}{\sqrt d} = \bigO{\frac{1}{\sqrt d}},
    \end{aligned}
    \end{equation}
    where the last step is a direct consequence of \eqref{eq:normissubG}.
    Notice that this, together with the previous argument between \eqref{eq:exp1g7} and \eqref{eq:endexp1g7}, gives
    \begin{equation}\label{eq:exp1totalg7}
         \left|  \E_v \left[ \phi''(\mathcal Z_1) \tilde \phi(\mathcal X_1) \right] - \left(  \E_{v} \left[ \phi''(\mathcal Z'_1) \phi (\mathcal X_1)\right] - \E_{vx} \left[ \phi''(\mathcal Z'_1) \phi (\mathcal X_1)\right] \right) \right| = \bigO{\frac{\log k}{d}}.
    \end{equation}

    With an equivalent argument of the one between \eqref{eq:exp1g7} and \eqref{eq:endexp1g7}, we can also show that
    \begin{equation}\label{eq:exp3g7}
        \left|  \E_v \left[ \phi'(\mathcal Z_1) \phi'(\mathcal X_1) \right] - \E_v \left[ \phi'(\mathcal Z'_1) \phi' (\mathcal X_1)\right]  \right| = \bigO{\frac{\log k}{\sqrt{d}}},
    \end{equation}
    with probability at least $1 - \exp (- c \log^2 k)$ over $x$.

    At this point, since $x^\top z' = 0$, we have have that (in the probability space of $V$), $\mathcal X_1$ and $\mathcal Z'_1$ can be treated as independent standard Gaussian random variables $\rho_1$ and $\rho_2$. We therefore have
    \begin{equation}\label{eq:gaussexp1}
        \E_v \left[ \phi''(\mathcal Z'_1) \phi (\mathcal X_1)\right] = \E_{\rho_1 \rho_2} \left[ \phi''(\rho_1) \phi (\rho_2)\right] = \E_{\rho_1}  \left[ \phi''(\rho_1)\right] \E_{\rho_2}  \left[ \phi(\rho_2)\right],
    \end{equation}
    \begin{equation}\label{eq:gaussexp2}
        \E_{vx} \left[ \phi''(\mathcal Z'_1) \phi (\mathcal X_1)\right] = \E_x \left[ \E_{\rho_1 \rho_2} \left[ \phi''(\rho_1) \phi (\rho_2)\right] \right] = \E_{\rho_1}  \left[ \phi''(\rho_1)\right] \E_{\rho_2}  \left[ \phi(\rho_2)\right],
    \end{equation}
    \begin{equation}\label{eq:gaussexp3}
        \E_v \left[ \phi'(\mathcal Z'_1) \phi' (\mathcal X_1) \right] = \E_{\rho_1 \rho_2} \left[ \phi'(\rho_1) \phi' (\rho_2)\right] = \E_{\rho_1}^2 \left[ \phi'(\rho_1) \right].
    \end{equation}

    Now, merging \eqref{eq:exp1totalg7} with \eqref{eq:gaussexp1} and \eqref{eq:gaussexp2}, we get
    \begin{equation} 
        \left|  \E_v \left[ \phi''(\mathcal Z_1) \tilde \phi(\mathcal X_1) \right] - ( \E_{\rho_1}  \left[ \phi''(\rho_1)\right] \E_{\rho_2}  \left[ \phi(\rho_2)\right] -  \E_{\rho_1}  \left[ \phi''(\rho_1)\right] \E_{\rho_2}  \left[ \phi(\rho_2)\right]) \right| = \left|  \E_v \left[ \phi''(\mathcal Z_1) \phi(\mathcal X_1) \right] \right| = \bigO{\frac{\log k}{\sqrt{d}}}.
    \end{equation}
    While merging \eqref{eq:exp3g7} with \eqref{eq:gaussexp3} returns
    \begin{equation}
        \left|  \E_v \left[ \phi'(\mathcal Z_1) \phi'(\mathcal X_1) \right] -  \E_{\rho_1}^2 \left[ \phi'(\rho_1) \right] \right| = \bigO{\frac{\log k}{\sqrt{d}}}.
    \end{equation}
    The previous statements hold with probability at least $1 - \exp (- c_1 \log^2 k)$ over $x$.

    Thus, conditioning on $|x_j| = \bigO{\log k}$ and $|z_j| = \bigO{\log k}$, we get
    \begin{equation}
    \left| \E_V [B_{j1}] - x_j \E_{\rho}^2 [\phi' (\rho)] \right| = \bigO{\frac{\log^2 k}{\sqrt{d}}} = o(1),
    \end{equation}
    with probability at least $1 - \exp (- c_2 \log^2 k)$ over $x$ and $z$.
    This readily gives the thesis.
\end{proof}

At this point, we are ready to prove Theorem \ref{thm:int}.

%\subsubsection{Proof of Theorem \ref{thm:int}}\label{proof:int}
\begin{proof}[Proof of Theorem \ref{thm:int}]
    Let's focus on $\mathcal I^2_{x_1} = \sum_{j = 1}^d I_j ^2$, where with a slight abuse of notation we now refer to $I_j$ as the quantity defined in \eqref{eq:Ij}, with the variable $x_1$ instead of $x$. All the results from the previous lemmas hold as $x_1 \sim P_X$.
    We have, by triangle inequality,
    \begin{equation}
    \begin{aligned}
        &\left| I_j - x_j \E_{\rho}^2 [\phi' (\rho)] \frac{k}{d} \right| \leq \left| I_j - \sum_{l = 1}^k V_{lj}^2 B_{-jl}\right| + \left| \sum_{l = 1}^k V_{lj}^2 B_{-jl} - \frac{1}{d} \sum_{l= 1}^k B_{-jl} \right| \\
        &\qquad + \frac{1}{d} \left|\sum_{l= 1}^k B_{-jl} - \sum_{l = 1}^k B_{jl} \right| + \frac{1}{d} \left| \sum_{l = 1}^k B_{jl} - k \, \E_V [B_{j1}] \right| + \frac{k}{d} \left| \E_V [B_{j1}] - x_j \E_{\rho}^2 [\phi' (\rho)] \right| \\
        & = o\left( \frac{k}{d} \right) + o\left( \frac{k}{d} \right) + \frac{1}{d} o \left( k \right) + \frac{1}{d} o\left( k \right) + \frac{k}{d} o\left( 1 \right) = o\left( \frac{k}{d} \right),
    \end{aligned}
    \end{equation}
    where the five terms in the third line are justified respectively by Lemmas \ref{lemma:Gapprox1}, \ref{lemma:Gapprox2}, \ref{lemma:Gapprox3}, \ref{lemma:Gapprox4} and \ref{lemma:Gapprox5}, and jointly hold with probability at least $1 - \exp (- c_2 \log^2 k)$ over $x_1$ and $z$ and $V$.
    
    This holds for every $1 \leq j \leq d$ with probability at least $1 - d \exp (- c_2 \log^2 k) \geq 1 - \exp (- c_3 \log^2 k)$. Thus, we can write
    \begin{equation}
        I_{x_1} \geq \sqrt{\sum_{j = 1}^d \left( |x_j| \E_{\rho}^2 [\phi' (\rho)] \frac{k}{d} - o\left( \frac{k}{d} \right) \right)^2} \geq \sqrt{ \E_{\rho}^4 [\phi' (\rho)] \frac{k^2}{d^2} \sum_{j = 1}^d x_j^2  - o\left( \frac{k^2}{d} \right)  - o \left(  \frac{k^2 \sum_j |x_j|}{d^2}\right)}.
    \end{equation}
    By Cauchy-Schwartz, we have that $\sum_j |x_j| \leq \sqrt{d \sum_j x_j^2} = d$. If $\E_{\rho} [\phi' (\rho)] \neq 0$, we have
    \begin{equation}
        I_{x_1} \geq \sqrt{ \E_{\rho}^4 [\phi' (\rho)] \frac{k^2}{d}  - o\left( \frac{k^2}{d} \right)} = \E_{\rho}^2 [\phi' (\rho)] \frac{k}{\sqrt {d}} \sqrt{1 - o(1)} = \E_{\rho}^2 [\phi' (\rho)] \frac{k}{\sqrt {d}} - o\left( \frac{k}{\sqrt {d}} \right).
    \end{equation}
    Following the same strategy, we can also prove that
    \begin{equation}
        I_{x_1} \leq  \E_{\rho}^2 [\phi' (\rho)] \frac{k}{\sqrt {d}} + o\left( \frac{k}{\sqrt {d}} \right),
    \end{equation}
    which finally gives
    \begin{equation}
        \left| I_{x_1} -  \E_{\rho}^2 [\phi' (\rho)] \frac{k}{\sqrt {d}} \right| = o\left( \frac{k}{\sqrt {d}} \right).
    \end{equation}
    Notice that we easily retrieve the same result also in the case $\E_{\rho} [\phi' (\rho)] = 0$.

    Performing now a union bound over all the data, and recalling that $\norm{\Irf}_F = \sqrt{\sum_{i = 1}^N I^2_{x_i}} \geq \sqrt N \min_i I_{x_i}$ we have
    \begin{equation}
        \norm{\Irf}_F \geq \E_{\rho}^2 [\phi' (\rho)] \frac{k \sqrt N}{\sqrt d} - o \left( \frac{k \sqrt N}{\sqrt d} \right),
    \end{equation}
    with probability at least $1 - N \exp (- c_3 \log^2 k) \geq 1 - \exp (- c_4 \log^2 k)$ over $X$ and $z$ and $V$. The same argument proves also the upper bound, and the thesis readily follows.
\end{proof}

%\subsection{Proofs of Theorems \ref{thm:noiselowerbound}, \ref{thm:rfnorobust}.}

\subsection{Proof of Theorem \ref{thm:rfnorobust}}\label{proof:rfnorobust}

First, we state and prove an auxiliary result upper bounding $\opnorm{A}$.

\begin{lemma}\label{lemma:opnormA}
    Let $A = \nabla_z \phi(V z)^\top \phi(X V^\top)^\top K^{-1}$. Then, we have
    \begin{equation}
        \opnorm{A} = \bigO{\frac{1}{\sqrt{d}}},
    \end{equation}
    with probability at least $1 - \exp(-c \log^2 N) - \exp(-c d)$ over $V$ and $X$, where $c$ is an absolute constant.
\end{lemma}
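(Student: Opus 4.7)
The plan is to factor $A$ as a product of two matrices whose operator norms are controlled by ingredients already established. Specifically, when $K$ is invertible (which holds on the high-probability event of Lemma \ref{lemma:evminfeatures}), I would write
\begin{equation*}
    A = \nabla_z \phi(Vz)^\top \, \Phi^+, \qquad \Phi^+ := \Phi^\top K^{-1},
\end{equation*}
so that $\Phi^+$ is the Moore--Penrose pseudoinverse of $\Phi$ and satisfies $\opnorm{\Phi^+} = \lambda_{\min}(K)^{-1/2}$. Submultiplicativity then gives
\begin{equation*}
    \opnorm{A} \leq \opnorm{\nabla_z \phi(Vz)^\top} \cdot \opnorm{\Phi^+}.
\end{equation*}

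For the first factor, since $\nabla_z \phi(Vz)^\top = V^\top \mathrm{diag}(\phi'(Vz))$ and $\phi$ is $L$-Lipschitz, we have $\opnorm{\nabla_z \phi(Vz)^\top} \leq L \opnorm{V}$. I would then invoke the standard Gaussian concentration bound on $\opnorm{V}$ already used in the paper (the equation referenced as \eqref{eq:opnV}, essentially Theorem 4.4.5 of Vershynin applied to the $k \times d$ Gaussian matrix $\sqrt{d}\,V$), which yields $\opnorm{V} = \bigO{\sqrt{k/d}}$ with probability at least $1 - \exp(-cd)$, using $k \gg d$ from Assumption \ref{ass:overparam}.

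For the second factor, Lemma \ref{lemma:evminfeatures} already gives $\evmin{K} = \Omega(k)$ with probability at least $1 - \exp(-c\log^2 N)$, which translates to $\opnorm{\Phi^+} = \bigO{1/\sqrt{k}}$. A union bound over the two events and multiplication of the bounds yields
\begin{equation*}
    \opnorm{A} = \bigO{\sqrt{k/d}} \cdot \bigO{1/\sqrt{k}} = \bigO{1/\sqrt{d}},
\end{equation*}
which is the claim. There is no real obstacle here: the lemma is essentially a corollary of Lemma \ref{lemma:evminfeatures} combined with the trivial observation that writing $\Phi^\top K^{-1}$ as a pseudoinverse removes the coupling between $\Phi$ and $K^{-1}$ that would otherwise spoil the bound obtained from the naive split $\opnorm{\nabla_z \phi(Vz)^\top \Phi^\top}\,\opnorm{K^{-1}}$.
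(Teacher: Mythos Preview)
Your proposal is correct and follows essentially the same approach as the paper: write $\Phi^\top K^{-1}$ as the pseudoinverse $\Phi^+$ with $\opnorm{\Phi^+}=\evmin{K}^{-1/2}$, bound $\opnorm{\nabla_z\phi(Vz)^\top}\le L\opnorm{V}$ via the Lipschitz property, and conclude using $\opnorm{V}=\bigO{\sqrt{k/d}}$ and Lemma~\ref{lemma:evminfeatures}. The paper merely splits the first factor further as $\opnorm{V}\,\opnorm{\mathrm{diag}(\phi'(Vz))}$, which is a cosmetic difference.
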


\begin{proof}
Recalling that $\nabla_z \phi(V z)^\top = V^\top \text{diag} \left( \phi' (Vz)\right)$, we have
\begin{equation}
\begin{aligned}
    \opnorm{A} &= \opnorm{\nabla_z \phi(V z)^\top \phi(X V^\top)^\top K^{-1}} \\
    &= \opnorm{V^\top \text{diag} \left( \phi' (Vz)\right) \phi(X V^\top)^\top K^{-1}} \\
    & \leq \opnorm{V} \opnorm{\text{diag} \left( \phi' (Vz)\right)} \opnorm{\Phi^+} \\
    & \leq \opnorm{V} L \, \evmin{K}^{-1/2}.
\end{aligned}
\end{equation}
where in the last line we use that $\phi$ is an $L$-Lipschitz function and that $K^{-1} = (\Phi^+)^\top (\Phi^+)$.
By Theorem 4.4.5 of \cite{vershynin2018high}, we have (as $d = o(k)$ by Assumption \ref{ass:overparam}) that
\begin{equation}\label{eq:opnV}
    \opnorm{V} = \bigO{1 + \sqrt{\frac{k}{d}}} = \bigO{\sqrt{\frac{k}{d}}},
\end{equation}
with probability at least $1 - \exp(-c d)$ over $V$. Lemma \ref{lemma:evminfeatures} readily gives $\evmin{K} = \Omega(k)$, with probability at least $1 - \exp(-c \log^2 N)$ over $V$ and $X$. Taking a union bound on these two events gives the thesis.
\end{proof}

\begin{proof}[Proof of Theorem \ref{thm:rfnorobust}]
    By Theorem \ref{thm:int}, we have $\norm{\Irf}_F \geq C_1 \frac{k \sqrt N}{\sqrt d}$, with probability at least $1 - \exp (- c \log^2 k)$, which implies, by Theorem \ref{thm:Az},
    \begin{equation}
        \norm{A(z)}_F \geq C_2 \frac{d}{kN} \frac{k \sqrt N}{\sqrt d} - C \sqrt{N + d}/d = C_2 \frac{\sqrt d}{\sqrt N} - C \sqrt{N + d}/d \geq C_3 \frac{\sqrt d}{\sqrt N},
    \end{equation}
where the last inequality follows from Assumption \ref{ass:dlarge}, and the first holds with probability at least $1 - \exp(-c \log^2 N) - k \exp(-c d)$.
    This also means that
    \begin{equation}
        \mathcal S(z) \geq C_4 \sqrt d \norm{A(z)}_F
    \end{equation}
    holds with probability at least $1 - \exp \left( -c d^2 / N \right)$, because of Theorem \ref{thm:noiselowerbound} and Lemma \ref{lemma:opnormA}.
    Thus, we can conclude that
    \begin{equation}
        \mathcal S(z) \geq C_4  \norm{z} \norm{A(z)}_F \geq C_5 \frac{d}{\sqrt N} \geq C_6 \sqrt[6]{N},
    \end{equation}
    where the last inequality is a consequence of Assumption \ref{ass:dlarge}.
\end{proof}

\section{Proofs for NTK Regression}

In this section, we will indicate with $X \in \R^{N \times d}$ the data matrix, such that its rows are sampled independently from $P_X$ (see Assumption \ref{ass:datadist}). We will indicate with $W \in \R^{k \times d}$ the first half of the weight matrix at initialization, such that $W_{ij} \distas{}_{\rm i.i.d.}\mathcal{N}(0, 1/d)$. The feature vector will be therefore be (see \eqref{eq:ntkfeaturemap})
\begin{equation}
    \fntk(x) = \left[ x \otimes \phi'(W x), - x \otimes \phi'(W x) \right]
\end{equation}
where $\phi$ is the activation function, applied component-wise to the pre-activations $Wx$.

Notice that, for compactness, we dropped the subscripts \enquote{NTK} from these quantities, as this section will only treat the proofs related to Section \ref{sec:ntk}. Always for the sake of compactness, we will not re-introduce such quantities in the statements or the proofs of the following Lemmas.

\begin{lemma}\label{lemma:evminntk}
We have that
\begin{equation}
    \evmin{K} = \Omega(kd),
\end{equation}
with probability at least $1 - N e^{-c \log^2 k} - e^{-c \log^2 N}$ over $X$ and $W$.
\end{lemma}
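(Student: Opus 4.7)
The plan is to follow the strategy of \cite{bombari2022memorization}: decompose $K$ as a sum of $k$ independent PSD contributions over neurons, lower-bound $\lambda_{\min}(\E_W[K])$ via the Hermite expansion of $\phi'$, and then control the fluctuation $\opnorm{K-\E_W[K]}$ by matrix concentration.

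For the decomposition, using $\fntk(x)=[x\otimes \phi'(Wx),\,-x\otimes \phi'(Wx)]$ and expanding the inner product, one gets
\[
K \;=\; 2\sum_{\ell=1}^k D_\ell\, X X^\top D_\ell, \qquad D_\ell := \diag\!\bigl(\phi'(X W_{\ell :}^\top)\bigr),
\]
a sum of $k$ i.i.d.\ PSD matrices in the probability space of $W$ (conditional on $X$). Expanding $\phi'(z)=\sum_{n\geq 0}c_n H_n(z)$ in normalized Hermite polynomials, the evenness of $\phi$ forces $\phi'$ to be odd, hence $c_{2m}=0$ for all $m$, and the identity $\E_v[H_m(v^\top x_i)H_n(v^\top x_j)]=\delta_{mn}(x_i^\top x_j/d)^n$ for $v\sim \mathcal N(0, I/d)$ gives
\[
\E_W[K] \;=\; 2k \sum_{n\text{ odd}} c_n^2\,\frac{(X X^\top)^{\circ(n+1)}}{d^n}.
\]

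For the lower bound, let $n_0$ be the smallest odd index with $c_{n_0}\neq 0$, which exists since $\phi$ is non-linear and even. Every $(XX^\top)^{\circ(n+1)}/d^n$ is PSD, so $\E_W[K]\succeq 2k c_{n_0}^2 (XX^\top)^{\circ(n_0+1)}/d^{n_0}$. If $n_0=1$, Lemma~\ref{lemma:nequal2} (with $\alpha=1$) applied to $X^{*2}$ yields $\lambda_{\min}((XX^\top)^{\circ 2})=\Omega(d^2)$, giving $\lambda_{\min}(\E_W[K])=\Omega(kd)$. If $n_0\geq 3$, a Gershgorin argument on $(XX^\top)^{\circ(n_0+1)}/d^{n_0}$ suffices: its diagonal equals $d$, and its off-diagonals are bounded by $|x_i^\top x_j|^{n_0+1}/d^{n_0}$, which by sub-Gaussianity of $x_i^\top x_j$ (using $\E[x]=0$ and independence across samples) is $\bigO{\log^{n_0+1}(N)/d^{(n_0-1)/2}}$ with high probability. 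The resulting requirement $N\log^{n_0+1}(N)=o(d^{(n_0+1)/2})$ follows from $N\log^8 N=o(kd)$ combined with $k=\bigO{d}$, so again $\lambda_{\min}(\E_W[K])=\Omega(kd)$.

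To bound the fluctuation, I would mirror the truncation-plus-matrix-Chernoff argument used in the proof of Lemma~\ref{lemma:evminfeatures}. Conditioning on the high-probability events $\max_i|\phi'(W_{\ell :} x_i)|=\bigO{\log N}$ (since $W_{\ell :} x_i \sim \mathcal N(0,1)$ and $\phi'$ is Lipschitz) and $\opnorm{X}^2=\bigO{N+d}$, each summand satisfies $\opnorm{D_\ell X X^\top D_\ell}\leq R := C\log^2(N)(N+d)$ almost surely on this event. Matrix Chernoff then yields
\[
\Pr\!\Bigl(\lambda_{\min}(K)\le \tfrac12\lambda_{\min}(\E_W[K])\Bigr) \;\leq\; N\exp\!\Bigl(-c\,\lambda_{\min}(\E_W[K])/R\Bigr),
\]
which is at most $Ne^{-c\log^2 k}+e^{-c\log^2 N}$ after plugging in $\lambda_{\min}(\E_W[K])=\Omega(kd)$ and using $N\log^8 N=o(kd)$ together with $k=\bigO{d}$. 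Combining with the expectation bound then gives $\lambda_{\min}(K)=\Omega(kd)$.

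The hardest step will be the fluctuation control: because $\phi'$ is merely Lipschitz (and hence unbounded), the per-neuron summand has no deterministic spectral bound, forcing a truncation argument as in Lemma~\ref{lemma:evminfeatures}, and one must show that the truncation bias in expectation is negligible compared to $\Omega(kd)$. Matching precisely the stated probability $Ne^{-c\log^2 k}+e^{-c\log^2 N}$ further requires calibrating the truncation radius against the polylogarithmic factors that appear in the matrix Chernoff tail.
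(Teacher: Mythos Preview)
Your proposal is correct and essentially reconstructs the argument behind Theorem~3.1 of \cite{bombari2022memorization}. The paper's own proof does not carry out any of these steps: it simply cites that theorem and verifies that the hypotheses here (Assumption~\ref{ass:datadist} on the data, the pyramidal condition $k=\bigO{d}$, the overparameterization $N\log^8 N=o(kd)$, and $\phi'$ Lipschitz in place of $\phi$ Lipschitz, since only $\phi'$ appears in the two-layer NTK) fit the cited result. So you have supplied strictly more detail than the paper does. Two small points on your reconstruction: Lemma~\ref{lemma:nequal2} lives in the RF section and its proof explicitly invokes Assumption~\ref{ass:dlarge}, but the only scaling it actually uses is $N\log^4 N=o(d^2)$, which does follow from $N\log^8 N=o(kd)$ together with $k=\bigO{d}$, so the lemma transfers once this is noted; and the precise probability $1-Ne^{-c\log^2 k}-e^{-c\log^2 N}$ is inherited verbatim from the cited theorem, so you need not reproduce it from your Chernoff tail (which, under the stated assumptions, yields something at least as strong).
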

\begin{proof}
The thesis follows from Theorem 3.1 of \cite{bombari2022memorization}. Notice that our assumptions on the data distribution $P_X$ are stronger, and that our initialization of the very last layer (which differs from their Gaussian initialization) does not change the result. Our assumption $k = \bigO{d}$ respects the \textit{loose pyramidal topology} condition, and we left the overparameterization assumption unchanged. A main difference is that we do not assume the activation function $\phi$ to be Lipschitz anymore. This, however, stop being a necessary assumption since we are working with a 2-layer neural network, and $\phi$ doesn't appear in the expression of Neural Tangent Kernel.
\end{proof}

\begin{lemma}\label{lemma:maxofsubGntk}
We have that
\begin{equation}
    \max_i \left|  \phi'(W x_i)^\top \phi'(Wz) \right| = \bigO{\sqrt k \log k },
\end{equation}
with probability at least $1 - \exp(-c \log^2 k)$ over $X$ and $W$.
\end{lemma}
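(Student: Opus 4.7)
Fix an index $i \in [N]$ and write
\begin{equation*}
\phi'(Wx_i)^\top \phi'(Wz) = \sum_{l=1}^{k} Y_l, \qquad Y_l := \phi'(W_{l:}x_i)\,\phi'(W_{l:}z),
\end{equation*}
where the $Y_l$ are i.i.d.\ in the randomness of $W$ (with $x_i$, $z$ fixed). Since $\|x_i\|_2 = \|z\|_2 = \sqrt d$ (the latter will be assumed of the test point; otherwise the same argument goes through with a constant-factor change), $W_{l:}x_i$ and $W_{l:}z$ are standard $\mathcal{N}(0,1)$ variables. The Lipschitzness of $\phi'$ from Assumption \ref{ass:activationfuncntk} then makes $\phi'(W_{l:}x_i)$ and $\phi'(W_{l:}z)$ each sub-Gaussian with constant norm, so $Y_l$ is sub-exponential with $\|Y_l\|_{\psi_1}=\bigO{1}$.

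The plan is to split $\sum_l Y_l$ into its mean and a fluctuation, controlling each separately. For the fluctuation, Bernstein's inequality (Theorem~2.8.1 in \cite{vershynin2018high}) gives
\begin{equation*}
    \PP_W\!\left(\left|\sum_{l=1}^k (Y_l - \E_W[Y_l])\right| > \sqrt{k}\,\log k\right) \leq 2\exp(-c\log^2 k).
\end{equation*}
For the mean, I use that $\phi$ is even so $\phi'$ is odd. Setting $\rho_i := x_i^\top z / d$, the pair $(W_{l:}x_i, W_{l:}z)$ is jointly Gaussian with correlation $\rho_i$, and its Hermite expansion gives
\begin{equation*}
    \E_W[Y_l]=\sum_{n\ge 1}c_n^2 \rho_i^n,
\end{equation*}
where $c_n$ are the Hermite coefficients of $\phi'$; since $\phi'$ is odd, only odd $n$ contribute, and since $\phi'$ is Lipschitz, $\sum_n c_n^2 = \E_{\rho\sim\mathcal N(0,1)}[\phi'(\rho)^2]=\bigO{1}$. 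Hence $|\E_W[Y_l]|\le C|\rho_i|$.

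Next, I bound $|\rho_i|$ using the randomness in $X$. Because $z$ is fixed and $x_i$ is mean-zero and Lipschitz concentrated (Assumption \ref{ass:datadist}), the linear functional $x_i^\top z/d$ is sub-Gaussian with norm $\bigO{1/\sqrt d}$, giving $|\rho_i|\le C\log k/\sqrt d$ with probability at least $1-\exp(-c\log^2 k)$ over $x_i$. Combining, $|k\,\E_W[Y_l]|\le C k\log k/\sqrt d=\bigO{\sqrt k\log k}$, where the last step uses Assumption \ref{ass:dlargentk} ($k=\bigO{d}$) to convert $k/\sqrt d$ into $\bigO{\sqrt k}$. Adding the fluctuation and mean bounds yields $|\phi'(Wx_i)^\top\phi'(Wz)|=\bigO{\sqrt k\log k}$ for the fixed $i$, with the desired probability. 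Taking a union bound over $i\in[N]$ and using $\log N=\bigO{\log k}$ (which follows from Assumptions \ref{ass:overparamntk} and \ref{ass:dlargentk}) absorbs the factor of $N$ into the $\exp(-c\log^2 k)$ failure probability.

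The main technical step is the mean bound: the symmetry ``$\phi$ even $\Rightarrow \phi'$ odd'' is what kills the $O(1)$ constant term in the Hermite series and leaves us with an $O(|\rho_i|)$ estimate. Without this, $k\E_W[Y_l]$ would be of order $k\gg\sqrt k\log k$, and the lemma would fail, which is exactly why Assumption \ref{ass:activationfuncntk} requires $\phi$ even; this is the same structural property that drives the centering in the proof of Theorem \ref{thm:ntk}.
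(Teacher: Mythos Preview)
Your proof is correct and takes a genuinely different route from the paper's. The paper proceeds geometrically: it replaces $z$ by its orthogonalization $z'$ with respect to $x_i$ (so $x_i^\top z'=0$), uses Lipschitzness of $\phi'$ to control the resulting error $\|\phi'(Wx_i)\|_2\,\|W\|_{\op}\,\|z-z'\|_2$, and then exploits that $Wx_i$ and $Wz'$ are \emph{independent} Gaussian vectors; evenness of $\phi$ (hence oddness of $\phi'$) makes $\phi'(Wx_i)$ mean-zero, so the inner product of two independent mean-zero sub-Gaussian vectors is $\bigO{\sqrt k\log k}$. You instead do a direct mean--fluctuation split on $\sum_l Y_l$: Bernstein handles the fluctuation, and Mehler's formula computes the mean exactly as $\sum_{n\ge1}c_n^2\rho_i^n$, where the crucial $c_0=0$ again comes from $\phi'$ being odd. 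Both arguments hinge on the same structural fact (evenness of $\phi$), but your approach is more analytic and gives an explicit closed form for the mean, while the paper's orthogonalization trick is more elementary in that it avoids any special-function machinery. A minor remark: the absorption of the union-bound factor $N$ into $\exp(-c\log^2 k)$ tacitly needs $\log N=o(\log^2 k)$; the paper makes the same implicit step, so this is not a discrepancy with the target statement.
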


\begin{proof}
    Let's look at the term $i = 1$, and let's call $x := x_1$ for compactness.
    As in Lemma \ref{lemma:Gapprox5}, we define $z'$ as follows
    \begin{equation}
        z' = \frac{ \sqrt d \left( I - \frac{xx^\top}{d} \right) z }{\norm{ \left( I - \frac{xx^\top}{d} \right) z}_2}.
    \end{equation}

    We have $x^\top z' = 0$, and $\norm{z'}_2 = \sqrt d$. Furthermore, by \eqref{eq:normissubG}, we also have that $\norm{z - z'}_2$ is a sub-Gaussian random variable, in the probability space of $x$. %\marco{there may have been a problem there, so double check this}
    It will also be convenient to characterize $\norm{\phi'(Wz')}_2$. We have, indicating with $w_j$ the $j$-th row of $W$,
    \begin{equation}
    \begin{aligned}
        \norm{\phi'(Wz')}_2^2 &= \sum_{j=1}^k \phi'^2(w_j^\top z') =  \sum_{j=1}^k \E_W \left[ \phi'^2(w_j^\top z') \right] + \sum_{j=1}^k \left( \phi'^2(w_j^\top z') - \E_W \left[ \phi'^2(w_j^\top z') \right] \right) \\
        &= k \E_\rho \left[ \phi'^2(\rho) \right] + \sum_{j=1}^k \left( \phi'^2(\rho_j) - \E_\rho \left[ \phi'^2(\rho) \right] \right),
    \end{aligned}
    \end{equation}
    where the $\rho$ and $\rho_j$'s indicate independent standard Gaussian random variables. Since $\phi'$ is Lipschitz and non-zero, we have that $\E_\rho \left[ \phi'^2(\rho) \right] = \Theta(1)$. Furthermore, by Bernstein inequality (cf. Theorem 2.8.1 in \cite{vershynin2018high}), we have $\sum_{j=1}^k \left( \phi'^2(\rho_j) - \E_\rho \left[ \phi'^2(\rho) \right] \right) = \bigO{k}$ with probability at least $1 - 2 \exp (-c k)$. This means that, with this probability (which is over $W$), we have
    \begin{equation}\label{eq:normonz}
        \norm{\phi'(Wz')}_2 \leq C \sqrt k,
    \end{equation}
    where $C$ is a numerical constant. Notice that the same result holds for $\norm{\phi'(Wx)}_2$. We also have $\opnorm{W} = \bigO{1}$, which happens with probability at least $1 - \exp(-ck)$ over $W$, because of Theorem 4.4.5 of \cite{vershynin2018high} and Assumption \ref{ass:dlargentk}.
    
    Since $\phi'$ is Lipschitz, we can write
    \begin{equation}\label{eq:upperboundlemmantk}
        \left| \phi'(W x)^\top \phi'(Wz) \right| \leq \left| \phi'(W x)^\top \phi'(Wz') \right| + L \norm{\phi'(W x)}_2 \opnorm{W} \norm{z -z'}_2.
    \end{equation}

    Let's look at the first term of the LHS. Since $x^\top z' = 0$, and $\norm{x}_2 = \norm{z'}_2 = \sqrt d$, we have
    \begin{equation}
        \phi'(W x)^\top \phi'(Wz') = \phi'(\hat \rho_a)^\top \phi'(\hat \rho_b),
    \end{equation}
    where $\hat \rho_a$ and $\hat \rho_b$ are two independent Gaussian vectors,  %\marcochecked{bad notation: $\rho_i$ was a scalar before}, 
    each of them with independent entries. As $\phi$ is an even function (see Assumption \ref{ass:activationfuncntk}), we have that $\phi'(\hat \rho_a)$ and $\phi'(\hat \rho_b)$ are independent sub-Gaussian vectors, with norm upper bounded by a numerical constant. %\marcochecked{aren't they sub-Gaussian even if $\phi$ is not even? Being even buys you zero mean and hence that the sub-Gaussian norm is constant (instead of scaling)} \simone{I think I might have used the term sub-Gaussian here and there to specifically mean \enquote{sub-Gaussian with constant norm}. I referred to this denomination in Appendix A. Still I added a comment here}. 
    Thus,
    \begin{equation}
        \P \left( \left| \phi'(W x)^\top \phi'(Wz') \right| > C \sqrt k \log k \right) < 2 \exp(-c_1 \log^2 k),
    \end{equation}
    where this holds due to \eqref{eq:normonz}.
    Let's now look at the second term of the LHS of \eqref{eq:upperboundlemmantk}. We have
    \begin{equation}
        \norm{\phi'(W x)}_2 \opnorm{W} \norm{z -z'}_2 \leq C \sqrt k C_1 \norm{z -z'}_2 = \bigO{\log k \sqrt k},
    \end{equation}
    where the inequality holds with probability at least $1 - \exp(-ck)$ over $W$, and the equality holds with probability at least $1 - 2 \exp(-c_2 \log^2 k)$ over $x$.
    Taking the intersection between the previously described high probability events, from \eqref{eq:upperboundlemmantk} we get
    \begin{equation}
        \left| \phi'(W x)^\top \phi'(Wz) \right| = \bigO{\sqrt k \log k},
    \end{equation}
    with probability at least $1 - 2 \exp(-c \log^2 k)$ over $x$ and $W$.
    Thus, performing a union bound over the different $x_i$'s, we get the thesis.
\end{proof}

\begin{lemma}\label{lemma:opnormIntk}
\begin{equation}
    \opnorm{\nabla_z \Phi(z)^\top \Phi(X)^\top} = \bigO{\log k \left( \sqrt{k} + \sqrt{N} \right) \sqrt d},
\end{equation}
with probability at least $1 - \exp(-c \log^2 k)$ over $X$, $W$ and $z$.
\end{lemma}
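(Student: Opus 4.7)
The plan is to exploit the antisymmetric structure $\Phi(x) = [F(x), -F(x)]$ with $F(x) := x \otimes \phi'(Wx)$ (see \eqref{eq:ntkfeaturemap}), which collapses the target to $\nabla_z \Phi(z)^\top \Phi(X)^\top = 2\,\nabla_z F(z)^\top F(X)^\top$. An explicit chain-rule computation shows that the $i$-th column of this $d \times N$ matrix equals
$$\bigl(\phi'(Wx_i)^\top \phi'(Wz)\bigr)\, x_i \;+\; (x_i^\top z)\, W^\top \bigl(\phi'(Wx_i) \circ \phi''(Wz)\bigr).$$
Collecting columns, the matrix decomposes as $T = T_1 + T_2 := X^\top A + W^\top D_z \Psi B$, where $A, B \in \R^{N\times N}$ are diagonal with entries $\alpha_i = \phi'(Wx_i)^\top \phi'(Wz)$ and $\beta_i = x_i^\top z$, $D_z = \diag(\phi''(Wz)) \in \R^{k\times k}$, and $\Psi \in \R^{k \times N}$ has $i$-th column $\phi'(Wx_i)$.

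For $T_1$, submultiplicativity gives $\opnorm{X^\top A} \le \opnorm{X}\,\opnorm{A}$. Lemma~\ref{lemma:matiidrows} applied to $X$ (whose rows are centered and sub-Gaussian of order one by Assumption~\ref{ass:datadist}) yields $\opnorm{X}=\bigO{\sqrt{N+d}}$, while Lemma~\ref{lemma:maxofsubGntk} provides $\opnorm{A}=\max_i|\alpha_i|=\bigO{\sqrt{k}\,\log k}$; using $k = \bigO{d}$ from Assumption~\ref{ass:dlargentk}, one checks $\sqrt{(N+d)k}\lesssim(\sqrt N+\sqrt k)\sqrt d$, as required. For $T_2$ I would further submultiply: $\opnorm{W^\top D_z \Psi B} \le \opnorm{W}\opnorm{D_z}\opnorm{\Psi}\opnorm{B}$. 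Standard Gaussian estimates (using $k \le Cd$) give $\opnorm{W}=\bigO{1}$; the Lipschitzness of $\phi'$ gives $\opnorm{D_z}\le L$; and $\opnorm{B}=\max_i|x_i^\top z|=\bigO{\sqrt d\,\log k}$ follows from the sub-Gaussian tail of $x_i^\top z$ (conditional on $x_i$, whose norm is $\sqrt d$) together with a union bound over the $N$ indices.

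The main obstacle is the bound $\opnorm{\Psi} = \bigO{\sqrt{N+k}}$ up to polylogarithmic factors. The columns of $\Psi$ are i.i.d.\ sub-Gaussian vectors of $\bigO{1}$ norm conditional on $W$ (since $\phi'$ is Lipschitz and $\opnorm{W}=\bigO{1}$), so applying Lemma~\ref{lemma:matiidrows} to the centered matrix $\Psi-\bar\mu\mathbf{1}^\top$ with $\bar\mu:=\E_X[\phi'(Wx_1)] \in \R^k$ yields $\opnorm{\Psi-\bar\mu\mathbf{1}^\top}=\bigO{\sqrt{N+k}}$. Controlling the rank-one residual $\sqrt N\,\norm{\bar\mu}_2$ is where the evenness of $\phi$ enters crucially: since $\phi'$ is odd and $w_j^\top x_1\sim\mathcal N(0,1)$ conditional on $w_j$ (using $\norm{x_1}_2=\sqrt d$ exactly), each coordinate $\bar\mu_j=\E_X[\phi'(w_j^\top x_1)]$ is centered over $w_j$, and a Hermite expansion $\E_w[\phi'(w^\top x)\phi'(w^\top x')]=\sum_{n\text{ odd}} a_n^2 (x^\top x'/d)^n$ together with the cancellation $\E_{x,x'}[x^\top x'/d]=0$ and $\E_{x,x'}[|x^\top x'/d|^n]=\bigO{d^{-n/2}}$ for $n\ge 3$ gives $\E_W[\norm{\bar\mu}_2^2]=\bigO{k/d^{3/2}}$. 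Combined with the Lipschitz concentration of the map $W\mapsto\norm{\bar\mu(W)}_2$ (which is $L\sqrt d$-Lipschitz and evaluated at a Gaussian of scale $1/\sqrt d$), this forces $\norm{\bar\mu}_2=\bigO{\mathrm{polylog}(k)}$ with probability $1-\exp(-c\log^2 k)$, so $\sqrt N\,\norm{\bar\mu}_2$ is absorbed into $(\sqrt N+\sqrt k)$ up to log factors. Summing the contributions from $T_1$ and $T_2$ and taking a union bound over the high-probability events produces the stated bound.
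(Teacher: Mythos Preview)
Your approach is the same as the paper's: the same chain-rule decomposition $T=T_1+T_2$ and the same submultiplicative estimates. For $T_1$ you match the paper exactly. For $T_2$, the paper simply invokes Lemma~\ref{lemma:matiidrows} to bound $\opnorm{\phi'(XW^\top)}$ directly; you are more careful in noticing that the rows $\phi'(Wx_i)$ (over the randomness of $X$) are not centered, and hence split off the rank-one term $\bar\mu\mathbf{1}^\top$. This is a legitimate point the paper leaves implicit.

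Two corrections. First, the sentence ``$w_j^\top x_1\sim\mathcal N(0,1)$ conditional on $w_j$'' has the conditioning backwards: it is conditional on $x_1$ (with $\norm{x_1}_2=\sqrt d$) that $w_j^\top x_1$ is standard Gaussian, and this is what gives $\E_{w_j}[\bar\mu_j]=0$; your Hermite computation thereafter is fine. Second, and more consequentially, your Lipschitz constant $L\sqrt d$ for $W\mapsto\norm{\bar\mu(W)}_2$ is too pessimistic and, as written, produces $\opnorm{\Psi}=\bigO{(\sqrt N+\sqrt k)\log k}$ and hence an overall $\log^2 k$ rather than the stated $\log k$. The sharper bound is
\[
\norm{\nabla_{w_j}\bar\mu_j}_2=\sup_{\norm{u}_2=1}\bigl|\E_X[\phi''(w_j^\top x)\,u^\top x]\bigr|\le L\sup_{\norm{u}_2=1}\E_X|u^\top x|=\bigO{1},
\]
since $u^\top x$ is centered sub-Gaussian of order one; the block-diagonal Jacobian of $W\mapsto\bar\mu$ therefore has operator norm $\bigO{1}$. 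Gaussian concentration at scale $1/\sqrt d$ now yields fluctuations of order $\log k/\sqrt d$, and combined with your moment bound $\E_W[\norm{\bar\mu}_2^2]=\bigO{k/d^{3/2}}=o(1)$ this gives $\norm{\bar\mu}_2=o(1)$ with probability $1-\exp(-c\log^2 k)$. The rank-one residual is then negligible, $\opnorm{\Psi}=\bigO{\sqrt N+\sqrt k}$, and the stated single $\log k$ factor is recovered.
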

\begin{proof}
    We have, by application of the chain rule,
    \begin{equation}
    \begin{aligned}
        \nabla_z \Phi(z)^\top \Phi(X)^\top &= 2 \, \nabla_z \Phi'(z)^\top \Phi'(X)^\top \\
        &= 2\, \nabla_z \left( z \otimes \phi'(Wz) \right)^\top \left( X * \phi' (XW^\top) \right)^\top \\
        &= 2 \left(I * \mathbf 1  \phi'^\top(Wz) + \mathbf 1 z^\top * W^\top \textup{diag} \left( \phi'' (Wz) \right)\right) \left( X * \phi' (XW^\top) \right)^\top \\
        &= 2\, X^\top \circ \mathbf 1 \left( \phi'(XW^\top) \phi'(Wz) \right)^\top + 2 \,\mathbf 1 \left( X z \right)^\top \circ W^\top \textup{diag} \left( \phi'' (Wz) \right) \phi' (XW^\top) ^\top\\
        &= 2 \, X^\top \textup{diag} \left( \phi'(XW^\top) \phi'(Wz) \right) + 2 \, W^\top \textup{diag} \left( \phi'' (Wz) \right)\phi' (XW^\top)^\top \textup{diag} \left( Xz \right).
    \end{aligned}
    \end{equation}
    Let's now condition on the event $\opnorm{W} = \bigO{1}$, which happens with probability at least $1 - \exp(-cd)$ over $W$, because of Theorem 4.4.5 of \cite{vershynin2018high} and Assumption \ref{ass:dlargentk}.
    
    Let's now look at the two terms separately. For the first, we have
    \begin{equation}
    \begin{aligned}
        \opnorm{X^\top \textup{diag} \left( \phi'(XW^\top) \phi'(Wz) \right) } &\leq \opnorm{X} \max_i \left|  \phi'(W x_i)^\top \phi'(Wz) \right|  \\
        &= \bigO{\log k \sqrt k \left( \sqrt {N} + \sqrt {d}\right)},
    \end{aligned}
    \end{equation}
    where the last equality is true because of Lemmas \ref{lemma:matiidrows} and \ref{lemma:maxofsubGntk} and holds with probability at least $1 - \exp(-c d) - \exp(-c \log^2 k)$ over $X$ and $W$.
    For the other term, we have
    \begin{equation}
    \begin{aligned}
        \opnorm{W^\top \textup{diag} \left( \phi'' (Wz) \right) \phi' (XW^\top) \textup{diag} \left( Xz \right)  } &\leq  \opnorm{W} L \opnorm{\phi' (XW^\top)}  \max_i \left|  x_i^\top z \right|\\
        &= \bigO{\log k \sqrt d \left( \sqrt {N} + \sqrt {k}\right) }.
    \end{aligned}
    \end{equation}
    The last step uses Lemma \ref{lemma:matiidrows} and the fact that we are taking the maximum over $k$ sub-Gaussian random variables such that $\subGnorm{x_i^\top z} \leq C \norm{z}_2 = C \sqrt d$, and it holds with probability at least $1 - 2 \exp(-c k) - \exp(-c \log^2 k)$ over $X$.

    Putting everything together, and using Assumption \ref{ass:dlargentk}, we get
    \begin{equation}
    \begin{aligned}
        \opnorm{\nabla_z \Phi(z)^\top \Phi(X)^\top} &= \bigO{\log k \sqrt k \left( \sqrt {N} + \sqrt {d}\right)} + \bigO{\log k \sqrt d \left( \sqrt {N} + \sqrt {k}\right) } \\
        &= \bigO{\log k \left( \sqrt{k} + \sqrt{N} \right) \sqrt d},
    \end{aligned}
    \end{equation}
    which gives the thesis.
\end{proof}

\subsection{Proof of Theorem \ref{thm:ntk}}\label{proof:ntk}
\begin{proof}[Proof of Theorem \ref{thm:ntk}]
    We have
    \begin{equation}
    \begin{aligned}
        \mathcal S(z) &\leq \sqrt d \opnorm{\nabla_z \Phi(z)^\top \Phi(X)^\top} \opnorm{K^{-1}} \norm{Y}_2\\
        &= \sqrt d \bigO{\log k \left( \sqrt{k} + \sqrt{N} \right) \sqrt d} \bigO{\frac{1}{dk}} \bigO{\sqrt N}\\
        &= \bigO{\log k \left( 1 + \sqrt{N / k} \right) \frac{\sqrt N}{\sqrt k}},
    \end{aligned}
    \end{equation}
    where the second line is justified by Lemmas \ref{lemma:evminntk} and \ref{lemma:opnormIntk}, and holds with probability at least $1 - N e^{-c \log^2 k} - e^{-c \log^2 N}$ over $X$ and $W$. The second statement is immediate.
\end{proof}

\section{Additional Experiments}\label{app:exp}
In this section, we provide additional experimental results for the activation functions $\phi(x) = \cos(x)$ (which is even) and $\phi(x) = \frac{e^x}{1 + e^x}$ (namely, softplus or smooth ReLU, which is not even). Again, our theoretical findings are supported by numerical evidence both on the RF (see Figure \ref{fig:rfnewact}) and the NTK regression model (see Figure \ref{fig:ntknewact}). %: for the even activation $\phi(x) = \cos(x)$ the sensitivity drops rapidly with the number of parameters, and achieves $\bigO{1}$ values on the synthetic dataset, with the NTK feature map.

\begin{figure}[!b]
  \begin{center}
    \includegraphics[width=0.99\textwidth]{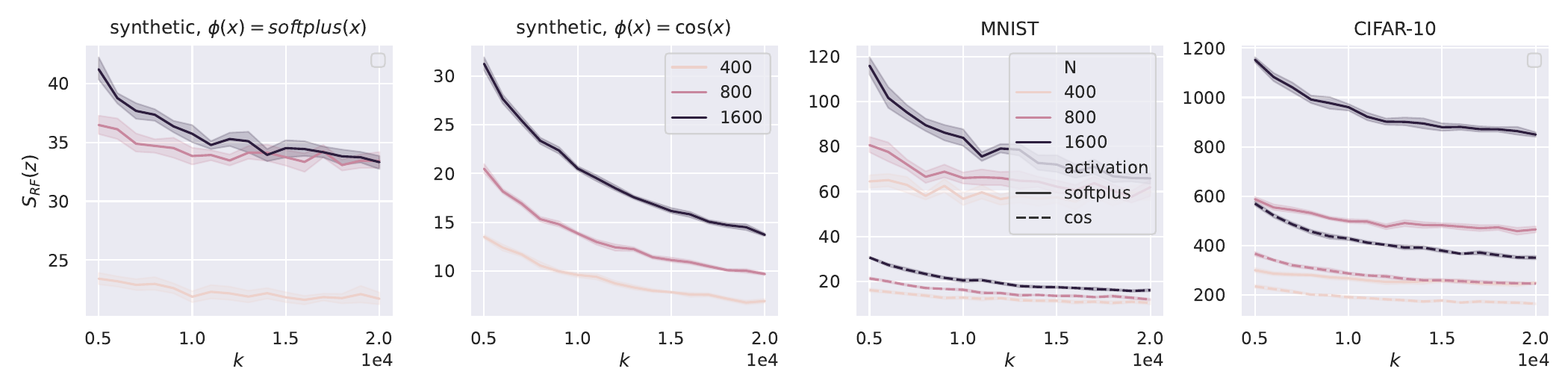}
  \end{center}
  %\vspace{-1em}
  \caption{Sensitivity as a function of the number of parameters $k$, for an RF model with synthetic data sampled from a Gaussian distribution with input dimension $d=1000$ (first and second plot), and with inputs taken from two classes of the MNIST and CIFAR-10 datasets (third and fourth plot respectively). Different curves correspond to different values of the sample size $N\in \{400, 800, 1600\}$ and to different activations ($\phi(x) = \textup{softplus}(x)$ or $\phi(x) = \cos(x)$). We plot the average over $10$ independent trials and the confidence interval at $1$ standard deviation. The values of the sensitivity are significantly larger for $\phi(x) = \textup{softplus}(x)$ than for $\phi(x) = \cos(x)$.}
  \label{fig:rfnewact}
\end{figure}

\clearpage

\begin{figure}[!t]
  \begin{center}
    \includegraphics[width=0.99\textwidth]{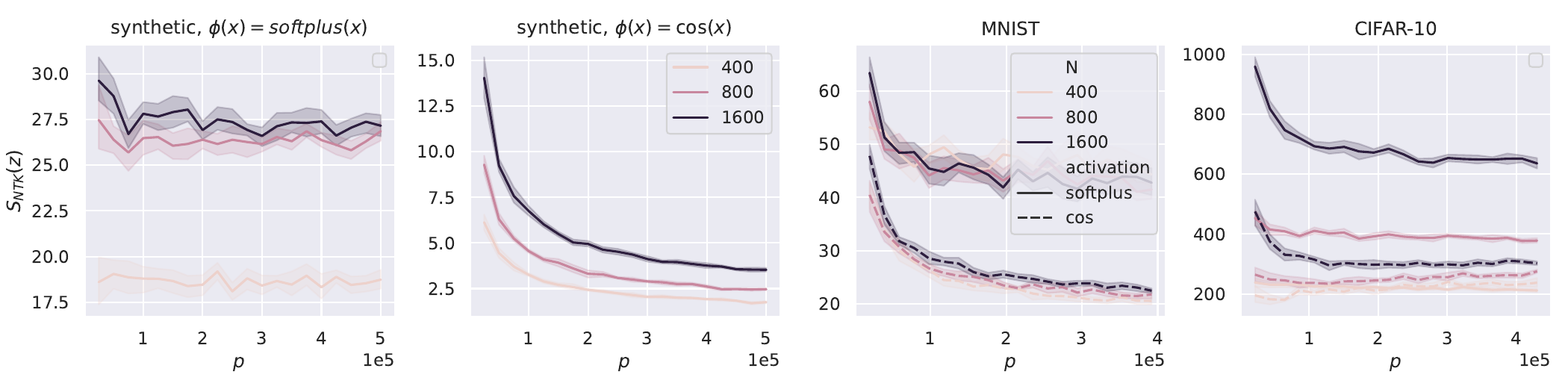}
  \end{center}
  %\vspace{-1em}
  \caption{Sensitivity as a function of the number of parameters $p$, for an NTK model with synthetic data sampled from a Gaussian distribution with input dimension $d=1000$ (first and second plot), and with inputs taken from two classes of the MNIST and CIFAR-10 datasets (third and fourth plot respectively). Different curves correspond to different values of the sample size $N\in \{400, 800, 1600\}$ and to different activations ($\phi(x) = \textup{softplus}(x)$ or $\phi(x) = \cos(x)$). We plot the average over $10$ independent trials and the confidence interval at $1$ standard deviation. The values of the sensitivity are significantly larger for $\phi(x) = \textup{softplus}(x)$ than for $\phi(x) = \cos(x)$.}
  \label{fig:ntknewact}
\end{figure}

The code to reproduce the experiments and all the plots in the paper is available at the public GitHub repository \href{https://github.com/simone-bombari/beyond-universal-robustness}{\texttt{https://github.com/simone-bombari/beyond-universal-robustness}}.

\end{document}